\def\ie{\mbox{\textit{i.e.}, }}
\def\eg{\mbox{\textit{e.g.}, }}
\def\wrt{\mbox{\textit{w.r.t. }}}
\def\mC{{\mathcal C}}
\def\mD{{\mathcal D}}
\def\mF{{\mathcal F}}
\def\mG{{\mathcal G}}
\def\mI{{\mathcal I}}
\def\mJ{{\mathcal J}}
\def\mK{{\mathcal K}}
\def\mM{{\mathcal M}}
\def\mP{{\mathcal P}}
\def\mR{{\mathcal R}}
\def\mW{{\mathcal W}}
\def\mX{{\mathcal X}}
\def\mY{{\mathcal Y}}
\DeclareMathAlphabet\mathbfcal{OMS}{cmsy}{b}{n}
\def\0{{\bf 0}}
\def\1{{\bf 1}}
\def\bx{{\bf x}}
\def\by{{\bf y}}
\def\bz{{\bf z}}
\def\mmE{{\mathbb E}}
\def\mmP{{\mathbb P}}
\def\mmQ{{\mathbb Q}}
\def\mmR{{\mathbb R}}
\def\st{{\mathrm{s.t.}}}
\def\bx{{\bf x}}
\def\by{{\bf y}}
\def\bz{{\bf z}}
\def\st{{\mathrm{s.t.}}}
\newtheorem{coll}{Corollary}
\newtheorem{deftn}{Definition}
\newtheorem{thm}{Theorem}
\newtheorem*{*thm}{Theorem}
\newtheorem{prop}{Proposition}
\newtheorem{lemma}{Lemma}
\newtheorem*{*lemma}{Lemma}
\newtheorem{remark}{Remark}
\newtheorem{ass}{Assumption}
\newtheorem{prob}{Problem}
\newenvironment*{proof}{\textbf{Proof}\quad}{\hfill $\square$\par}
\newcommand{\nipstophline}{%
	\noalign {\ifnum 0=`}\fi \hrule height 4pt
	\futurelet \reserved@a \@xhline
}
\newcommand{\nipsbottomhline}{%
	\noalign {\ifnum 0=`}\fi \hrule height 1pt
	\futurelet \reserved@a \@xhline
}
\newcommand{\yes}{\textcolor[rgb]{0,0.6,0}{\large{\checkmark}}}
\newcommand{\no}{\textcolor[rgb]{0.6,0,0}{\large{{\bf\times}}}}
\title{Multi-marginal Wasserstein GAN}
\author{
	Jiezhang Cao$^1$\thanks{Authors contributed equally.}~~, Langyuan Mo$^{1*}$, Yifan Zhang$^{1}$, Kui Jia$^{1}$,  Chunhua Shen$^3$, Mingkui Tan$^{1,2*}$\thanks{Corresponding author.}  \\
	$^1$South China University of Technology, $^2$Peng Cheng Laboratory, $^3$The University of Adelaide \\
	\{secaojiezhang, selymo, sezyifan\}@mail.scut.edu.cn \\ \{mingkuitan, kuijia\}@scut.edu.cn, chunhua.shen@adelaide.edu.au
}
\begin{document}
\maketitle

\def\kuired{\textcolor{black}}
\def\yifan{\textcolor{black}}
\def\warming{\textcolor{black}}
\def\cao{\textcolor{black}}
\def\kui{\textcolor{black}}
\def\mo{\textcolor{black}}
\def\langyuan{\textcolor{black}}
\def\jie{\textcolor{black}}

\begin{abstract}
	Multiple marginal matching problem aims at learning mappings to match a source domain to multiple target domains and it has attracted great attention in many applications, such as multi-domain image translation. However, addressing this problem has two critical challenges: (i) Measuring the multi-marginal distance among different domains is very intractable; (ii) It is very difficult to exploit cross-domain correlations to match the target domain distributions. In this paper, we propose a novel Multi-marginal Wasserstein GAN (MWGAN) to minimize Wasserstein distance among domains. Specifically, with the help of multi-marginal optimal transport theory, we develop a new adversarial objective function with inner- and inter-domain constraints to exploit cross-domain correlations. Moreover, we theoretically analyze the generalization performance of MWGAN, and empirically evaluate it on the balanced and imbalanced translation tasks. Extensive experiments on toy and real-world datasets demonstrate the effectiveness of MWGAN.

\end{abstract}

\section{Introduction}
Multiple marginal matching (\cao{M$^3$}) problem aims to map an input image (source domain) to multiple target domains (see Figure \ref{subfig:example}), and it has been applied in computer vision, 
\eg 
multi-domain image translation \cite{choi2018stargan, he2017attgan, hui2018unsupervised}. 
In practice, the unsupervised image translation \cite{liu2018ufdn} gains particular interest because of its label-free property.  
However, due to the lack of corresponding images, \kui{this task is extremely hard to learn stable mappings to match a source distribution to multiple target distributions.}
Recently, some methods~\cite{choi2018stargan, liu2018ufdn} address \cao{M$^3$} problem, which, however, face two main challenges. 

First, existing methods \cao{often neglect to} jointly optimize the multi-marginal distance among domains, which cannot guarantee the generalization performance of methods and may lead to distribution mismatching issue.
Recently, CycleGAN \cite{zhu2017unpaired} and UNIT \cite{liu2017unsupervised} repeatedly optimize every pair of two different domains separately (see Figure \ref{subfig:method_comparison}). 
In this sense, they are computationally expensive and may have poor generalization performance. 
Moreover, UFDN \cite{liu2018ufdn} and StarGAN \cite{choi2018stargan} essentially measure the distance between an input distribution and a mixture of all target distributions (see Figure \ref{subfig:method_comparison}).
As a result, they may suffer from distribution mismatching issue.
Therefore, it is necessary to explore a new method to measure and optimize the multi-marginal distance.

\cao{
	Second, it is very challenging to exploit the cross-domain correlations to match target domains. 
	Existing methods \cite{zhu2017unpaired, liu2018ufdn} only focus on the correlations between the source and target domains, since they measure the distance between two distributions (see Figure \ref{subfig:method_comparison}).
	However, these methods often ignore the correlations among target domains, and thus they are hard to fully capture information to improve the performance. 
	Moreover, when the source and target domains are significantly different, or the number of target domains is large, the translation task turns to be difficult for existing methods to exploit the cross-domain correlations.
}


\langyuan{In this paper, we seek to use multi-marginal Wasserstein distance to solve M$^3$ problem, but directly optimizing it is intractable. Therefore, we develop a new dual formulation to make it tractable and propose a novel multi-marginal Wasserstein GAN (MWGAN) by enforcing inner- and inter-domain constraints to exploit the correlations among domains.}


The contributions of this paper are summarized as follows:
\vspace{-5pt}
\begin{itemize}[leftmargin=*]\setlength{\itemsep}{0pt}
	\item 
	\cao{We propose a novel GAN method (called MWGAN) to optimize a feasible multi-marginal distance among different domains.}
	\cao{MWGAN overcomes the limitations of existing methods by alleviating the distribution mismatching issue and exploiting cross-domain correlations.}%
	\item
	\cao{We define and analyze the generalization of our proposed method for the multiple domain translation task, which is more important than existing generalization analyses \cite{galanti2018generalization, pan2018theoretical} studying only on two domains and non-trivial for multiple domains. }
	\item 
	We empirically show that MWGAN is able to solve the imbalanced image translation task well \cao{when the source and target domains are significantly different.}
	Extensive experiments on toy and real-world datasets demonstrate the effectiveness of our proposed method. 
\end{itemize}

\begin{figure}[tp]
	\centering
	\subfigure[The Edge$\rightarrow$CelebA image translation task.]{
		\label{subfig:example}
		\includegraphics[width=0.47\linewidth]{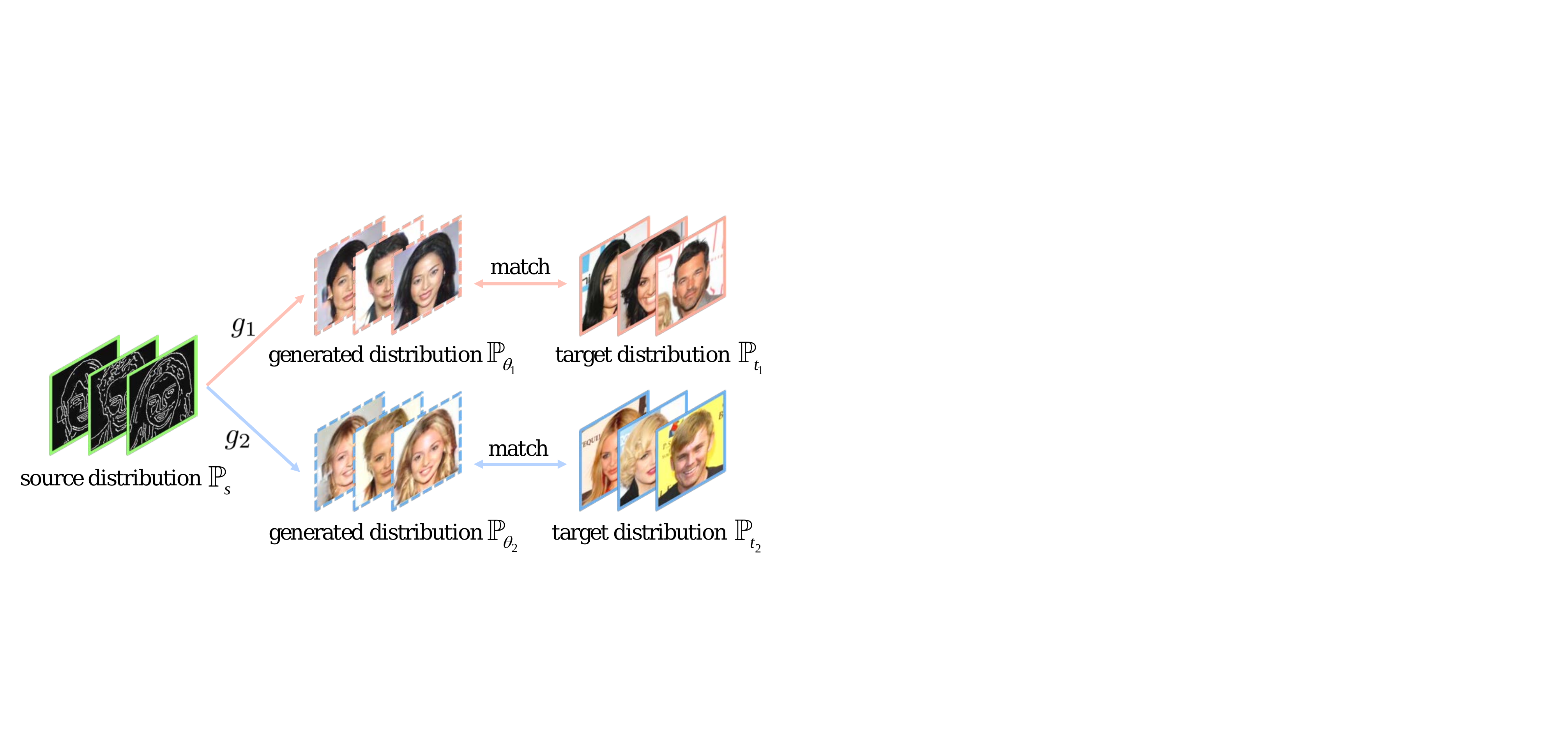}
	}
	\hspace{0.2in}
	\subfigure[Comparisons of different distribution measures.]{
		\label{subfig:method_comparison}
		\includegraphics[width=0.45\linewidth]{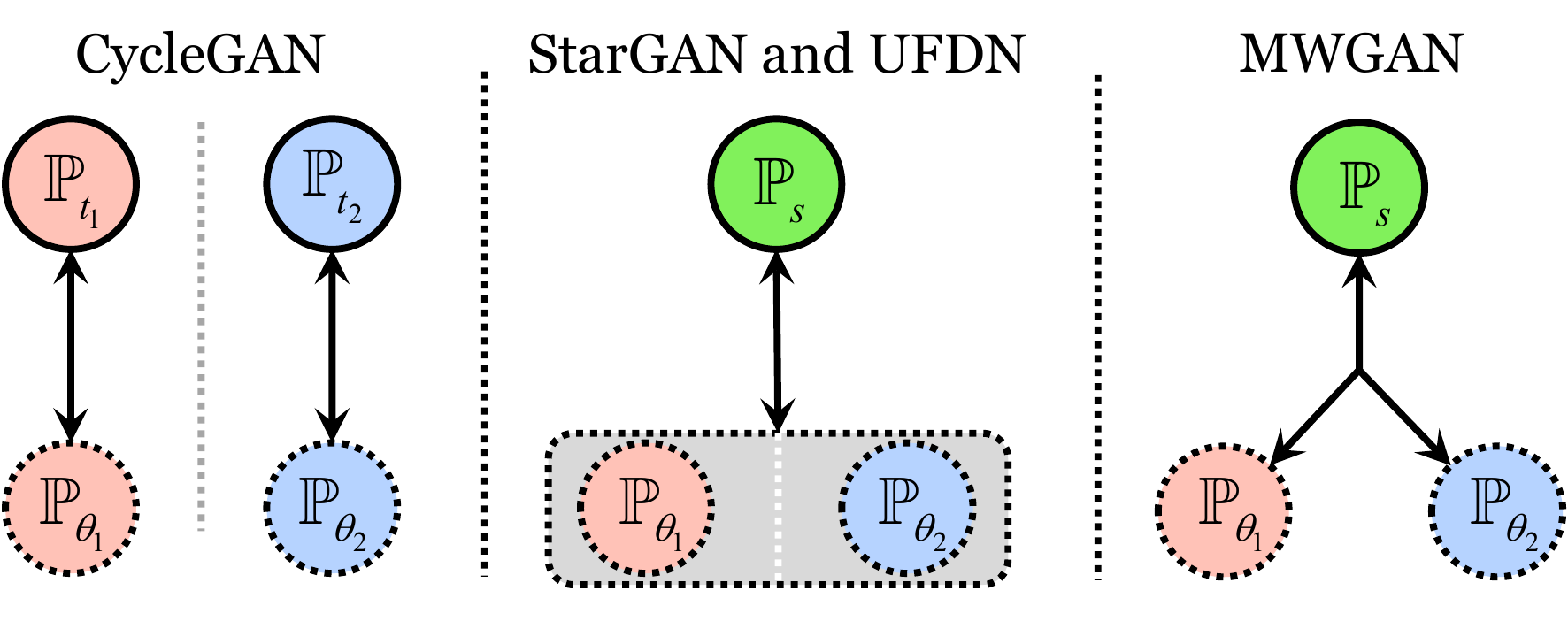}
	}
	\caption{An example of M$^3$ problem and comparisons of existing methods.
		(a) For the Edge$\rightarrow$CelebA task, we aim to learn mappings to match a source distribution (\ie Edge images) to the target distributions (\ie black and blond hair images).
		(b)
		\cao{Left: we employ CycleGAN multiple times to measure the distance between every generated distribution and its corresponding target distribution.}
		Middle: StarGAN and UFDN measure the distance between  $ \mmP_s $ and a mixed distribution of $ \mmP_{\theta_1} $ and $ \mmP_{\theta_2} $; Right: MWGAN jointly measures Wasserstein distance among $ \mmP_s $, $ \mmP_{\theta_1} $ and $ \mmP_{\theta_2} $. (Dotted circle: the generated distributions, solid circle: the real source or target distributions, double-headed arrow: distribution divergence, different colors represent different domains.)
	}
	\label{fig:comparison}
\end{figure}

\section{Related Work} \label{sec:related_work}
\textbf{Generative adversarial networks (GANs).}
Deep neural networks have theoretical and experimental explorations \cite{cao2017flatness, guo2019nat, zeng2019breaking, zeng2019graph, zhuang2018discrimination}.
In particular, GANs \cite{goodfellow2014gans} have been successfully applied in computer vision tasks, such as image generation \cite{arjovsky2017wasserstein, cao2018lccgan, Gulrajani2017gangp, guo2019auto}, image translation \cite{almahairi2018acyclegan, choi2018stargan, guo2018dual} and video prediction \cite{mathieu2015deep}. 
Specifically, 
a generator tries to produce realistic samples, while a discriminator tries to distinguish between generated data and real data.
\langyuan{Recently, some studies try to improve the quality \cite{brock2018large, chen2018attention, karras2018pggan} and diversity \cite{wang2019evolutionary} of generated images,
	and improve the mechanism of GANs \cite{Adler2018BanachGAN, farnia2018a, Kevin2017regularization, sanjabi2018on} to deal with the unstable training and mode collapse problems.} 

\textbf{Multi-domain image translation.}
M$^3$ problem can be applied in domain adaptation \cite{xie2018learning} and image translation \cite{Kazemi2018unsupervised, zhu2017toward}.
CycleGAN \cite{zhu2017unpaired}, DiscoGAN \cite{kim2017learning}, DualGAN \cite{yi2017dualgan} and UNIT \cite{liu2017unsupervised} are proposed to address two-domain image translation task. 
However, in Figure \ref{subfig:method_comparison}, these methods measure the distance between every pair of distributions multiple times, which is computationally expensive when applied to the multi-domain image translation task.  
Recently, StarGAN \cite{choi2018stargan} and AttGAN \cite{he2017attgan} use a single model to perform multi-domain image translation. 
UFDN \cite{liu2018ufdn} translates images by learning domain-invariant representation {for} cross-domains.
Essentially, the above three methods are two-domain image translation methods because they measure the distance between an input distribution and a uniform mixture of other target distributions (see Figure \ref{subfig:method_comparison}).
Therefore, these methods may suffer from distribution mismatching issue and obtain misleading feedback for updating models when the source and target domains are significantly different.
In addition, we discuss the difference between some GAN methods in Section \ref{sec:diff_gan} in supplementary materials. 

\section{Problem Definition}


\paragraph{Notation.}
We use calligraphic letters (\textit{e.g.}, $ \mX $) to denote space, capital letters (\textit{e.g.}, $ X $) to denote random variables, and bold lower case letter (\textit{e.g.}, $ \bx $) to denote the corresponding values. 
Let $ \mD {=} (\mX , \mmP) $ be the domain, $ \mmP $ or $ \mu $ be the marginal distribution over $ \mX $ and $ \mP(\mX) $ be the set of all the probability measures over $ \mX $.
For convenience, let $ \mX {=} \mmR^d $, and let $ \mI {=} \{ 0, ..., N \} $ and $ [N]{=}\{1, ..., N\} $.%

\textbf{Multiple marginal matching (M$^3$) problem.}
In this paper, M$^3$ problem aims to learn mappings to match a source domain to multiple target domains. 
For simplicity, we consider one source domain $ \mD_s{=}\{\mX, \mmP_s\} $ and $ N $ target domains $ \mD_i{=}\{\mX, \mmP_{t_i}\}, i {\in} [N] $, where $ \mmP_s $ is the source distribution, and $ \mmP_{t_i} $ is the $ i $-th real target distribution.
Let $ g_i, i {\in} [N] $ be the generative models parameterized by $ \theta_i $,  
and $ \mmP_{\theta_i} $ be the generated distribution in the $ i $-th target domain. 
In this problem, the goal is to learn multiple generative models such that each generated distribution $ \mmP_{\theta_i} $ in the $ i $-th target domain can be close to the corresponding real target distribution $ \mmP_{t_i} $ (see Figure \ref{subfig:example}). 


\textbf{Optimal transport (OT) theory.}
Recently, OT \cite{villani2008optimal} theory has attracted great attention in many applications \cite{arjovsky2017wasserstein, yan2019oversampling}.
\langyuan{
	Directly solving the primal formulation of OT \cite{santambrogio2015optimal} 
	might be intractable \cite{genevay2018learning}. 
}
To address this, we consider the dual formulation of the multi-marginal OT problem as follows. 

\begin{prob} \textbf{\emph{(Dual problem \cite{santambrogio2015optimal})}  }\label{problem:MK-D}
	Given $ N{+}1 $ marginals $ \mu_i {\in} \mP(\mX) $, potential functions $ f_i, i {\in} \mI $, and a cost function $ c(X^{(0)}, \ldots, X^{(N)}): \mmR^{d(N{+}1)} {\to} \mmR $, the dual Kantorovich problem can be defined as:
	\begin{small}
		\begin{align}
			W(\mu_0, {...}, \mu_{N}) = 
			\sup_{f_i} \sum\nolimits_{i} \int f_i \left(X^{(i)}\right) d \mu_i \left(X^{(i)}\right), \;\st \sum\nolimits_{i} f_i \left(X^{(i)}\right) {\leq} c \left(X^{(0)}, {...}, X^{(N)} \right). 
		\end{align}
	\end{small}
\end{prob}
\cao{In practice, we optimize the discrete case of Problem \ref{problem:MK-D}.
	Specifically, given samples \begin{small}{$ \{\bx_j^{(0)}\}_{j{\in}\mJ_0} $} \end{small} and \begin{small}$ \{\bx_j^{(i)}\}_{j{\in}\mJ_i} $\end{small} drawn from source domain distribution $ \mmP_s $ and generated target distributions \begin{small}$ \mmP_{\theta_i}, i {\in} [N] $\end{small}, respectively, where $ \mJ_i $ is an index set and $ n_i {=} |\mJ_i| $ is the number of samples, we have: }
\begin{prob}\textbf{\emph{(Discrete dual problem)}} \label{problem:dist_D}
	Let $ F {=} \{f_0, \ldots, f_N\} $ be the set of Kantorovich potentials, then the discrete dual problem $\hat{h} (F)$ can be defined as:
	\begin{small}
		\begin{align} \label{eqn:dist_D}
			\max_F \; \hat{h} (F) = \sum\nolimits_{i} \frac{1}{n_i} \sum\nolimits_{j \in \mJ_i} f_i\left(\bx_{j}^{(i)} \right), \quad\st \sum\nolimits_{i} f_i \left(\bx^{(i)}_{k_i} \right) \leq  c \left(\bx^{(0)}_{k_0}, \ldots, \bx^{(N)}_{k_N} \right), \forall\; k_i \in [n_i]. 
		\end{align}
	\end{small}
\end{prob}
\cao{Unfortunately, it is challenging to optimize Problem \ref{problem:dist_D} due to the intractable inequality constraints and multiple potential functions.  
	To address this, we seek to propose a new optimization method.}

\section{Multi-marginal Wasserstein GAN}

\subsection{A New Dual Formulation}\label{dualformulate}   
\jie{For two domains, WGAN \cite{arjovsky2017wasserstein} solves Problem \ref{problem:dist_D} by setting potential functions as $ f_0 {=} f $ and $ f_1 {=} {-}f $.} 
\jie{However, it is hard to extend WGAN to multiple domains. }
\jie{To address this, we propose a new dual formulation in order to optimize Problem \ref{problem:dist_D}}. 
To this end, we use a shared potential in Problem \ref{problem:dist_D}, which is supported by empirical and theoretical evidence.
In the multi-domain image translation task, the domains are often correlated, and thus share similar properties and differ only in details (see Figure \ref{subfig:example}).
The cross-domain correlations can be exploited by the shared potential function (see Section \ref{sec:one_potential} in supplementary materials).
More importantly, the optimal objectives of Problem \ref{problem:dist_D} and the following problem can be equal under some conditions (see Section \ref{sec:equiv_thm} in supplementary materials).

\begin{prob}\label{problem:newD}
	Let $ F_{\lambda} {=} \{\lambda_0 f, \ldots, \lambda_N f\} $ be Kantorovich potentials, 
	then we define dual problem as: 
	\begin{small}
		\begin{align} \label{obj:newD}
			&\max_{F_{\lambda}} \quad\hat{h} (F_{\lambda}) = \sum\nolimits_{i} \frac{\lambda_i}{n_i} \sum\nolimits_{j \in \mJ_i} f\left(\bx_{j}^{(i)} \right), \quad\,\,\, \st \sum\nolimits_{i} \lambda_i f \left(\bx^{(i)}_{k_i} \right) \leq  c \left(\bx^{(0)}_{k_0}, \ldots, \bx^{(N)}_{k_N} \right), \forall k_i {\in} [n_i].
		\end{align}
	\end{small}
\end{prob}
To further build the relationship \kuired{between Problem \ref{problem:dist_D} and Problem \ref{problem:newD},} we have the following theorem so that Problem \ref{problem:newD} can be optimized well by GAN-based methods (see Subsection \ref{subsec:mwgan}).
\begin{thm} \label{thm:solution_exist}
	Suppose the domains are connected, the cost function $ c $ is continuously differentiable and each $ \mu_i $ is absolutely continuous. 
	If $ (f_0, \ldots, f_N) $ and $ (\lambda_0f, \ldots, \lambda_{N}f ) $ are solutions to Problem \ref{problem:MK-D}, then there exist some constants $ \varepsilon_i $ for each $ i \in \mI $ such that $ \sum_{i} \varepsilon_i = 0 $ and $ f_i = \lambda_i f + \varepsilon_i $. 
\end{thm}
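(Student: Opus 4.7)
The plan is a three-step argument combining a classical uniqueness-up-to-constants result for multi-marginal Kantorovich potentials with a direct computation that pins down the sum of the constants. Under the stated hypotheses ($c \in C^1$, each $\mu_i$ absolutely continuous, and the domains connected), the dual potentials of Problem \ref{problem:MK-D} are known to be unique modulo additive constants; I would invoke this uniqueness and then extract $\sum_i \varepsilon_i = 0$ from the fact that both potential tuples realize the same dual optimum.

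\textbf{Step 1 (complementary slackness).} I would first appeal to primal--dual optimality: any optimal transport plan $\pi$ for Problem \ref{problem:MK-D} concentrates on the set $\{\sum_i f_i(x^{(i)}) = c(x^{(0)}, \ldots, x^{(N)})\}$, and this tightness of the constraint must hold for both given optima. Thus on $\mathrm{supp}(\pi)$ we have $\sum_i f_i(x^{(i)}) = c(x^{(0)}, \ldots, x^{(N)}) = \sum_i \lambda_i f(x^{(i)})$. Setting $g_i := f_i - \lambda_i f$, we obtain the functional equation $\sum_i g_i(x^{(i)}) = 0$ on $\mathrm{supp}(\pi)$.

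\textbf{Step 2 (per-coordinate constancy).} I would upgrade this equation to $g_i \equiv \varepsilon_i$ pointwise. Both $(f_i)$ and $(\lambda_i f)$ can be taken $c$-concave and hence $\mu_i$-a.e.\ differentiable; on $\mathrm{supp}(\pi)$ the first-order condition forces $\nabla f_i(x^{(i)}) = \nabla_{x^{(i)}} c(x^{(0)}, \ldots, x^{(N)}) = \lambda_i \nabla f(x^{(i)})$, so $\nabla g_i = 0$ on a set of full $\mu_i$-measure in the $i$-th marginal projection of $\mathrm{supp}(\pi)$. Absolute continuity of $\mu_i$ combined with connectedness of the domain then promotes this to $g_i \equiv \varepsilon_i$ for a constant $\varepsilon_i$ on $\mathcal{X}$.

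\textbf{Step 3 (sum constraint).} Since both tuples are dual optimal, they attain the same value $W$. Substituting $f_i = \lambda_i f + \varepsilon_i$ into the objective gives
\begin{align*}
W \;=\; \sum_i \int f_i\, d\mu_i \;=\; \sum_i \lambda_i \int f\, d\mu_i + \sum_i \varepsilon_i \;=\; W + \sum_i \varepsilon_i,
\end{align*}
which forces $\sum_i \varepsilon_i = 0$.

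The main obstacle is Step 2: rigorously decoupling the functional equation on $\mathrm{supp}(\pi)$ into per-coordinate constancy. While the two-marginal version is textbook Brenier-type material, in the $(N{+}1)$-marginal setting one must invoke $C^1$ regularity of $c$, absolute continuity of each $\mu_i$, and connectedness \emph{simultaneously} to ensure that the gradient relations hold on a set large enough to pin down each $g_i$ as a constant everywhere, rather than merely on a lower-dimensional slice. Once this is in hand, Steps 1 and 3 are essentially bookkeeping.
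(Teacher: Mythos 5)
Your proposal is correct and follows essentially the same route as the paper's proof: replace both tuples by their $c$-conjugates, use primal--dual optimality to get the first-order condition $\nabla f_i = \nabla_{X^{(i)}} c = \lambda_i \nabla f$ on the support of the optimal plan, and conclude constancy of each difference via absolute continuity and connectedness. Your Step 3 derives $\sum_i \varepsilon_i = 0$ from equality of the dual objective values rather than from tightness of the constraint at a point of the support as the paper does, but this is an equivalent piece of bookkeeping.
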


\begin{remark}
	\cao{
		From Theorem \ref{thm:solution_exist}, if we train a shared function $f$ to obtain a solution  \yifan{of} Problem \ref{problem:MK-D}, 
		we have an equivalent Wasserstein distance, \ie $\sum_i f_i {=} \sum_i \lambda_i f$ regardless of whatever the value $\varepsilon_i$ is.
		Therefore, we are able to optimize Problem \ref{problem:newD} instead of intractable Problem \ref{problem:dist_D} in practice.}
\end{remark}

\begin{algorithm}[t]
	\caption{Multi-marginal WGAN.}
	\label{alg:mwgan}
	\begin{algorithmic}[1]
		\begin{small}
			\INPUT  Training data $\{ {\bx}_j^{} \}_{j{=}1}^{n_0} $ in the initial domain, $\{ \hat{\bx}_j^{(i)} \}_{j{=}1}^{n_i} $ in the $ i $-th target domain; batch size $ m_{bs} $; 
			the number of iterations of the discriminator per generator iteration $ n_{\text{critic}} $; Uniform distribution $ U[0, 1] $.
			\OUTPUT The discriminator $ f $, the generators $ \{g_i\}_{i {\in} [N]} $ and the classifier $ \phi $
			\WHILE{not converged} 
			\FOR{ $ t= 0, \ldots, n_{\text{critic}} $} 
			\STATE Sample $ \bx {\sim} \hat{\mmP}_s $ and $\hat{\bx} {\sim} \hat{\mmP}_{\theta_i}, \forall i $, and $ \tilde{\bx} \leftarrow \rho \bx + (1-\rho) \hat{\bx} $, where $ \rho {\sim} U[0, 1] $ \\ 
			\STATE Update $ f $ by ascending the gradient: $ \nabla_w [ \mathop{\mmE}\nolimits_{\bx{\sim} \hat{\mmP}_s} \left[ f \left(\bx\right) \right] {-} \sum\nolimits_{i} \lambda_i^+  \mathop{\mmE}\nolimits_{\hat{\bx}{\sim} \hat{\mmP}_{\theta_i}} \left[ f \left(\hat{\bx}\right) \right] {+} \mR_{\tau}(f) ] $ 
			\STATE Update classifier $ \phi $ by descending the gradient {{$ \nabla_v [ \mC_{\alpha}(\phi)  ] $}}\\  
			\ENDFOR
			\STATE Update each generator $ g_i $ by descending the gradient: $ \nabla_{\theta_i} [{-} \lambda_i^+  \mathop{\mmE}\nolimits_{\hat{\bx}\sim \hat{\mmP}_{\theta_i}} \left[ f \left(\hat{\bx}\right) \right] - \mM_{\alpha}(g_i) ] $
			\ENDWHILE
		\end{small}
	\end{algorithmic}
\end{algorithm}

\subsection{Proposed Objective Function} \label{subsec:mwgan}

\cao{To minimize Wasserstein distance among domains, we now present a novel multi-marginal Wasserstein GAN (MWGAN) based on the proposed dual formulation in (\ref{obj:newD}).} 
Specifically, 
let $ \mF {=} \{ f{:}\, \mmR^d {\to} \mmR \} $ be the class of discriminators parameterized by $ w $, and $ \mG {=} \{ g{:}\, \mmR^d {\to} \mmR^d \}$ be the class of generators and $ g_i {\in} \mG $ is parameterized by $ \theta_i $. 
\cao{Motivated by the adversarial mechanism of WGAN, let $ \lambda_0 {=} 1 $ and $ \lambda_i {:=} {-} \lambda_i^+ $, $\lambda_i^{+}{>}0, i {\in} [N] $}, 
then Problem \ref{problem:newD} can be rewritten as follows: 
\begin{prob} \textbf{\emph{(Multi-marginal Wasserstein GAN)}} \label{problem:MWGAN}
	Given a discriminator $ f{\in}\mF $ and generators $ g_i {\in} \mG, i {\in} [N] $, we can define the following multi-marginal Wasserstein distance as
	\begin{small}
		\begin{align} \label{objective:mwgan}
			W \left(\hat{\mmP}_s, \hat{\mmP}_{\theta_1}, \ldots, \hat{\mmP}_{\theta_N} \right) = 
			\max\limits_{f} \; \mathop{{\mmE}}\nolimits_{\bx \sim \hat{\mmP}_s} \left[ f (\bx) \right] 
			- \sum\nolimits_i \cao{\lambda_i^+} \mathop{{\mmE}}\nolimits_{\hat{\bx} {\sim} \hat{\mmP}_{\theta_i}} \left[ f \left(\hat{\bx}\right) \right], \cao{\quad\st\; \hat{\mmP}_{\theta_i} {\in} \mD_i, f {\in} \Omega}.
		\end{align}
	\end{small}
	\cao{where $ \hat{\mmP}_s $ is the real source distribution, and the distribution $\hat{\mmP}_{\theta_i}$ is generated by $ g_{i} $ in the $ i $-th domain, \begin{small}$ \Omega {=} \{ f | f(\bx) - {\sum_{i{\in}[N]} \lambda_i^+ f(\hat{\bx}^{(i)}) {\leq}  c (\bx, \hat{\bx}^{(1)}, \ldots, \hat{\bx}^{(N)} )}, f{\in}\mF \} $\end{small} with \begin{small}$ \bx {\in} \hat{\mmP}_s $\end{small} and \begin{small}$ \hat{\bx}^{(i)} {\in} \hat{\mmP}_{\theta_i} $\end{small}, $ i {\in} [N] $.} 
\end{prob}
In Problem \ref{problem:MWGAN}, we refer to $ \hat{\mmP}_{\theta_i} {\in} \mD_i, i{\in}[N]$ as \textbf{inner-domain constraints} and $ f {\in} \Omega $ as \textbf{inter-domain constraints} (See Subsections \ref{subsec:inner_domain_constraints} and \ref{subsec:inter_domain_constraints}).
The influence of these constraints are investigated in Section \ref{sec:influence} of supplementary materials.
Note that $ \lambda_i^+ $ reflects the importance of the $ i $-th target domain. 
In practice, we set $ \lambda_i^+ {=} 1/N, i{\in}[N] $ when no prior knowledge is available on the target domains.
To minimize Problem \ref{problem:MWGAN}, we optimize the generators with the following update rule. 

\begin{thm} \label{thm:mwgan_opt}
	If each generator $ g_{i} {\in} \mG, i {\in} [N] $ is locally Lipschitz (see more details of Assumption 1 \cite{arjovsky2017wasserstein}), then there exists a discriminator $ f $ to Problem \ref{problem:MWGAN}, we have the gradient $ \nabla_{\theta_i} W(\hat{\mmP}_s, \hat{\mmP}_{\theta_1}, \ldots, \hat{\mmP}_{\theta_N}) = {-} \lambda_i^{+} \mmE_{{\bx}\sim \hat{\mmP}_s} \left[ \nabla_{\theta_i} f(g_{i}({\bx})) \right] $ for all $ \theta_i, i{\in}[N] $ when all terms are well-defined. 
\end{thm}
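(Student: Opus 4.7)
The plan is to mimic the argument of Theorem 3 in Arjovsky et al.~\cite{arjovsky2017wasserstein}, extended to the multi-marginal setting. I would first rewrite the target expectations as pushforwards of the source: since $\hat{\mmP}_{\theta_i}$ is the law of $g_i(X)$ with $X{\sim}\hat{\mmP}_s$, the objective in (\ref{objective:mwgan}) equals
\begin{equation*}
W(\hat{\mmP}_s, \hat{\mmP}_{\theta_1}, \ldots, \hat{\mmP}_{\theta_N}) \;=\; \sup_{f \in \Omega} \;\; \mmE_{\bx\sim\hat{\mmP}_s}\Big[ f(\bx) - \sum_{i=1}^N \lambda_i^+ \, f(g_i(\bx)) \Big] \;=:\; \sup_{f\in\Omega} V(\theta_1,\ldots,\theta_N, f).
\end{equation*}
This turns the problem into differentiating a pointwise supremum of a family of functions indexed by $f$, which is a classical envelope/Danskin setting.

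Next I would apply an envelope theorem argument. The admissible set $\Omega$ already encodes the inter-domain constraint and does not depend on $\theta_i$ (only on the samples drawn from $\hat{\mmP}_{\theta_i}$, which can be decoupled from $\theta_i$ via the pushforward rewriting above, so that the constraints involve $f(g_i(\bx))$ evaluated at fixed source points). For fixed $\theta$, existence of a maximizer $f^\star$ in $\Omega$ is guaranteed by the hypothesis, and once $f^\star$ is fixed, the map $\theta_i \mapsto V(\theta, f^\star)$ is differentiable with
\begin{equation*}
\nabla_{\theta_i} V(\theta, f^\star) \;=\; -\lambda_i^+ \, \mmE_{\bx\sim\hat{\mmP}_s}\!\big[ \nabla_{\theta_i} f^\star(g_i(\bx)) \big].
\end{equation*}
By the envelope/Danskin-type lemma (in the form used in~\cite{arjovsky2017wasserstein}, Theorem 3), whenever the outer supremum is attained and $V$ is differentiable in $\theta$ at $f^\star$, the gradient of the value function equals the partial gradient $\nabla_{\theta_i} V(\theta, f^\star)$. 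Applying this componentwise over $i\in[N]$ yields exactly the claimed formula.

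The main technical obstacle is the justification of differentiation under the expectation sign and the use of the envelope principle. For the former, I would invoke the local Lipschitz property of $g_i$ in the spirit of Assumption 1 of~\cite{arjovsky2017wasserstein}: local Lipschitzness in $\theta_i$ gives a dominating integrable function $L(\bx,\theta_i)$ such that $\|f^\star(g_i(\bx;\theta_i+h)) - f^\star(g_i(\bx;\theta_i))\| \le \|f^\star\|_{\mathrm{Lip}} \cdot L(\bx,\theta_i)\,\|h\|$, so dominated convergence lets us pull $\nabla_{\theta_i}$ inside $\mmE_{\bx\sim\hat{\mmP}_s}$. For the envelope step, I would argue that the source term in $V$ does not involve $\theta_i$ (it vanishes under $\nabla_{\theta_i}$) and the remaining terms are smooth in $\theta_i$ wherever $f^\star$ and $g_i$ are differentiable; standard Rademacher-type arguments then ensure that these conditions hold almost everywhere, which is enough to conclude. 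Assembling these pieces delivers the stated gradient formula at any $\theta$ where the requisite differentiability holds.
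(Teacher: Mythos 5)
Your proposal follows essentially the same route as the paper's proof: rewrite the target expectations as pushforwards of $\hat{\mmP}_s$ to obtain a value function $V(f,\theta)$, invoke an envelope theorem (the paper cites Milgrom--Segal) to equate $\nabla_{\theta_i}W$ with $\nabla_{\theta_i}V(f^\star,\theta)$ at an optimal $f^\star$, and justify exchanging $\nabla_{\theta_i}$ with $\mmE_{\bx\sim\hat{\mmP}_s}$ via the local Lipschitz assumption exactly as in Lemma 3 of Arjovsky et al. The argument is correct and matches the paper's in structure and level of rigor.
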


Theorem \ref{thm:mwgan_opt} provides a good update rule for optimizing MWGAN. 
Specifically, we first train an optimal discriminator $ f $ and then update each generator along the direction of $ \mmE_{\bx {\sim} \hat{\mmP}_s} \left[ \nabla_{\theta_i} f(g_{i}(\bx)) \right] $.
The detailed algorithm is shown in Algorithm \ref{alg:mwgan}. 
Specifically, 
the generators cooperatively exploit multi-domain correlations (see Section \ref{sec:one_potential} in supplementary materials) and generate samples in the specific target domain to fool the discriminator; the discriminator enforces generated data in target domains to maintain the similar features from the source domain.


\subsection{Inner-domain Constraints} \label{subsec:inner_domain_constraints}
In Problem \ref{problem:MWGAN}, the distribution $ \mmP_{\theta_i} $ generated by the generator $ g_i $ should belong to the $ i $-th domain for any $i$.
To this end, we introduce an auxiliary domain classification loss and the mutual information.  


\textbf{Domain classification loss.}
Given an input $ \bx{:=}\bx^{(0)} $ and generator $ g_i $, we aim to translate the input $ \bx $ to an output $ \hat{\bx}^{(i)} $ which can be classified to the target domain $ \mD_i $ correctly.
To achieve this goal, we introduce an auxiliary classifier $ \phi{:\;} \mX {\to} \mY $ parameterized by $ v $ to optimize the generators. 
Specifically, we label real data {{$ {\bx} {\sim} \hat{\mmP}_{t_i} $}} as $ 1 $, where $ \hat{\mmP}_{t_i} $ is an empirical distribution in the $ i $-th target domain, and we label generated data {{$ \hat{\bx}^{(i)} {\sim} \hat{\mmP}_{\theta_i} $}} as $ 0 $.
Then, the domain classification loss \wrt $ \phi $ can be defined as:
\begin{align}\label{loss:classification}
	\mC_{\alpha}(\phi) = \alpha \cdot {\mmE}_{\bx' {\sim} \hat{\mmP}_{t_i} {\cup} \hat{\mmP}_{\theta_i}} \left[ \ell \left( \phi\left({\bx'} \right), y^{} \right) \right],
\end{align}
where $ \alpha $ is a hyper-parameter, $y$ is corresponding to $\bx'$, and $ \ell(\cdot, \cdot) $ is a binary classification loss, such as hinge loss \cite{zhang2018online}, mean square loss \cite{mao2017least}, cross-entropy loss \cite{goodfellow2014gans} and Wasserstein loss \cite{frogner2015learning}.

\paragraph{Mutual information maximization.}
After learning the classifier $ \phi $, we maximize the lower bound of the mutual information \cite{chen2016infogan, he2017attgan} between the generated image and the corresponding domain, \ie
\begin{small}
	\begin{align}\label{loss:mutual_information}
		\mM_{\alpha}(g_i) = \alpha \cdot {\mmE}_{\bx \sim \hat{\mmP}_s} \left[ \log \phi \left( \left. y^{(i)}{=}1 \right| {g_i(\bx)} \right) \right].
	\end{align}
\end{small}%
By maximizing the mutual information in (\ref{loss:mutual_information}), we correlate the generated image $ g_i(\bx) $ with the $ i $-th domain, and then we are able to translate the source image to the specified domain. 

\subsection{Inter-domain Constraints}\label{subsec:inter_domain_constraints}

Then, we enforce the inter-domain constraints in Problem \ref{problem:MWGAN}, \ie the discriminator $ f {\in} \mF {\cap} \Omega $.
\cao{One can let discriminator be $ 1 $-Lipschitz continuous, but it  may ignore the dependency among domains (see Section  \ref{sec:discuss_lip} in supplementary materials).} 
Thus, we relax the constraints by the following lemma.



\begin{lemma} \textbf{\emph{(Constraints relaxation)}} \label{ieqn:constraint}
	If the cost function $ c(\cdot) $ is measured by $ \ell_2 $ norm, then there exists $ L_f {\ge} 1 $ such that the constraints in Problem \ref{problem:MWGAN} satisfy $ \sum_i {| f(\bx) {-} f (\hat{\bx}^{(i)} ) | }/{ \| \bx {-} \hat{\bx}^{(i)} \| } {\leq} L_f $.
\end{lemma}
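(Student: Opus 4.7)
The plan is to exploit the $\ell_2$ structure of the multi-marginal cost to reduce the joint constraint in $\Omega$ to a pairwise (Lipschitz-type) statement, and then sum those pairwise bounds. A natural choice consistent with the hypothesis ``$c$ is measured by $\ell_2$ norm'' is the multi-marginal cost $c(\bx,\hat{\bx}^{(1)},\ldots,\hat{\bx}^{(N)}) = \sum_{i\in[N]} \lambda_i^+ \|\bx - \hat{\bx}^{(i)}\|$, which collapses to the standard pairwise $\ell_2$ WGAN cost when $N=1$. I would start the proof by fixing this $c$ and rewriting the constraint defining $\Omega$.

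Next, using $\sum_{i\in[N]} \lambda_i^+ = 1$ (the normalization inherited from $\lambda_0 = 1$ in the reformulation of Section \ref{dualformulate}), I would rewrite the constraint of Problem \ref{problem:MWGAN} as
\begin{align*}
\sum_{i\in[N]} \lambda_i^+ \bigl( f(\bx) - f(\hat{\bx}^{(i)}) \bigr) \;\leq\; \sum_{i\in[N]} \lambda_i^+ \|\bx - \hat{\bx}^{(i)}\|.
\end{align*}
Isolating the $i$-th pair by degenerating the remaining target points toward $\bx$ (i.e., sending $\hat{\bx}^{(j)} \to \bx$ for $j \neq i$, which is permissible since the inequality must hold for all feasible configurations) kills every other term on both sides and leaves $f(\bx) - f(\hat{\bx}^{(i)}) \leq \|\bx - \hat{\bx}^{(i)}\|$.

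From here I would upgrade the one-sided bound to an absolute value. The cleanest route is to observe that the discriminator class $\mF$ is closed under sign flip (replacing $f$ by $-f$ keeps the constraint structure in $\Omega$ intact up to relabeling), so the reverse inequality $f(\hat{\bx}^{(i)}) - f(\bx) \leq \|\bx - \hat{\bx}^{(i)}\|$ also holds; alternatively the local Lipschitz continuity of the neural parameterization of $f$ supplies this reverse bound with possibly a different constant. Either way, one obtains $|f(\bx) - f(\hat{\bx}^{(i)})|/\|\bx - \hat{\bx}^{(i)}\| \leq L$ for every $i$ and some finite $L$. Summing over $i\in[N]$ and setting $L_f := \max(NL, 1)$ gives the claimed $\sum_i |f(\bx) - f(\hat{\bx}^{(i)})|/\|\bx - \hat{\bx}^{(i)}\| \leq L_f$ with $L_f \geq 1$.

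The main obstacle is the asymmetry of the Kantorovich constraint: the joint inequality $f(\bx) - \sum_i \lambda_i^+ f(\hat{\bx}^{(i)}) \leq c(\cdot)$ is inherently one-sided because $\bx$ plays the role of a source sample whose potential appears with a different sign than the target potentials. Extracting an absolute value per term therefore requires an extra ingredient (symmetry of $\mF$ under $f \mapsto -f$, or Lipschitz regularity), and I expect getting this step right, rather than the algebraic rewriting, to be the subtle part of the argument.
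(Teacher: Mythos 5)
Your route differs from the paper's and, as written, has two gaps that the paper's argument is specifically built to avoid. First, the isolation step: the constraint set $\Omega$ in Problem \ref{problem:MWGAN} quantifies only over tuples of actual samples, $\bx \in \hat{\mmP}_s$ and $\hat{\bx}^{(j)} \in \hat{\mmP}_{\theta_j}$; it is not an inequality over all of $\mmR^{d(N+1)}$. You are therefore not free to ``send $\hat{\bx}^{(j)} \to \bx$ for $j \neq i$'' --- there need not be any generated sample near $\bx$ --- so the pairwise bound $f(\bx) - f(\hat{\bx}^{(i)}) \leq \|\bx - \hat{\bx}^{(i)}\|$ does not follow from the joint constraint. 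Second, the absolute-value upgrade: $\Omega$ is not closed under $f \mapsto -f$, because the source potential enters with coefficient $+1$ and the targets with $-\lambda_i^+$, so negating $f$ negates the left-hand side and turns the upper bound into a lower bound; membership in $\Omega$ is not preserved. You correctly flag this as the subtle point, but the fallback to Lipschitz regularity of the network imports an assumption the lemma does not make (and which Section \ref{sec:discuss_lip} explicitly wants to avoid leaning on).

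The paper sidesteps both issues by not proving any per-pair Lipschitz estimate at all. It assumes (as a strengthening labeled ``without loss of generality'') the aggregate bound $\frac{1}{N}\sum_i |f(\bx) - f(\hat{\bx}^{(i)})| \leq c(\bx^{(0)},\ldots,\bx^{(N)})$ and then, writing $a_i = |f(\bx)-f(\hat{\bx}^{(i)})|$ and $b_i = \|\bx - \hat{\bx}^{(i)}\|$, uses
\begin{align*}
\Big(\min_j b_j\Big) \sum\nolimits_i \frac{a_i}{b_i} \;\leq\; \sum\nolimits_i a_i \;\leq\; Nc,
\end{align*}
so that $\sum_i a_i/b_i \leq L_f$ with $L_f := Nc/\min_j b_j \geq 1$. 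Note that this $L_f$ is point-dependent --- it depends on the realized cost value and on $\min_j \|\bx - \hat{\bx}^{(j)}\|$ --- which is precisely why the paper says $L_f$ can be computed from the cost or treated as a tuning parameter, rather than being the uniform constant your argument aims to produce. To salvage your approach you would need either to enlarge the constraint set to include degenerate tuples or to assume a two-sided Kantorovich constraint; neither is available in the lemma as stated.
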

Note that $ L_f $ measures the dependency among domains (see Section \ref{sec:proof_lemma_Lf} in supplementary materials).
In practice, $ L_f $ can be calculated with the cost function, or treated as a tuning parameter for simplicity.

\textbf{Inter-domain gradient penalty.}
\cao{In practice, directly enforcing the inequality constraints in Lemma \ref{ieqn:constraint} would have poor performance when generated samples are far from real data. }
We thus propose the following inter-domain gradient penalty.
Specifically, given real data $ \bx $ in the source domain and generated samples $ \hat{\bx}^{(i)} $, 
if $ \hat{\bx}^{(i)} $ can be properly close to $ \bx $, as suggested in \cite{petzka2018on}, we can calculate its gradient and introduce the following regularization term into the objective of MWGAN, \ie
\begin{small}
	\begin{align} \label{loss:gradient_reg}
		\mR_{\tau}(f) = \tau \cdot \left( \sum\nolimits_i \mathop{{\mmE}}\nolimits_{\tilde{\bx}^{(i)} {\sim} \hat{\mmQ}_i} \left\| \nabla f \left(\tilde{\bx}^{(i)} \right) \right\| {-} L_{f} \right)_+^2,
	\end{align}
\end{small}%
where $ (\cdot)_+ {=} \max\{0, \cdot\} $, $ \tau $ is a hyper-parameter, $ \tilde{\bx}^{(i)} $ is sampled between $ \bx $ and $ \hat{\bx}^{(i)} $, and $ \hat{\mmQ}_i, i {\in} [N] $ is a constructed distribution relying on some sampling strategy.
In practice, one can construct a distribution where samples $ \tilde{\bx}^{(i)} $ can be interpolated between real data $ \bx $ and generated data $ \hat{\bx}^{(i)} $ for every domain \cite{Gulrajani2017gangp}.
Note that the gradient penalty captures the dependency of domains since the cost function in Problem \ref{problem:MWGAN} measures the distance among all domains jointly.

\section{Theoretical Analysis}
In this section, we provide the generalization analysis for the proposed method. 
Motivated by \cite{arora2017gans}, we give a new definition of generalization for multiple distributions as follows.

\begin{deftn} \textbf{\emph{(Generalization) }}\label{def:generalization}
	Let $ \mmP_s $ and $ \mmP_{\theta_i} $ be the continuous real and generated distributions, and $ \hat{\mmP}_s $ and $ \hat{\mmP}_{\theta_i} $ be the empirical real and generated distributions.
	The distribution distance $ W(\cdot, \ldots, \cdot) $ is said to generalize with $ n $ training samples and error $ \epsilon $, if for every true generated distribution $ \mmP_{\theta_i} $, the following inequality holds with high probability, 
	\begin{small}
		\begin{align}
			\left|W\left(\hat{\mmP}_s, \hat{\mmP}_{\theta_1}, \ldots, \hat{\mmP}_{\theta_N}\right) - W(\mmP_s, \mmP_{\theta_1}, \ldots, \mmP_{\theta_N}) \right| \leq \epsilon.
		\end{align}
	\end{small}
\end{deftn}
In Definition \ref{def:generalization}, the generalization bound measures the difference between the expected distance and the empirical distance. 
In practice, our goal is to train MWGAN to obtain a small empirical distance, so that the expected distance would also be small.

With the help of Definition \ref{def:generalization}, we are able to analyze the generalization ability of the proposed method.
Let $ \kappa $ be the capacity of the discriminator, and if the discriminator is $ L $-Lipschitz continuous and bounded in $ [-\Delta, \Delta] $, then
we have the following generalization bound.

\begin{thm} \textbf{\emph{(Generalization bound) }} \label{thm:generalization}
	Given the continuous real and generated distributions $ \mmP_s $ and $ \mmP_{\theta_i}, i{\in}\mI $, and the empirical versions $ \hat{\mmP}_s $ and $ \hat{\mmP}_{\theta_i}, i{\in}\mI $ with at least $ n $ samples in each domain, there is a universal constant $ C $ such that $ n {\ge} {C \kappa \Delta^2 \log(L\kappa/\epsilon) }/{\epsilon^2} $ with the error $ \epsilon $, the following generalization bound is satisfied with probability at least $ 1{-}e^{-\kappa} $,
	\begin{small}
		\begin{align}
			\left|W\left(\hat{\mmP}_s, \hat{\mmP}_{\theta_1}, \ldots, \hat{\mmP}_{\theta_N}\right) - W(\mmP_s, \mmP_{\theta_1}, \ldots, \mmP_{\theta_N}) \right| \leq \epsilon.
		\end{align}
	\end{small}
\end{thm}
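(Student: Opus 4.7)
The plan is to follow the Arora et al.\ (2017) generalization argument for Wasserstein GANs and adapt it to the multi-marginal setting afforded by Problem \ref{problem:MWGAN}. The main idea is to write both the empirical and population distances as a supremum over the same discriminator class $\mF$, then reduce the claim to a uniform concentration statement over $\mF$. Concretely, I would express $W(\mmP_s, \mmP_{\theta_1}, \ldots, \mmP_{\theta_N}) = \sup_{f \in \mF} L(f)$ with $L(f) := \mmE_{\mmP_s}[f] - \sum_i \lambda_i^+ \mmE_{\mmP_{\theta_i}}[f]$, and similarly $\hat L(f)$ using the empirical distributions. The elementary inequality $|\sup_f L(f) - \sup_f \hat L(f)| \le \sup_{f\in\mF} |L(f) - \hat L(f)|$ reduces the theorem to uniformly bounding $|L(f)-\hat L(f)|$ over $\mF$.

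Next, I would establish pointwise concentration. Since $f$ is bounded in $[-\Delta,\Delta]$, Hoeffding's inequality yields $|\mmE_{\hat{\mmP}}[f] - \mmE_{\mmP}[f]| = O(\Delta\sqrt{\log(1/\delta)/n})$ with probability $1-\delta$ for each of the $N{+}1$ empirical expectations. Using that $\sum_i \lambda_i^+$ is bounded (taking $\lambda_i^+ = 1/N$ gives sum $1$) and a union bound over the $N{+}1$ terms, I obtain $|L(f)-\hat L(f)| = O(\Delta\sqrt{\log(N/\delta)/n})$ pointwise.

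To lift pointwise concentration to a uniform bound over $\mF$, I would use a covering-number argument. Because $\mF$ has capacity $\kappa$, there is a sup-norm $\varepsilon'$-net $\mF_{\varepsilon'} \subset \mF$ of size $\log|\mF_{\varepsilon'}| = O(\kappa\log(L/\varepsilon'))$. A union bound over $\mF_{\varepsilon'}$ controls $|L(f)-\hat L(f)|$ for every $f$ in the net, and for arbitrary $f\in\mF$ I would approximate by the nearest neighbour $f'\in\mF_{\varepsilon'}$, picking up an additional $O(L\varepsilon')$ error since $L$ and $\hat L$ are themselves $L$-Lipschitz in $f$ (they are weighted averages of pointwise evaluations, and each sample point is $L$-Lipschitz). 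Choosing $\varepsilon' = \Theta(\epsilon/L)$ balances the net-approximation error against the concentration error, and setting the failure probability to $e^{-\kappa}$ absorbs both the union-bound and covering-number logs into a single factor of $\log(L\kappa/\epsilon)$, yielding the claimed sample complexity $n \ge C\kappa\Delta^2 \log(L\kappa/\epsilon)/\epsilon^2$.

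The main obstacle is the careful bookkeeping required to absorb the dependence on $N$ (from the union bound over the $N{+}1$ empirical expectations) and on $L$ (from the Lipschitz-based covering argument) into the single factor $\log(L\kappa/\epsilon)$ while keeping the exponent $-2$ on $\epsilon$. A subtler point that I would flag explicitly is that the shared-discriminator reformulation from Theorem \ref{thm:solution_exist} is essential here: if each $f_i$ were an independent function in its own parameter class, the covering bound would scale as $N\kappa$ rather than $\kappa$, and the sample complexity would degrade with the number of target domains. The shared $f$ therefore plays a double role—it makes the primal problem tractable via Problem \ref{problem:newD}, and it yields a dimension-free (in $N$) generalization guarantee in Theorem \ref{thm:generalization}.
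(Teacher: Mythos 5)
Your proposal is correct and follows essentially the same route as the paper: boundedness in $[-\Delta,\Delta]$ plus Hoeffding for pointwise concentration, an $O(\epsilon/L)$-net over the discriminator class (the paper builds it over the parameter space $\mW$ using $L$-Lipschitzness in $w$, which is the same bookkeeping) with a union bound absorbed into the $\kappa\log(L\kappa/\epsilon)$ factor, and finally the standard two-sided comparison of the suprema. Your closing observation that the shared discriminator keeps the covering cost at $\kappa$ rather than $N\kappa$ is a nice point not made explicit in the paper, but the core argument is the same.
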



Theorem \ref{thm:generalization} shows that MWGAN has a good generalization ability with enough training data in each domain.
In practice, if successfully minimizing the multi-domain Wasserstein distance \ie $ W(\hat{\mmP}_s, \hat{\mmP}_{\theta_1}, \ldots, \hat{\mmP}_{\theta_N}) $, the expected distance $ W(\mmP_s, \mmP_{\theta_1}, \ldots, \mmP_{\theta_N}) $ can also be  small.

%

\section{Experiments}

\textbf{Implementation details.}
All experiments are conducted based on PyTorch, with an NVIDIA TITAN X GPU.\footnote{The source code of our method is available: https://github.com/caojiezhang/MWGAN.}
We use Adam  \cite{kingma2014adam} with $\beta_1{=}0.5$ and $\beta_2{=}0.999$ and set the learning rate as 0.0001. 
We train the model 100k iterations with batch size 16. 
We set $\alpha{=}10$, $ \tau {=} 10 $ and $ L_f $ to be the number of target domains in Loss (\ref{loss:gradient_reg}).
The details of the loss function and the network architectures of the discriminator, generators and classifier can be referred to Section \ref{sec:net_architecture} in supplementary materials.

\begin{figure*}[t]
	\centering
	{
		\includegraphics[width=0.88\linewidth]{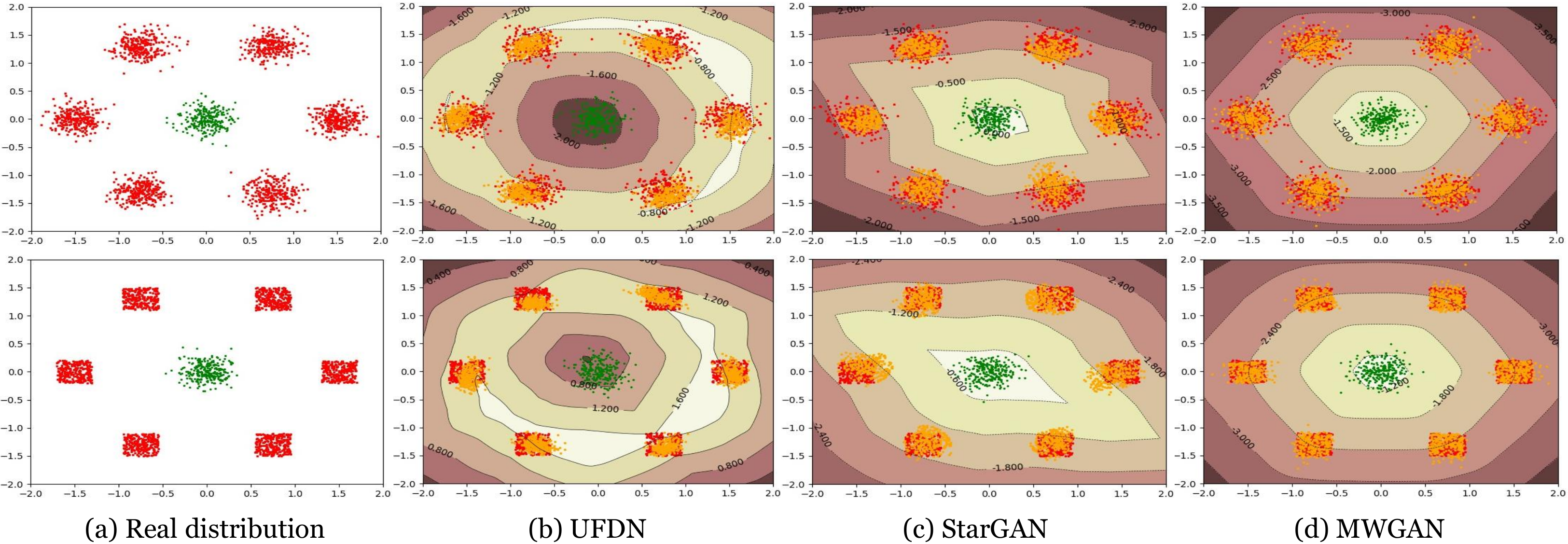} 
		\caption{{Comparisons of distribution matching abilities on the value surface of discriminator. 
				Each method learns from a Gaussian distribution to other six Gaussian (upper line) or Uniform distributions (lower line). (Green: source distribution; Red: target distributions; Orange: generated distributions. ) }
			\label{fig:comparison_contour}
	}}
\end{figure*}

\jie{
	\textbf{Baselines.}
	We adopt the following methods as baselines:
	\textbf{(i) CycleGAN} \cite{zhu2017unpaired} is a two-domain image translation method which can be flexibly extended to perform the multi-domain image translation task. 
	\textbf{(ii) UFDN} \cite{liu2018ufdn} and \textbf{(iii) StarGAN} \cite{choi2018stargan} are multi-domain image translation methods.
}

\textbf{Datasets.}
We conduct experiments on three datasets. Note that all images are resized as 128$ \times $128. 
\textbf{(i) Toy dataset.}
We generate a Gaussian distribution in the source domain, and other six Gaussian or Uniform distributions in the target domains.
More details can be found in the supplemental materials.\\
\textbf{(ii) CelebA \cite{liu2015celeba}}
contains 202,599 face images, where each image has 40 binary attributes. 
We use the following attributes: hair color (black, blond and brown), eyeglasses, mustache and pale skin.
In the first experiment, we \jie{use black hair images as the source domain}, and use the blond hair, eyeglasses, mustache and pale skin images as target domains.
In the second experiment, \mo{we extract 50k Canny edges from CelebA}. We take edge images as the source domain and hair images as target domains.\\ 
\textbf{(iii) Style painting \cite{zhu2017unpaired}.}
The size of \mo{Real scene, }Monet, Van Gogh and Ukiyo-e is 6287, 1073, 400 and 563, respectively. 
We take real scene images as the source domain, and others as target domains. 


\textbf{Evaluation Metrics.} We use the following evaluation metrics:
\textbf{(i) Frechet Inception Distance (FID) \cite{heusel2017gans}} evaluates the quality of the translated images. 
In general, a lower FID score means better performance.
\textbf{(ii) Classification accuracy} widely used in \cite{choi2018stargan, he2017attgan} evaluates the probability that the generated images belong to corresponding target domains.
Specifically, we train a classifier on CelebA (90\% for training and 10\% for testing) using ResNet-18 \cite{He_2016_CVPR}, resulting in a near-perfect accuracy, then use the classifier to measure the classification accuracy of the generated images. 


\begin{figure*}[t]
	\centering
	{
		\includegraphics[width=0.93\linewidth]{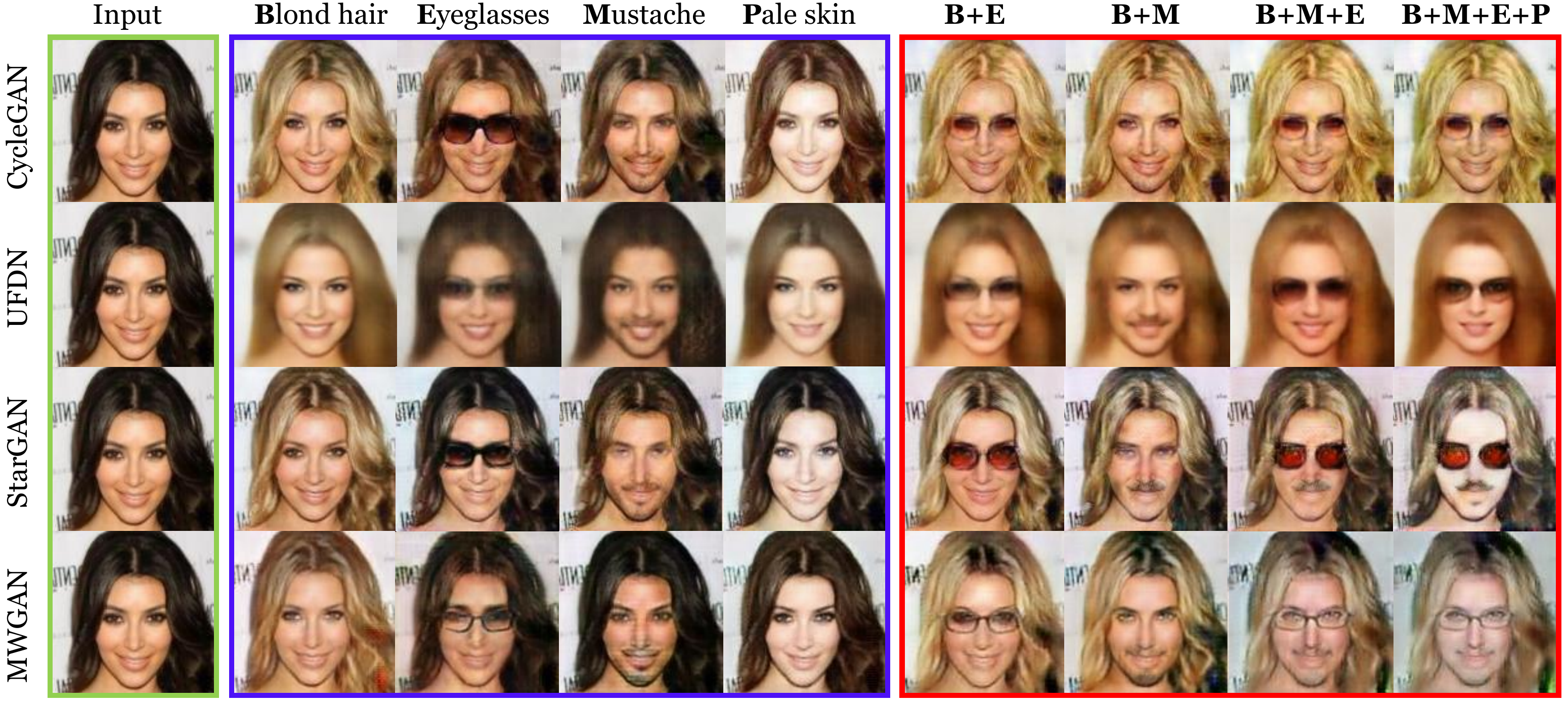}
		\caption{{Comparisons of attribute translation on CelebA. The first column shows the input images, the next four columns show the single attribute translation results, and the last four columns show the multi-attribute translation results. (B: Blond hair; E: Eyeglasses; M: Mustache; P: Pale skin.)}} 
		\label{fig:celeba_5domain}
	}
\end{figure*}


\subsection{Results on Toy Dataset}
We compare MWGAN with UFDN and StarGAN on toy dataset to 
verify the limitations mentioned in Section \ref{sec:related_work}. 
Specifically, we measure the distribution matching ability and plot the value surface of the discriminator. 
Here, the value surface depicts the outputs of the discriminator \cite{Gulrajani2017gangp, liu2018two}.

In Figure \ref{fig:comparison_contour}, MWGAN matches the target domain distributions very well as it is able to capture the geometric information of real distribution using a low-capacity network. 
Moreover, the value surface shows that the discriminator provides correct gradients to update the generators.
However, the baseline methods are very sensitive to the type of source and target domain distributions. 
With the same capacity, the baseline methods on similar distributions (top row) are able to match the target domain distributions. 
\mo{However}, they cannot match the target domain distribution well when the initial and the target domain distributions are different (see bottom row of Figure \ref{fig:comparison_contour}). 


\begin{table}[!t]
	\caption{Comparisons of FID and classification accuracy (\%) on single facial attribute translation.}	
	\label{table:single_att_fid_acc}
	\centering
	\resizebox{1\textwidth}{!}{
		\begin{tabular}{c|c|c|c|c|c|c|c|c}
			\hline
			\multirow{2}[0]{*}{Method} 
			& \multicolumn{2}{c|}{Hair} & \multicolumn{2}{c|}{Eyeglasses} & \multicolumn{2}{c|}{Mustache} & \multicolumn{2}{c}{Pale skin} \\
			\cline{2-9}
			& FID & Accuracy (\%) & FID & Accuracy (\%) & FID & Accuracy (\%) & FID & Accuracy (\%) \\
			\hline
			CycleGAN  & 20.45 & 95.07 & 23.69 & 96.94 & 24.94 & 93.89 & 18.09 & 80.75 \\	
			UFDN      & 65.06 & 92.01 & 69.30 & 79.34 & 76.04 & 97.18 & 53.11 & 83.33 \\
			StarGAN   & 23.47 & 96.00 & 25.36 & 99.51 & 23.75 & \textbf{99.06} & 18.12 & 92.48 \\
			\hline
			MWGAN     & \textbf{19.63} & \textbf{97.65} & \textbf{22.94} & \textbf{99.53} & \textbf{23.69} & 98.35 & \textbf{15.91} & \textbf{93.66}  \\
			\hline
		\end{tabular}
	} 
\end{table}

\begin{table}[t!]
	\begin{minipage}{0.49\textwidth}
		\centering
		\makeatletter\def\@captype{table}\makeatother\caption{Comparisons of classification accuracy (\%) on multi-attribute synthesis. (B: Blond hair, E: Eyeglasses, M: Mustache, P: Pale skin.)}
		\label{table:multi_att_fid_acc}
		\resizebox{1\textwidth}{!}{
			\begin{tabular}{c|c|c|c|c}
				\hline
				{Method}  & {B+E} & {B+M} & {B+M+E} & {B+M+E+P} \\
				\hline
				CycleGAN  & 66.43 & 33.33 & 11.03 & 2.11  \\	
				UFDN      & 72.53 & 51.40 & 23.00 & 8.54  \\
				StarGAN   & 66.66 & 62.20 & 45.77 & 6.10  \\
				\hline
				MWGAN     & \textbf{75.82} & \textbf{69.01} & \textbf{53.75} & \textbf{19.95}  \\
				\hline
			\end{tabular}
		}
	\end{minipage}
	~~
	\begin{minipage}{0.49\textwidth}
		\centering
		\makeatletter\def\@captype{table}\makeatother\caption{Comparisons of the FID value for each facial attribute (different colors of hair) on the Edge$ \rightarrow $CelebA translation task.}
		\label{table:edge2hair}
		\resizebox{1\textwidth}{!}{
			\begin{tabular}{c|c|c|c}
				\hline
				{Method} & {Black hair} & {Blond hair} & {Brown hair} \\
				\hline
				CycleGAN &  65.10 &  81.59 &65.79  \\	
				UFDN     & 131.65 & 144.78 & 88.40  \\
				StarGAN  &  53.41 &  81.00 & 57.51  \\
				\hline
				MWGAN    & \textbf{33.81} & \textbf{51.87} & \textbf{35.24}  \\
				\hline
			\end{tabular}
		}
	\end{minipage}
\end{table}

\subsection{Results on CelebA} \label{subsec:res_celeba}
We compare MWGAN with several baselines on both balanced and imbalanced translation tasks.

\textbf{\emph{(i) Balanced image translation task. }}
In this experiment, 
we train the generators to produce single attribute images, and then synthesize multi-attribute images using the composite generators. 
We generate attributes in order of \{Blond hair, Eyeglasses, Mustache, Pale skin\}. Taking two attributes as an example, let $ g_1 $ and $ g_2 $ be the generators of Blond hair and Eyeglasses images, respectively, then images with Blond hair and Eyeglasses attributes are generated by the composite generators $ g_2 {\circ} g_1 $. 

\textbf{Qualitative results. }
In Figure~\ref{fig:celeba_5domain}, MWGAN has a better or comparable performance than baselines on the single attribute translation task, but achieves the highest visual quality of multi-attributes translation results.
In other words, MWGAN has good generalization performance. 
However, CycleGAN is hard to synthesize multi-attributes. 
UFDN cannot guarantee the identity of the translated images and produces images with blurring structures. 
Moreover, StarGAN highly depends on the number of \mo{transferred} domains and the synthesized images sometimes lack the perceptual realism.

\textbf{Quantitative results.}
We further compare FID and classification accuracy for the single-attribute results.  
For the multi-attribute results, we only report classification accuracy because FID is no longer a valid measure and may give misleading results when training data are not sufficient \cite{heusel2017gans}.
In Table \ref{table:single_att_fid_acc}, MWGAN achieves the lowest FID and comparable classification accuracy, indicating that it produces realistic single-attribute images of the highest quality. 
In Table \ref{table:multi_att_fid_acc}, MWGAN achieves the highest classification accuracy and thus synthesizes the most realistic multi-attribute images.


\emph{\textbf{(ii) Imbalanced image translation task.}}
In this experiment, we compare MWGAN with baselines on the Edge$ {\rightarrow} $CelebA translation task. 
Note that this task is unbalanced because the information of edge images is much less than facial attribute images.

\textbf{Qualitative results.}
In Figure \ref{fig:edge2celeba}, MWGAN is able to generate the most natural-looking facial images with the corresponding attributes from edge images.
In contrast, 
UFDN fails to preserve the facial texture of an edge image, and generates images with very blurry and distorted structure. 
In addition, CycleGAN and StarGAN mostly preserve the domain information but cannot maintain the sharpness of images and the facial structure information. 
%
Moreover, this experiment also shows the superiority of our method on the imbalanced image translation task.


\textbf{Quantitative results.}
In Table \ref{table:edge2hair}, 
MWGAN achieves the lowest FID, showing that it is able to produce the most realistic facial attributes from the edge images.
In contrast, the FID values of baselines are large because these methods are hard to generate sharp and realistic images. 
We also perform a perceptual evaluation with AMT for this task(see Section \ref{sec:amt} in supplementary materials).

\begin{figure}[t]
	\begin{minipage}[t]{0.48\linewidth}
		\centering
		\includegraphics[width = 0.95\columnwidth]{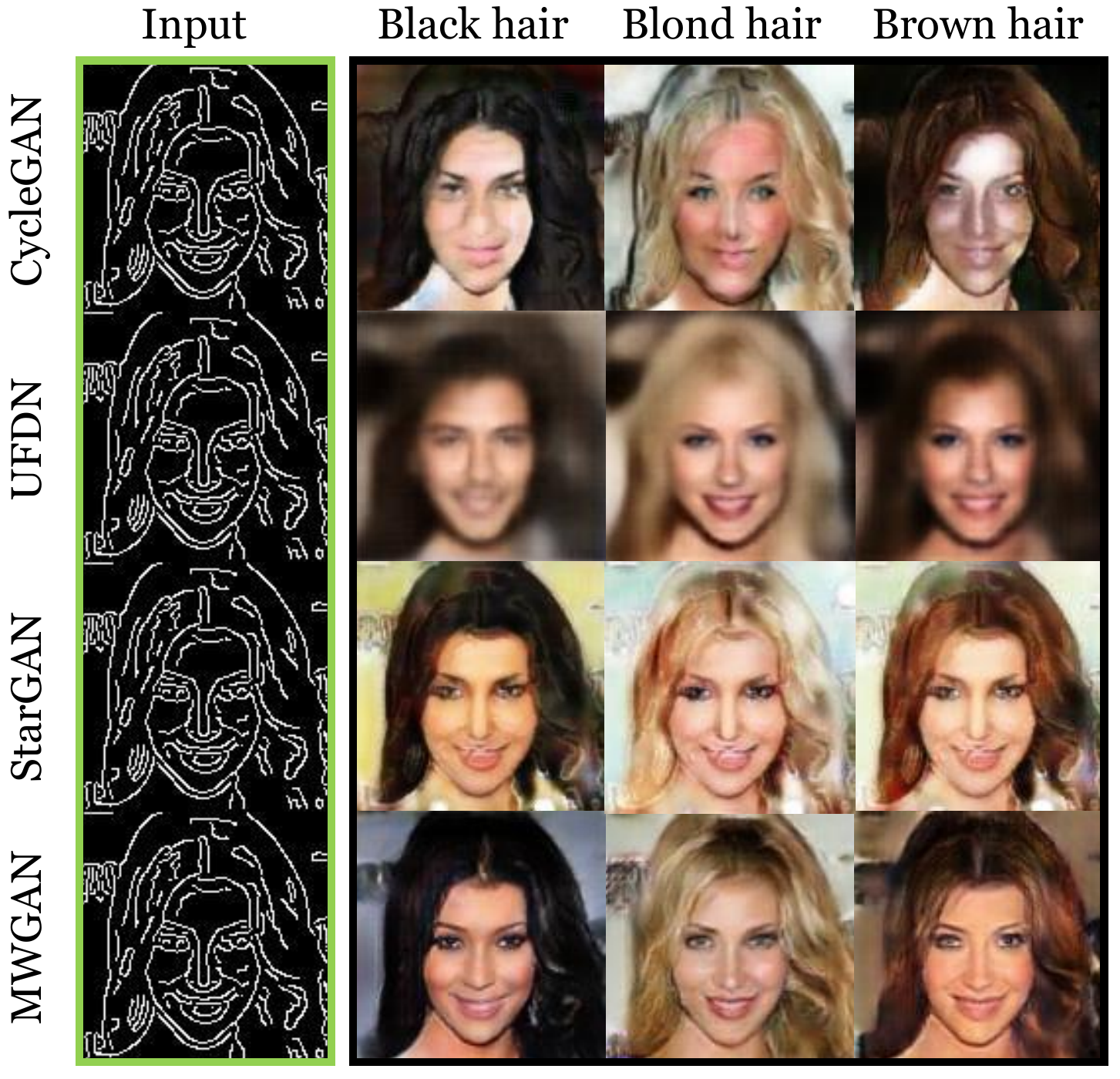}
		\caption{Comparisons of the edge${\rightarrow}$CelebA translation results. The first column shows the input images, and the next three columns show the single attribute translation results.}  
		\label{fig:edge2celeba}
	\end{minipage}
	~~~
	\begin{minipage}[t]{0.48\linewidth}
		\centering
		\includegraphics[width = 0.95\columnwidth]{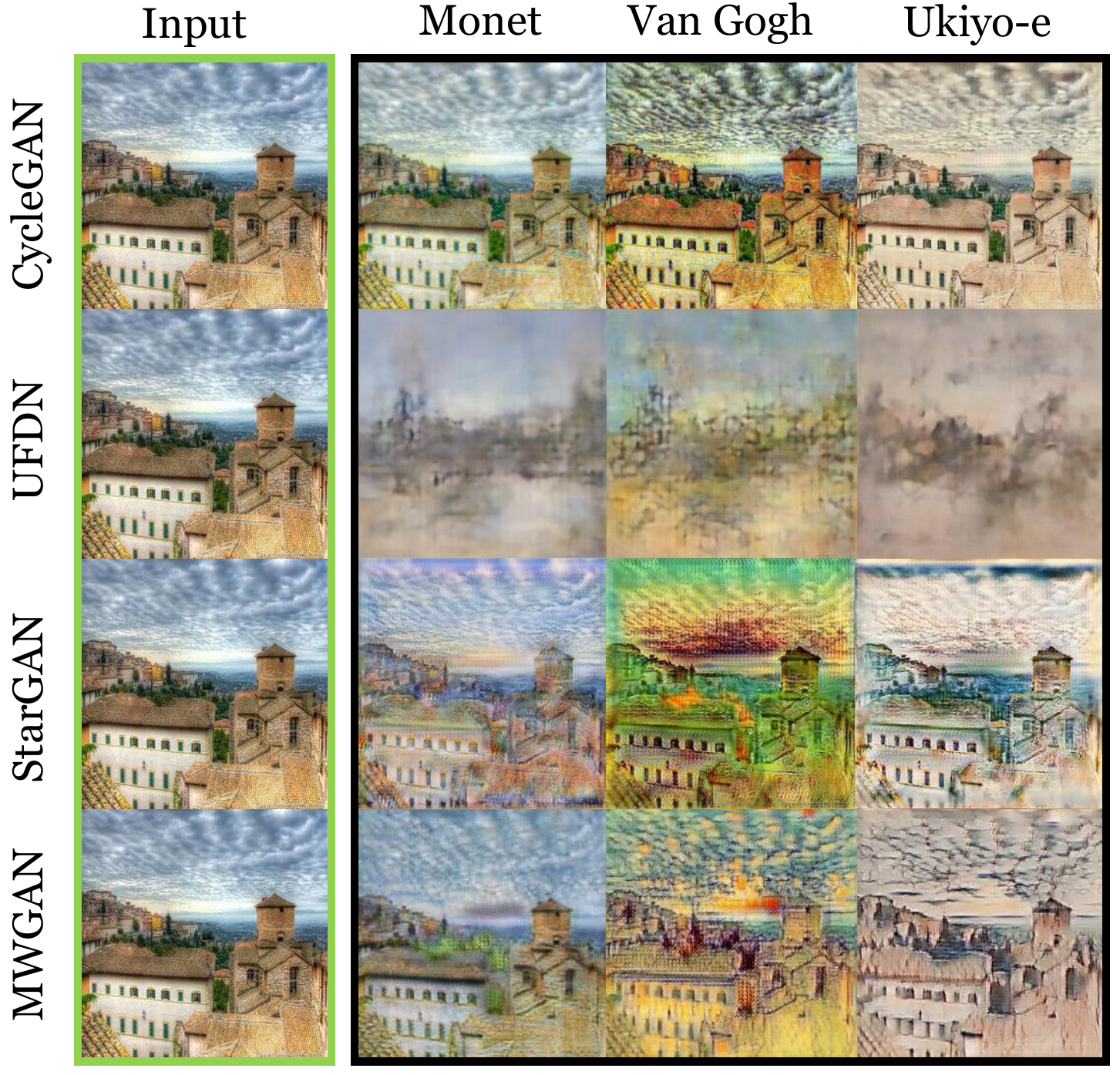}
		\caption{Comparisons of style transfer results. The first column shows the real world images, and the last three columns show translation results, \ie Monet, Van Gogh and Ukiyo-e.}
		\label{fig:photo2art}
	\end{minipage}
\end{figure}

\subsection{Results on Painting Translation}
In this experiment, we finally train our model on the painting dataset to conduct the style transfer task.
As suggested in \cite{gatys2016image, gatys2017controlling, zhu2017unpaired}, we only show the qualitative results.
Note that this translation task is also imbalanced because the input and target distributions are significantly different. 
In Figure \ref{fig:photo2art}, MWGAN generates painting images with higher visual quality. 
In contrast, UFDN fails to generate clearly structural painting images because it is hard to learn domain-invariant representation \mo{when domains are highly imbalanced.} 
CycleGAN cannot fully learn some useful information from painting images to scene images.
When taking a painting image as an input, StarGAN may obtain misleading information to update the generator. 
In this sense, when all domains are significantly different, StarGAN may not learn a good single generator to synthesize  images of multiple domains.

\section{Conclusion}

In this paper, we have proposed a novel multi-marginal Wasserstein GAN (MWGAN) for multiple marginal matching problem. 
Specifically, with the help of multi-marginal optimal transport theory, we develop a new dual formulation for better adversarial learning on the unsupervised multi-domain image translation task. 
Moreover, we theoretically define and further analyze the generalization ability of the proposed method. 
Extensive experiments on both toy and real-world datasets demonstrate the effectiveness of the proposed method. 

\subsubsection*{Acknowledgements}
This work is partially funded by Guangdong Provincial Scientific and Technological Funds under Grants 2018B010107001, National Natural Science Foundation of China (NSFC) 61602185, key project of NSFC (No. 61836003), Fundamental Research Funds for the Central Universities D2191240, Program for Guangdong Introducing Innovative and Enterpreneurial Teams 2017ZT07X183, and 
Tencent AI Lab Rhino-Bird Focused Research Program (No. JR201902).
This work is also partially funded by Microsoft Research Asia (MSRA Collaborative Research Program 2019).


\bibliographystyle{abbrv} 


\newpage
\appendix

\def\blfootnote{\let\thefootnote\relax\footnotetext}

\begin{table}
	\setlength{\tabcolsep}{0.2cm}
	\begin{tabular}{p{0.97\columnwidth}}
		\nipstophline
		\rule{0pt}{1.0cm}
		\centering
		\Large{\textbf{Supplementary Materials: Multi-marginal Wasserstein GAN}}
		\vspace{4pt}
	\end{tabular}
\end{table}
\begin{table}
	\begin{tabular}{c}
		\nipsbottomhline
		~~~~~~~~~~~~~~~~~~~~~~~~~~~~~~~~~~~~~~~~~~~~~~~~~~~~~~~~~~~~~~~~~~~~~~~~~~~~~~~~~~~~~~~~~~~~~~~~~~~~~~~~~~~~~~~~~~~~~~~~~~~~~~~~~~~~~~~~~~~~~~~~~~~~~~~~~~
	\end{tabular}
\end{table}

{\centering{
		\textbf{Jiezhang Cao$^{1*}$\blfootnote{$^*$Authors contributed equally.}, Langyuan Mo$^{1*}$, Yifan Zhang$^{1}$, Kui Jia$^{1}$,  Chunhua Shen$^3$, Mingkui Tan$^{1,2 * \dag}$}\blfootnote{$^\dag$Corresponding author.} \\
		$^1$South China University of Technology, $^2$Peng Cheng Laboratory, $^3$The University of Adelaide \\
		\{secaojiezhang, selymo, sezyifan\}@mail.scut.edu.cn \\ 
		~~~~~~~~~~~~~~~~~~~~~~~~~\{mingkuitan, kuijia\}@scut.edu.cn, chunhua.shen@adelaide.edu.au
}}

\setcounter{section}{0}
\renewcommand\thesection{\Alph{section}}
~\\
\paragraph{Organization.}
In the supplementary materials, we provide detailed proofs for all theorems, lemmas and propositions of our paper \cite{ref_cao2019mwgan}, and more experiment settings and results.
We organize our supplementary materials as follows.

\textbf{Theory part.}
In Section \ref{sec:pre_mmot}, we provide preliminaries of multi-marginal optimal transport.
In Section \ref{sec:equiv_thm}, we prove an equivalence theorem that solving Problem \ref{problem:dist_D} is equivalent to solving Problem \ref{problem:newD} under a mild assumption.
In Section \ref{sec:solution_exist}, we build the relationship between Problem \ref{problem:dist_D} and Problem \ref{problem:newD}.
In Section \ref{sec:error_bound}, we provide an error bound of the new dual formulation.
In Section \ref{sec:mwgan_opt}, we prove an update rule of optimizing the generators in Problem \ref{problem:MWGAN}.
In Section \ref{sec:generalization}, we theoretically analyze the generalization performance of MWGAN.
In Section \ref{sec:proof_lemma_Lf}, we provide a relaxation of inter-domain constraints.
In Section \ref{sec:discuss_lip}, we discuss a case that the potential function is Lipschitz continuous.

\textbf{Experiment part.}
In Section \ref{sec:diff_gan}, we compare MWGAN with existing GAN methods.
In Section \ref{sec:one_potential}, we study the effectiveness of one potential function of MWGAN.
In Section \ref{sec:toy_dataset}, we introduce more details of toy dataset.
In Section \ref{sec:details_classification_CelebA}, we introduce details of the classification on CelebA.
In Section \ref{sec:amt}, we apply more quantitative evaluations for MWGAN.
In Section \ref{sec:influence}, we discuss the influences of inner-domain constraints and inter-domain constraints.
In Section \ref{sec:influence_Lf}, we discuss the influence of the hyper-parameter in our proposed method.
In Section \ref{sec:net_architecture}, we introduce the details about the network architecture of the discriminator and generators as well as more training details of MWGAN.
In Section \ref{sec:more_qualitative_results}, we present more qualitative results on CelebA and style painting dataset.

\section{Preliminaries of Multi-marginal Optimal Transport} \label{sec:pre_mmot} 
\paragraph{Notation.}
We use calligraphic letters (\textit{e.g.}, $ \mX $) to denote space, capital letters (\textit{e.g.}, $ X $) to denote random variables, and bold lower case letter (\textit{e.g.}, $ \bx $) to denote the corresponding values. 
Let $ \mD {=} (\mX , \mmP) $ be the domain, $ \mmP $ or $ \mu $ be the marginal distribution over $ \mX $ and $ \mP(\mX) $ be the set of all the probability measures over $ \mX $.
For convenience, let $ \mX {=} \mmR^d $, and let $ \mI {=} \{ 0, ..., N \} $ and $ [N]{=}\{1, ..., N\} $.%

Deep learning has achieved great success in computer vision.
Despite its empirical success, however, the theoretical understanding of deep neural networks still remains an open problem.
Existing analysis methods \cite{ref_cao2017flatness} are hard to understand the deep neural networks.
Recently, optimal transport \cite{ref_villani2008optimal, ref_cao2019learning, ref_yan2019oversampling} has been applied in deep learning \cite{ref_zhang2019miccai}.
With the help of optimal transport theory, one can define the following primal problem to measure the distance among all distributions jointly (see Figure \ref{fig:comparison}).
Specifically, the primal formulation of the multi-marginal Kantorovich problem is defined as follows.

\begin{prob} \textbf{ \emph{(Primal problem \cite{ref_santambrogio2015optimal})}}  \label{problem:MK-P}
	Given $ N+1 $ marginals $ \mu_i \in \mP(\mmR^d),\; \forall\; i {\in} \mI $ and a cost function $ c\left(X^{(0)}, \ldots, X^{(N)}\right) $, then the multi-marginal Kantorovich problem can be defined as:
	\begin{align}
		\inf_{{\gamma} {\in} \Pi(\mu_0{,} \ldots{,} \mu_N)}  \int {c} \left(X^{(0)}, \ldots, X^{(N)} \right) d \gamma \left(X^{(0)}, \ldots, X^{(N)} \right){,}
	\end{align}
	where $ \Pi (\mu_0, \ldots, \mu_N) $ is the set of probabilistic couplings $ \gamma \left(X^{(0)}, \ldots, X^{(N)} \right) $ with the marginal $ \mu_i $, for all $ i \in \mI $, $\Pi (\mu_0, \ldots, \mu_N) {:=} \left\{ \gamma \in \mP\left(\mmR^{d (N{+}1)}\right) | \pi_i(\gamma) {=} \mu_i, \forall i \in \mI \right\}$, where $ \pi_i: \mmR^{d (N{+}1)} {\to} \mmR^d $ is the canonical projection.
\end{prob}
Solving the primal problem is intractable on the generative task \cite{ref_genevay2018learning}, so we consider the dual formulation of the multi-marginal Kantorovich problem. 
\begin{prob} \textbf{\emph{(Dual problem \cite{ref_santambrogio2015optimal})}  }
	Given $ N{+}1 $ marginals $ \mu_i {\in} \mP(\mmR^d) $ and potentials $ f_i, i {\in} \mI $, the dual Kantorovich problem of multi-marginal Wasserstein distance is defined as:
	\begin{equation}
		\begin{aligned}
			W(\mu_0, \ldots, \mu_N) {=} 
			&\sup_{f_i} \sum\limits_{i \in \mI} \int  f_i \left(X^{(i)}\right) d \mu_i \left(X^{(i)}\right), \quad\\
			&~\st \sum\nolimits_{i \in \mI} f_i \left(X^{(i)}\right) {\leq} c \left(X^{(0)}, \ldots, X^{(N)} \right).
		\end{aligned}
	\end{equation}
\end{prob}

\cao{In practice, we optimize the discrete case of Problem \ref{problem:MK-D}.
	Specifically, given samples \begin{small}{$ \{\bx_j^{(0)}\}_{j{\in}\mJ_0} $} \end{small} and \begin{small}$ \{\bx_j^{(i)}\}_{j{\in}\mJ_i} $\end{small} drawn from source domain distribution $ \mmP_s $ and generated target distributions \begin{small}$ \mmP_{\theta_i}, i {\in} [N] $\end{small}, respectively, where $ \mJ_i $ is an index set and $ n_i {=} |\mJ_i| $ is the size of samples, we have: }
\begin{prob}\textbf{\emph{(Discrete dual problem)}}
	Let $ F {=} \{f_0, \ldots, f_N\} $ be the set of Kantorovich potentials, then the discrete dual problem $\hat{h} (F)$ can be defined as:
	\begin{align}
		&\max_F \; \hat{h} (F) = \sum\nolimits_{i} \frac{1}{n_i} \sum\nolimits_{j \in \mJ_i} f_i\left(\bx_{j}^{(i)} \right), \\
		&~\st \sum\nolimits_{i} f_i \left(\bx^{(i)}_{k_i} \right) \leq  c \left(\bx^{(0)}_{k_0}, \ldots, \bx^{(N)}_{k_N} \right), \forall k_i \in [n_i]. 
	\end{align}
\end{prob}

There is an interesting class of functions satisfying the constraint in Problem \ref{problem:MK-D}, it is helpful for deriving Theorem \ref{thm:pd_opt}.
\begin{deftn} \textbf{\emph{($ c $-conjugate function) }} \label{def:conjugate}
	Let $ c: \mmR^{d(N{+}1)} \to \mmR \cup \{\infty\} $ be a Borel function. We say that the $ (N{+}1) $-tuple of functions $ (f_0, \ldots, f_N) $ is a $ c $-conjugate function, $ \forall i \in \mI $, if 
	\begin{align}
		f_i \left( X^{(i)} \right) = \inf \left\{ c \left(X^{(0)}, \ldots, X^{(N)} \right) - \sum\nolimits_{j\neq i}^{N} f_j \left(X^{(j)} \right) \right\}.
	\end{align}
\end{deftn}
With Definition \ref{def:conjugate}, the following theorem builds a relationship between the primal and dual problem.
\begin{thm} \textbf{\emph{(Primal-dual Optimality \cite{ref_kellerer1984duality})}}  \label{thm:pd_opt}
	Let $ (\mmR^d, \mu_0), \ldots, (\mmR^d, \mu_N) $ be Polish spaces equipped with Borel probability measures $ \mu_0, \ldots, \mu_N $, then we have  
	\begin{enumerate}[leftmargin=*] 
		\item There exists a solution $ \gamma $ to Problem \ref{problem:MK-P} and a c-conjugate solution $ (f_0, \ldots, f_N) $ to Problem \ref{problem:MK-D}. 
		\item The maximum value of Problem \ref{problem:MK-P} is equal to the minimum value of Problem \ref{problem:MK-D}.
		\item For any solution $ \gamma $ of Problem \ref{problem:MK-P}, any c-conjugate solution of Problem \ref{problem:MK-D} and any $ (X^{(0)}, {\ldots}, X^{(N)}) $ in the support of $ \gamma $, then 
		\begin{align*}
			\sum_{i=0}^{N} f_i ( X^{(i)} ) = c (X^{(0)}, {\ldots}, X^{(N)} ).
		\end{align*}
	\end{enumerate}
\end{thm}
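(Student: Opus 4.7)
The strategy is to break the theorem into three stages: compactness-based existence for the primal, $c$-conjugation plus a Hahn-Banach/minimax argument for dual existence and strong duality, and finally a complementary-slackness argument that combines strong duality with the pointwise constraint. Throughout I would assume, by a standard truncation reduction, that $c$ is nonnegative and lower semicontinuous, which is the setting that matches Kellerer's framework.

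First, for part~1 on the primal side, I would note that $\Pi(\mu_0,\ldots,\mu_N)$ is nonempty (it contains the product measure $\mu_0\otimes\cdots\otimes\mu_N$), convex, and tight on the Polish product space $\mmR^{d(N+1)}$ because each individual marginal is tight; it is also weakly closed since the canonical projections $\pi_i$ are weakly continuous. Prokhorov's theorem gives weak sequential compactness, and weak lower semicontinuity of $\gamma \mapsto \int c\,d\gamma$ for lower semicontinuous $c\geq 0$ closes the direct-method argument, producing a minimizer $\gamma^*$ of Problem~\ref{problem:MK-P}. For the dual, the key observation is that the $c$-conjugation in Definition~\ref{def:conjugate} only weakly increases the objective while preserving admissibility, so I would restrict the supremum to $c$-conjugate tuples, fix a normalization (e.g., anchor $f_0$ up to additive constants), and extract a converging subsequence from a maximizing sequence, using uniform moduli of continuity transferred from $c$ through conjugation to ensure the limit remains admissible.

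Second, for part~2, weak duality is immediate: for any admissible $\gamma$ and $(f_0,\ldots,f_N)$,
\[
\sum\nolimits_{i} \int f_i\, d\mu_i \;=\; \int \sum\nolimits_{i} f_i(X^{(i)})\, d\gamma \;\leq\; \int c\, d\gamma,
\]
using the marginal condition for the equality and the pointwise dual constraint for the inequality. For the reverse inequality the natural route is a Fenchel-Rockafellar separation argument applied to the convex functional $\gamma\mapsto\int c\,d\gamma+\chi_{\Pi}(\gamma)$ on signed measures, with $\chi_\Pi$ the indicator of $\Pi(\mu_0,\ldots,\mu_N)$; the resulting dual variables are exactly the potentials $f_i$. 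Alternatively, I would invoke Kellerer's general multi-marginal duality theorem directly, once lower semicontinuity and the Polish hypothesis are verified. Finally, for part~3, given optimal $\gamma^*$ and optimal $c$-conjugate $(f_0^*,\ldots,f_N^*)$, strong duality yields
\[
\int \sum\nolimits_{i} f_i^*(X^{(i)})\, d\gamma^* \;=\; \sum\nolimits_{i} \int f_i^*\, d\mu_i \;=\; \int c\, d\gamma^*,
\]
which, combined with the pointwise inequality $\sum_{i} f_i^*(X^{(i)}) \leq c(X^{(0)},\ldots,X^{(N)})$, forces the integrands to coincide $\gamma^*$-almost everywhere; passing to $\mathrm{supp}(\gamma^*)$ through lower semicontinuity of $c-\sum_i f_i^*$ gives the claimed pointwise equality.

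The main obstacle is part~2: extending two-marginal Monge-Kantorovich duality to $N{+}1$ marginals requires careful handling of the iterated $c$-conjugation (which lacks the clean symmetry of the two-marginal case) and of the $L^1(\mu_i)$-integrability of the potentials along the maximizing sequence. Rather than re-deriving the Hahn-Banach construction from scratch, I would appeal to Kellerer's general duality framework and focus the remaining work on verifying its hypotheses in the present Polish-space, lower-semicontinuous-cost setting.
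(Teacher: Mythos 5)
The paper does not actually prove this statement: it is imported verbatim as a background result and attributed to Kellerer's duality theorem for marginal problems \cite{ref_kellerer1984duality}, so there is no in-paper proof to compare against. Your outline is a faithful reconstruction of the standard argument that sits behind that citation: the direct method (tightness of $\Pi(\mu_0,\ldots,\mu_N)$ via Prokhorov, weak closedness from continuity of the projections $\pi_i$, and lower semicontinuity of $\gamma\mapsto\int c\,d\gamma$) for primal existence; weak duality from the marginal identity $\sum_i\int f_i\,d\mu_i=\int\sum_i f_i(X^{(i)})\,d\gamma\leq\int c\,d\gamma$; a Fenchel--Rockafellar or Kellerer-type separation argument for the strong-duality half; and complementary slackness for part~3. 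Your part~3 argument in particular is complete: if $g=c-\sum_i f_i^*\geq 0$ is lower semicontinuous and vanishes $\gamma^*$-a.e., then every point of $\mathrm{supp}(\gamma^*)$ is a limit of zeros of $g$, so $g$ vanishes on the whole support. You also correctly identify that the genuine content is part~2, and that delegating it to Kellerer's framework is the sensible move --- which is exactly what the paper does, only less explicitly.

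Two caveats worth recording. First, the theorem as stated in the paper carries no hypotheses on $c$ beyond it being a cost function, but existence of a $c$-conjugate dual maximizer (part~1, dual half) and the validity of part~3 are \emph{not} true at that level of generality; one needs at least lower semicontinuity plus some integrability or boundedness control on $c$ (and continuity of $c$ if one wants the conjugates $f_i$ to be upper semicontinuous, which your part~3 argument silently uses). Your proposal supplies these hypotheses where the paper omits them, which is the right instinct but should be stated as an assumption rather than a "standard truncation reduction." Second, your dual-existence step ("extract a converging subsequence from a maximizing sequence, using uniform moduli of continuity transferred from $c$ through conjugation") is the one place where the multi-marginal case is genuinely harder than the two-marginal one: iterated $c$-conjugation as in Definition~\ref{def:conjugate} does not automatically yield a uniformly equicontinuous family unless $c$ has a modulus of continuity in each variable uniformly in the others, and the normalization must fix $N$ additive constants, not one. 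You flag this as the main obstacle, which is accurate, but as written the step is a plan rather than a proof; completing it is precisely the content of the cited reference.
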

This Primal-dual Optimality theorem helps to derive a new dual formulation in Theorem \ref{thm:solution_exist}.

\newpage
\section{Equivalent Theorem} \label{sec:equiv_thm}

The Kantorovich duality theorem \cite{ref_santambrogio2015optimal} can transform Problem \ref{problem:dist_D} into the following problem.
\begin{prob}
	Let $ F_c = \{f_1, \ldots, f_N\} $ be the set of Kantorovich potentials, then the discrete $ c $-conjugate dual problem can be defined as:
	\begin{align}
		\sup_{F_c} \; \hat{h} (F_c) = \frac{1}{n_0} \sum\limits_{j \in \mJ_0} f^c\left(\bx_{j}^{(0)} \right) + \sum\limits_{i=1}^N \frac{1}{n_i} \sum\limits_{j \in \mJ_i} f_i\left(\bx_{j}^{(i)} \right),  
	\end{align}
	where $ f^c $ is the $ c $-conjugate function defined as:
	\begin{align}\label{eqn:c_conjugate}
		f^c \left( \bx^{(0)} \right) = \inf_{\bx^{(1)}, \ldots, \bx^{(N)}} \left\{ c \left(\bx^{(0)}, \bx^{(1)}, \ldots, \bx^{(N)} \right) - \sum\limits_{j=1}^{N} f_j \left(\bx^{(j)} \right) \right\}.
	\end{align}
\end{prob}

\begin{deftn} \textbf{\emph{(Cost function)}}  \label{def:cost_triangle_ineqn}
	Given a distance function $ d(\cdot, \cdot) $ which satisfies the triangle inequality, \ie $ d(\bx, \by) {+} d(\by, \bz) {\geq} d(\bx, \bz), \forall \bx, \by, \bz $, then the cost function can be defined as 
	\begin{align}
		c\left( \bx^{(0)}, \bx^{(1)}, \ldots, \bx^{(N)} \right) = \sum\nolimits_{i \neq j} d\left(\bx^{(i)}, \bx^{(j)}\right), \quad \forall \; i,j \in [N].
	\end{align}
\end{deftn}

\begin{lemma} \label{lemma:problem_II}
	Given the cost function $ c(\cdot, \ldots, \cdot) $ defined in Definition \ref{def:cost_triangle_ineqn}, if $ N{+}1 $ samples $ \bx_{k_i}^{(i)} \in \mX^{(i)}, i \in \mI $ are overlapped, let $ f_i^*, i {\in} [N] $ be the optimizers to Problem \ref{problem:dist_D}, and $ (f^c)^* $ be the $ c $-conjugate function defined in Eqn. (\ref{eqn:c_conjugate}), then 
	\begin{align}
		(f^c)^*\left(\bx_{k_0}^{(0)}\right) {=} f_i^*\left(\bx_{k_i}^{(i)}\right),\quad i {\in} [N].
	\end{align}
\end{lemma}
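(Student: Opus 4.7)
The plan is to unwind the definition of the $c$-conjugate at the overlap point and exploit three facts: (a) the pairwise-distance cost vanishes when all arguments coincide, since $d(\bx,\bx)=0$ gives $c(\bx,\ldots,\bx)=\sum_{i\neq j} d(\bx,\bx) = 0$; (b) the triangle inequality in Definition \ref{def:cost_triangle_ineqn} controls how $c$ responds when a single coordinate is perturbed away from the common value $\bx$; and (c) the primal-dual optimality of Theorem \ref{thm:pd_opt}, which forces the dual constraint to be tight on the support of the optimal coupling.

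First I would prove the direction $(f^c)^*(\bx) \leq f_i^*(\bx)$. By Eqn.~(\ref{eqn:c_conjugate}), for every probe tuple $(\by^{(1)},\ldots,\by^{(N)})$,
\[
(f^c)^*(\bx) \;\leq\; c\bigl(\bx,\by^{(1)},\ldots,\by^{(N)}\bigr) - \sum_{j=1}^{N} f_j^*\bigl(\by^{(j)}\bigr).
\]
The task is to design the probe so that the right-hand side reduces exactly to $f_i^*(\bx)$. A natural try is to set $\by^{(j)}=\bx$ for all $j\neq i$ and keep $\by^{(i)}$ free; then all pairwise-distance contributions involving the coincident coordinates collapse to zero, and only the terms pairing $\by^{(i)}$ with $\bx$ survive. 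Optimising over $\by^{(i)}$ and matching the resulting expression against the $c$-conjugate structure applied in coordinate $i$ (rather than coordinate $0$) yields the desired bound $(f^c)^*(\bx) \leq f_i^*(\bx)$.

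Second, for the reverse direction, I would invoke Theorem \ref{thm:pd_opt}: at an optimal coupling $\gamma^*$ and on its support, $\sum_{i\in\mI} f_i^*(X^{(i)}) = c(X^{(0)},\ldots,X^{(N)})$. When the overlap tuple $(\bx,\ldots,\bx)$ belongs to $\mathrm{supp}(\gamma^*)$, the tightness forces $(f^c)^*(\bx) + \sum_{j=1}^{N} f_j^*(\bx) = 0$. Combining this tightness with the coordinate-wise upper bounds established in the first step (one for each $i$) produces a system of inequalities whose only slack-free solution satisfies $(f^c)^*(\bx) = f_i^*(\bx)$ for every $i\in[N]$. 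The optional additive-constant normalisation provided by Theorem \ref{thm:solution_exist} (choosing $\varepsilon_i$ with $\sum_i \varepsilon_i = 0$) can be used to harmonise the potentials if needed.

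The main obstacle will be twofold. The delicate technical step is designing, in the first direction, the exact probe $(\by^{(j)})_{j\neq i}$ that converts the crude bound $(f^c)^*(\bx)\leq -\sum_j f_j^*(\bx)$ (obtained by the trivial choice $\by^{(j)}=\bx$ for all $j$) into the sharper coordinate-wise bound $(f^c)^*(\bx)\leq f_i^*(\bx)$; this is where the triangle inequality from Definition \ref{def:cost_triangle_ineqn} does the real work, since it lets one rewrite parts of the cost as telescoping distances centred at $\bx$. The secondary, more structural obstacle is guaranteeing that the overlap tuple $(\bx,\ldots,\bx)$ actually lies in $\mathrm{supp}(\gamma^*)$ so that the primal-dual tightness is available at this point; handling this cleanly may require either a mild regularity assumption on the marginals $\mu_i$ near $\bx$ or a limiting argument approaching $\bx$ from within the support.
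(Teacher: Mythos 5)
Your overall strategy diverges from the paper's, and both halves of it have gaps that I do not think can be repaired as described. For the upper bound $(f^c)^*(\bx) \leq f_i^*(\bx)$: every probe tuple you plug into Eqn.~(\ref{eqn:c_conjugate}) produces a bound of the form $c(\bx,\by^{(1)},\ldots,\by^{(N)}) - \sum_{j=1}^N f_j^*(\by^{(j)})$, in which $f_i^*$ enters with a \emph{minus} sign; no choice of the remaining coordinates can flip that sign, so the sharpest diagonal bound you can extract is $(f^c)^*(\bx)\leq -\sum_{j}f_j^*(\bx)$, which is a priori unrelated to $f_i^*(\bx)$. (Already for $N=1$ the claimed inequality would read $\inf_{\by}\{d(\bx,\by)-f_1^*(\by)\}\leq f_1^*(\bx)$, which is not a consequence of the conjugate definition alone --- take $f_1^*$ to be a large negative constant.) Second, even granting those $N$ upper bounds together with the tightness relation $(f^c)^*(\bx)+\sum_j f_j^*(\bx)=0$ from Theorem~\ref{thm:pd_opt}, the resulting system does not force equality: for $N=2$ the values $(f^c)^*(\bx)=-1$, $f_1^*(\bx)=0$, $f_2^*(\bx)=1$ satisfy all three relations yet are pairwise distinct, so the ``only slack-free solution'' step fails. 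Third, you correctly flag that tightness is only available when the diagonal tuple lies in the support of $\gamma^*$, but this is left unresolved and is not guaranteed by the hypotheses of the lemma.

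The paper's proof avoids the primal coupling entirely and argues by contradiction on the dual side: writing $\psi^*=\sum_{j=1}^N f_j^*$, it splits the infimum defining $(f^c)^*(\bx_{k_0}^{(0)})$ into the value $-\psi^*(\bx_{k_1}^{(1)},\ldots,\bx_{k_N}^{(N)})$ at the overlap tuple (where the cost vanishes) and the infimum over all remaining tuples. If the claimed equality fails, the infimum is attained away from the overlap tuple, leaving slack there; one can then raise the potentials at the overlap tuple to some $\psi'$ without altering $(f^c)^*$ at any source sample or violating any constraint, which strictly increases the dual objective $\hat{h}(F_c^*)$ and contradicts optimality of $f_i^*$. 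If you want to salvage your plan, the ingredient to import is this perturbation/slackness argument rather than a probe-based upper bound: the conjugate definition by itself cannot produce an inequality with $+f_i^*(\bx)$ on the right-hand side, and complementary slackness alone does not separate the individual potentials.
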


\begin{coll} \label{lemma:problem_II_pro}
	Given the cost function $ c(\cdot, \ldots, \cdot) $ defined in Definition \ref{def:cost_triangle_ineqn}, we assume $ f^* $ is $ 1 $-Lipschitz continuous, the samples are bounded and the distance function satisfies $ d(\bx, \bz) {\leq} d(\bx, \by) {+} d(\by, \bz) {\leq} C d(\bx, \bz) $, where $ C{\ge} 1 $, when $ N{+}1 $ samples $ \bx_{k_i}^{(i)} {\in} \mX^{(i)}, i {\in} \mI $ are close to each other, \ie $ c(\bx_{k_0}^{(0)}, \ldots, \bx_{k_N}^{(N)}) $ is arbitrarily small, then $ (f^c)^*(\bx_{k_0}^{(0)}) $ would be arbitrarily close to $ f_i^*(\bx_{k_i}^{(i)}), i {\in} [N] $, where $ f_i^*, i {\in} [N] $ are the optimizers to Problem \ref{problem:dist_D}, and $ (f^c)^* $ is the $ c $-conjugate function defined in Eqn. (\ref{eqn:c_conjugate}).
\end{coll}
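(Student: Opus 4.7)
The proof plan is to combine Lemma \ref{lemma:problem_II} (the exact equality at overlapped points) with a continuity argument driven by the Lipschitz hypothesis on $f^*$ and the pairwise-sum structure of the cost $c$.

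First, I would exploit the structure $c(\bx^{(0)},\ldots,\bx^{(N)}) = \sum_{i \neq j} d(\bx^{(i)}, \bx^{(j)})$. Since each summand is non-negative, the hypothesis that $c(\bx_{k_0}^{(0)}, \ldots, \bx_{k_N}^{(N)})$ is arbitrarily small immediately forces every pairwise distance $d(\bx_{k_i}^{(i)}, \bx_{k_j}^{(j)})$ to be arbitrarily small, uniformly in $(i,j)$. The two-sided inequality $d(\bx, \bz) \leq d(\bx, \by) + d(\by, \bz) \leq C d(\bx, \bz)$ further ensures that constants arising when aggregating or substituting one pairwise distance for another stay bounded by the uniform factor $C$, so passing between different pairings of indices costs only a constant.

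Second, I would invoke Lemma \ref{lemma:problem_II} at the limiting coincident configuration obtained by collapsing the probe samples to $\bx^* := \bx_{k_0}^{(0)}$. This yields $(f^c)^*(\bx^*) = f_i^*(\bx^*)$ for every $i \in [N]$. Here the lemma is read as a pointwise structural property of the optimal Kantorovich potentials: the $c$-conjugate defined in Eqn.~(\ref{eqn:c_conjugate}) takes an infimum over arbitrary probe points in $\mX$, so overlapping refers only to evaluation, not to literal training-data coincidence. Third, I would propagate this identity back to the original non-overlapped points using the $1$-Lipschitz continuity of each $f_i^*$:
\begin{align*}
\bigl|(f^c)^*(\bx_{k_0}^{(0)}) - f_i^*(\bx_{k_i}^{(i)})\bigr|
&\leq \bigl|(f^c)^*(\bx_{k_0}^{(0)}) - f_i^*(\bx_{k_0}^{(0)})\bigr|
+ \bigl|f_i^*(\bx_{k_0}^{(0)}) - f_i^*(\bx_{k_i}^{(i)})\bigr| \\
&\leq 0 + d\bigl(\bx_{k_0}^{(0)}, \bx_{k_i}^{(i)}\bigr),
\end{align*}
where the first term vanishes by the previous step and the second is controlled by $c(\bx_{k_0}^{(0)}, \ldots, \bx_{k_N}^{(N)})$ via the first step (and by $C$ if rerouting through another index is required). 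Since the cost can be made arbitrarily small by hypothesis, so can the left-hand side, yielding the stated conclusion.

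The main obstacle I anticipate is the second step: justifying that the exact equality in Lemma \ref{lemma:problem_II} can be invoked at a purely \emph{hypothetical} collapse of the probe coordinates onto $\bx_{k_0}^{(0)}$, rather than at a literal overlap in the training set. This requires reading the lemma as a pointwise structural identity of the optimal potentials, which is compatible with the continuous definition of the $c$-conjugate in Eqn.~(\ref{eqn:c_conjugate}). Once this reinterpretation is in place, the rest of the argument is a routine triangle-inequality plus Lipschitz estimate with uniform constants depending only on $C$ and the Lipschitz constant of $f^*$, together with boundedness of the samples to rule out pathological blow-ups at infinity.
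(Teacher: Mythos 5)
There is a genuine gap in your second step. You invoke Lemma \ref{lemma:problem_II} at a \emph{hypothetical} configuration in which all probe coordinates are collapsed onto $\bx_{k_0}^{(0)}$, to conclude the exact identity $(f^c)^*(\bx_{k_0}^{(0)}) = f_i^*(\bx_{k_0}^{(0)})$. But Lemma \ref{lemma:problem_II} is proved by a perturbation/contradiction argument that is tied to the \emph{discrete} objective $\hat{h}(F)$: it assumes the overlapped tuple consists of actual samples, and derives a contradiction by modifying $\psi^*$ at that tuple to strictly increase the sum over sample points. If $\bx_{k_0}^{(0)}$ is not itself a sample of each target domain $\mX^{(i)}$, the values $f_i^*(\bx_{k_0}^{(0)})$, $i\in[N]$, do not enter the discrete objective at all and are therefore unconstrained by optimality; the ``pointwise structural identity'' you propose to read into the lemma is exactly what fails, and no limiting argument repairs it because the discrete optimizers carry no continuity off the sample set beyond what you separately assume. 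Consequently the first term in your final triangle inequality need not vanish, and the whole estimate collapses.

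The paper's own proof does not route through Lemma \ref{lemma:problem_II} at all. It works directly from the definition of the $c$-conjugate in Eqn.~(\ref{eqn:c_conjugate}) (in the case $N=1$, with $f_1:=-f$ and $c:=d$): either the infimum defining $(f^c)^*(\bx_{k_0}^{(0)})$ is attained at $\bx_{k_1}^{(1)}$ itself, in which case $\left|(f^c)^*(\bx_{k_0}^{(0)}) - f^*(\bx_{k_1}^{(1)})\right| = d(\bx_{k_0}^{(0)}, \bx_{k_1}^{(1)})$, or it is attained at some other point $\bx_{k_1'}^{(1)}$, in which case the $1$-Lipschitz property of $f^*$ together with the two-sided inequality $d(\bx,\bz)\leq d(\bx,\by)+d(\by,\bz)\leq C\,d(\bx,\bz)$ gives $\left|(f^c)^*(\bx_{k_0}^{(0)}) - f^*(\bx_{k_1}^{(1)})\right| \leq C\, d(\bx_{k_0}^{(0)}, \bx_{k_1}^{(1)})$. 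Both bounds tend to zero with the cost, which is the stated conclusion. To fix your proof, replace the appeal to Lemma \ref{lemma:problem_II} with this direct case analysis on where the infimum in the $c$-conjugate is attained; your first and third steps (pairwise distances controlled by the cost, Lipschitz propagation) can then be retained essentially as written.
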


\begin{lemma}\label{lemma:problem_III}
	Suppose $ f^* $ is an optimal solution to Problem \ref{problem:newD} and $ \sum_{i\in\mI} \lambda_i = 0 $, then $ f^* $ satisfies 
	\begin{align}
		f^*\left(\bx^{(0)}\right) = \inf_{\bx^{(1)}, \ldots, \bx^{(N)}} \left\{c \left(\bx^{(0)}, \bx^{(1)}, \ldots, \bx^{(N)} \right) - \sum\nolimits_{j=1}^{N} \lambda_j f^* \left(\bx^{(j)} \right) \right\}, \quad \forall \; \bx \in \mX^0.
	\end{align}
\end{lemma}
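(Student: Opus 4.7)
The plan is to establish the equality by proving both the ``$\leq$'' and ``$\geq$'' directions, where the ``$\leq$'' direction will follow immediately from feasibility and the ``$\geq$'' direction will rely on the hypothesis $\sum_{i\in\mI}\lambda_i = 0$ together with the natural property that the cost $c$ vanishes on the diagonal (which holds, for instance, under Definition \ref{def:cost_triangle_ineqn} and more generally whenever $c$ is built from pairwise distances between positions).

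For the ``$\leq$'' direction, I would invoke the (continuous-limit) constraint of Problem \ref{problem:newD}, $\sum_{i\in\mI}\lambda_i f^*(\bx^{(i)}) \leq c(\bx^{(0)}, \ldots, \bx^{(N)})$, for every tuple of arguments. Using the normalization $\lambda_0 = 1$ inherited from Problem \ref{problem:MWGAN} and implicit in the statement of the lemma, a rearrangement gives
\[
f^*(\bx^{(0)}) \;\leq\; c\left(\bx^{(0)}, \ldots, \bx^{(N)}\right) - \sum_{j=1}^N \lambda_j f^*\left(\bx^{(j)}\right)
\]
for every choice of $(\bx^{(1)}, \ldots, \bx^{(N)})$. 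Passing to the infimum over the right-hand side yields one half of the desired equality.

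For the ``$\geq$'' direction, the plan is to exhibit a specific admissible choice inside the infimum that already attains the value $f^*(\bx^{(0)})$. The natural candidate is the diagonal choice $\bx^{(j)} = \bx^{(0)}$ for all $j \geq 1$. Under this choice one has $c(\bx^{(0)}, \bx^{(0)}, \ldots, \bx^{(0)}) = 0$, while $\sum_{i\in\mI}\lambda_i = 0$ combined with $\lambda_0 = 1$ forces $\sum_{j=1}^N \lambda_j = -1$. The bracket therefore evaluates to $0 - (-1)\,f^*(\bx^{(0)}) = f^*(\bx^{(0)})$, so the infimum is at most $f^*(\bx^{(0)})$. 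Combined with the previous step, this yields equality.

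The main subtlety, and what I expect to be the only real work, is justifying the two implicit ingredients: the normalization $\lambda_0 = 1$ from Problem \ref{problem:MWGAN}, and the diagonal vanishing $c(\bx, \bx, \ldots, \bx) = 0$, which is easy to verify in all concrete instances considered in the paper (Definition \ref{def:cost_triangle_ineqn} and the $\ell_2$-based cost of Lemma \ref{ieqn:constraint}). It is also worth noting that the argument never actually uses the optimality of $f^*$ beyond feasibility, so the same $c$-conjugate-type identity in fact holds for every admissible shared potential; thus the only place where optimality could enter in a strengthened version of the lemma would be if one wanted to drop the diagonal-vanishing assumption and argue via Theorem \ref{thm:pd_opt} and Theorem \ref{thm:solution_exist} instead, writing $f^*$ (scaled by the $\lambda_i$) as a $c$-conjugate solution to Problem \ref{problem:MK-D} up to constants summing to zero.
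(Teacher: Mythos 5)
Your proposal is correct under the natural reading of the lemma (infimum taken over all of $\mX^N$, cost as in Definition \ref{def:cost_triangle_ineqn}), and you correctly isolate the two implicit ingredients ($\lambda_0=1$ and $c$ vanishing on the diagonal); your algebra for both directions checks out. However, your route is genuinely different from the paper's. The paper also gets the ``$\leq$'' direction from feasibility, but for ``$\geq$'' it argues by contradiction using the \emph{optimality} of $f^*$: assuming $f^*(\bx_{k_0}^{(0)})$ were strictly below the infimum, it constructs a modified potential $f'$ that agrees with $f^*$ everywhere except at $\bx_{k_0}^{(0)}$, where it is raised to the infimum value; $f'$ remains feasible and strictly increases the objective $\hat{h}(F_\lambda)$, a contradiction. (Your diagonal computation does appear in the paper's proof, but only as a sub-step ruling out the degenerate case where the same point occurs as a sample in every domain.) What each approach buys: your argument is shorter, avoids the contradiction, and — as you observe — proves the $c$-conjugate identity for \emph{every} feasible shared potential, not just the optimal one. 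The price is that it hinges on the diagonal tuple $(\bx^{(0)},\ldots,\bx^{(0)})$ being admissible inside the infimum; in the strictly discrete reading of Problem \ref{problem:newD}, where both the constraints and the infimum range only over sample tuples, the same point essentially never appears as a sample in all $N{+}1$ domains, and your ``$\geq$'' step collapses — this is exactly the situation the paper's optimality-based improvement argument is built to handle, since it only needs to perturb $f^*$ at a single source sample. Conversely, in the unrestricted-infimum reading your ``$\leq$'' step needs the constraint to hold off the sample grid, a point you flag as ``continuous-limit'' and which the paper glosses over identically, so no new gap is introduced there.
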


\begin{thm} \textbf{\emph{(Equivalent Theorem)}} \label{thm:equivalent_theorem}
	Given the cost function defined in Definition \ref{def:cost_triangle_ineqn}, and $ \sum_i \lambda_i {=} 0, i{\in}\mI $ then solving Problem \ref{problem:newD} is equivalent to solving Problem \ref{problem:dist_D}, \ie the optimal objective of Problems \ref{problem:dist_D} and \ref{problem:newD} are equal.
\end{thm}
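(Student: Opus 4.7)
My plan is to prove the theorem by establishing the two inequalities $\mathrm{OPT}_3 \le \mathrm{OPT}_2$ and $\mathrm{OPT}_2 \le \mathrm{OPT}_3$ separately, where $\mathrm{OPT}_k$ denotes the optimal value of Problem $k$. The direction $\mathrm{OPT}_3 \le \mathrm{OPT}_2$ is immediate, since Problem \ref{problem:newD} is a restriction of Problem \ref{problem:dist_D} to tuples of the form $(\lambda_0 f, \ldots, \lambda_N f)$: every feasible point of Problem \ref{problem:newD} produces a feasible point of Problem \ref{problem:dist_D} with the same objective value by setting $f_i = \lambda_i f$.

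For the reverse inequality $\mathrm{OPT}_2 \le \mathrm{OPT}_3$, I would start from an optimal $c$-conjugate tuple $(f_0^*, \ldots, f_N^*)$ of Problem \ref{problem:dist_D}, whose existence is guaranteed by the primal-dual optimality theorem, and construct a shared potential $f$ together with scalars $\lambda_i$ satisfying $\sum_i \lambda_i = 0$ such that $(\lambda_0 f, \ldots, \lambda_N f)$ is feasible for Problem \ref{problem:newD} and achieves the same objective value. The cost $c(\bx^{(0)}, \ldots, \bx^{(N)}) = \sum_{i \neq j} d(\bx^{(i)}, \bx^{(j)})$ from Definition \ref{def:cost_triangle_ineqn} is symmetric in all of its arguments, and on overlapping samples Lemma \ref{lemma:problem_II} forces $f_i^*(\bx) = (f^c)^*(\bx)$ for every $i$. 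This identity pins down a canonical choice of $f$ on the diagonal of the support of the optimal coupling, and Corollary \ref{lemma:problem_II_pro} would extend the approximation off the diagonal to nearby samples.

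To finish, I would invoke Lemma \ref{lemma:problem_III}, which under the hypothesis $\sum_i \lambda_i = 0$ characterizes any optimizer of Problem \ref{problem:newD} as its own $c$-conjugate fixed point, and verify that the candidate $f$ defined above satisfies this relation. The hypothesis $\sum_i \lambda_i = 0$ simultaneously kills any additive constant in $f$, removing the shift ambiguity in Lemma \ref{lemma:problem_II} and allowing the two objectives to match exactly. The main obstacle I expect is the delicate bookkeeping of choosing both the scalars $\lambda_i$ and the additive shifts permitted by Theorem \ref{thm:solution_exist}: transferring the pointwise, diagonal-only identity of Lemma \ref{lemma:problem_II} into a global functional identity of the form $f_i^* = \lambda_i f + \varepsilon_i$ requires the symmetry of $c$ and the $c$-conjugacy structure to mesh precisely with the normalization $\sum_i \lambda_i = 0$, and carrying out this reduction cleanly is the bulk of the argument.
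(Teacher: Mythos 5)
Your proposal follows essentially the same route as the paper's proof: the same two-inequality decomposition, with the easy direction coming from the observation that the shared-potential ansatz is a restriction of the general dual, and the hard direction built from Lemma \ref{lemma:problem_II}'s identification $(f^c)^*(\bx_{k_0}^{(0)}) = f_i^*(\bx_{k_i}^{(i)})$ on overlapping samples to manufacture a single potential and weights $\lambda_i$ feasible for Problem \ref{problem:newD} with the same objective value. The ``delicate bookkeeping'' you defer is also left implicit in the paper, which after invoking Lemma \ref{lemma:problem_II} simply asserts that such a $\phi$ and $\lambda_i$ can be found.
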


\newpage
\subsection{Proofs of Equivalent Theorem}
\textbf{Theorem \ref{thm:equivalent_theorem}} \textbf{(Equivalent Theorem)}
\emph{
	Given the cost function defined in Definition \ref{def:cost_triangle_ineqn}, and $ \sum_i \lambda_i {=} 0, i{\in}\mI $ then solving Problem \ref{problem:newD} is equivalent to solving Problem \ref{problem:dist_D}, \ie the optimal objective of Problems \ref{problem:dist_D} and \ref{problem:newD} are equal.
}

\begin{proof}
	First, we prove that any optimal solution to Problem \ref{problem:newD} is a feasible solution to Problem \ref{problem:dist_D}.
	Suppose that $ f^* $ is the optimal solution to Problem \ref{problem:newD}, from Lemma \ref{lemma:problem_II}, we know that 
	\begin{align}
		f^*\left(\bx^{(0)}\right) = \inf_{\bx^{(1)}, \ldots, \bx^{(N)}} \left\{c \left(\bx^{(0)}, \bx^{(1)}, \ldots, \bx^{(N)} \right) - \sum\limits_{j=1}^{N} \lambda_j f^* \left(\bx^{(j)} \right) \right\}, \quad \forall \; \bx {\in} \mX^0.
	\end{align}
	From the definition of $ c $-conjugate function in Eqn. (\ref{eqn:c_conjugate}), we have
	\begin{align}
		(f^*)^c\left(\bx^{(0)}\right) = \inf_{\bx^{(1)}, \ldots, \bx^{(N)}} \left\{c \left(\bx^{(0)}, \bx^{(1)}, \ldots, \bx^{(N)} \right) - \sum\limits_{j=1}^{N} \lambda_j f^* \left(\bx^{(j)} \right) \right\}, \quad \forall \; \bx {\in} \mX^0.
	\end{align}
	Hence, $ f^* $ is a feasible solution to Problem \ref{problem:dist_D}.
	Therefore, 
	\begin{align}
		\hat{h}(F_c^*) \geq \hat{h}(F_{\lambda}^*). 
	\end{align}
	Second, we prove that any optimal solution to Problem \ref{problem:dist_D} is a feasible solution to Problem \ref{problem:newD}.
	Suppose $ f_i^*, i\in[N] $ are optimizers to Problem \ref{problem:dist_D}.
	From Lemma \ref{lemma:problem_II}, $ \forall\; \bx_{k_i}^{(i)} \in \mX^{(i)}, i \in \mI $ with equal value, we have $ (f^c)^*(\bx_{k_0}^{(0)}) = f_i^*(\bx_{k_i}^{(i)}), i \in [N] $, given that the cost function satisfies the condition in Definition \ref{def:cost_triangle_ineqn}.
	Therefore, we can find a function $ \phi $ and $ \lambda_i, i\in[N] $ such that 
	\begin{align}
		\lambda_0 \phi\left(\bx_{k_0}^{(0)}\right) &= (f^c)^*\left(\bx_{k_0}^{(0)}\right), \\ 
		\lambda_i \phi \left( \bx_{k_i}^{(i)} \right) &= f_i^*\left(\bx_{k_i}^{(i)}\right). 
	\end{align}
	Thus, $ \hat{h}(F_c^*) $ can be rewritten as
	\begin{align}
		\hat{h}(F_c^*) = \sum\limits_{i} \frac{\lambda_i}{n_i} \sum\limits_{j \in \mJ_i} \phi \left(\bx_{j}^{(i)} \right)
	\end{align}
	From the definition of $ (f^c)^* $, we have 
	\begin{align}
		\sum\nolimits_{i} \lambda_i \phi \left(\bx^{(i)} \right) \leq  c \left(\bx^{(0)}, \ldots, \bx^{(N)} \right), 
	\end{align}
	Therefore, $ \phi $ is a feasible solution to Problem \ref{problem:newD}, and hence 
	\begin{align}
		\hat{h}(F_c^*) \leq \hat{h}(F_{\lambda}^*).
	\end{align}
	From $ \hat{h}(F_c^*) \geq \hat{h}(F_{\lambda}^*) $ and $ \hat{h}(F_c^*) \leq \hat{h}(F_{\lambda}^*) $, we have 
	\begin{align}
		\hat{h}(F_c^*) = \hat{h}(F_{\lambda}^*),
	\end{align}
	where $ \hat{h}(F^*_{\lambda}) = \{ \lambda_0 f^*, \ldots, \lambda_N f^* \} $.
\end{proof}

\newpage
\subsection{Proofs of Lemmas \ref{lemma:problem_II} and \ref{lemma:problem_III}}

\textbf{Lemma \ref{lemma:problem_II}} 
\emph{
	If the cost function $ c(\cdot, \ldots, \cdot) $ satisfies Definition \ref{def:cost_triangle_ineqn}, then $ \forall\; \bx_{k_i}^{(i)} \in \mX^{(i)}, i \in \mI $, if they are equal and $ f_i^*, i \in [N] $ are the optimizers to Problem \ref{problem:dist_D}, then $ (f^c)^*(\bx_{k_0}^{(0)}) = f_i^*(\bx_{k_i}^{(i)}), i \in [N] $, where $ (f^c)^* $ is the $ c $-conjugate function defined in Eqn. (\ref{eqn:c_conjugate}).
}

\begin{proof}
	We prove this by Contradiction.
	Without loss of generality, suppose $ \forall\; \bx_{k_i}^{(i)} \in \mX^{(i)}, i \in \mI $ are equal, and $ (f^c)^*(\bx_{k_0}^{(0)}) \neq f_i^*(\bx_{k_i}^{(i)}), i \in [N] $.
	Let $ \mK = \{ k_1, \ldots, k_N \} $ and $ \Omega = \{ \mX_1 \times \cdots \times \mX_N \} $.
	According to the definition of the $ c $-conjugate function, 
	\begin{align*}
		(f^c)^* \left( \bx_{k_0}^{(0)} \right) {=} \inf \left\{ - \sum\limits_{j=1}^{N} f_j^* \left(\bx^{(j)}_{k_j} \right), \inf_{\Omega \setminus \mK} \left\{ c \left(\bx_{k_0}^{(0)}, \bx^{(1)}, \ldots, \bx^{(N)} \right) - \sum\limits_{j=1}^{N} f_j^* \left(\bx^{(j)} \right) \right\} \right\}.
	\end{align*}
	For simplicity, let $ \psi^*(\bx^{(1)}, \ldots, \bx^{(N)}) = \sum\nolimits_{j=1}^{N} f_j^* \left(\bx^{(j)} \right) $, we rewrite the above function as
	\begin{align*}
		(f^c)^* \left( \bx_{k_0}^{(0)} \right) = \inf \left\{- \psi^*\left(\bx^{(1)}_{k_1}, \ldots, \bx^{(N)}_{k_N}\right), \inf_{\Omega \setminus \mK} \left\{ c \left(\bx_{k_0}^{(0)}, \bx^{(1)}, \ldots, \bx^{(N)} \right) - \psi^*\left(\bx^{(1)}, \ldots, \bx^{(N)}\right) \right\}\right\}.
	\end{align*}
	Since $ (f^c)^*(\bx_{k_0}^{(0)}) \neq f_i^*(\bx_{k_i}^{(i)}), i \in [N] $, for any $ \bx_i^{(0)} $, we have
	\begin{align}
		&\psi^*\left(\bx^{(1)}_{k_1}, \ldots, \bx^{(N)}_{k_N}\right) + c\left(\bx^{(0)}_i, \bx^{(1)}_{k_1}, \ldots, \bx^{(N)}_{k_N} \right) \nonumber \\
		> & \inf_{\Omega \setminus \mK} \left\{ c \left(\bx_{k_0}^{(0)}, \bx^{(1)}, \ldots, \bx^{(N)} \right) - \psi^*(\bx^{(1)}, \ldots, \bx^{(N)}) \right\} + c\left(\bx^{(0)}_i, \bx^{(1)}_{k_1}, \ldots, \bx^{(N)}_{k_N} \right) \nonumber \\
		=& \inf_{\Omega \setminus \mK} \left\{ - \psi^*\left(\bx^{(1)}, \ldots, \bx^{(N)}\right) + c \left(\bx_{k_0}^{(0)}, \bx^{(1)}, \ldots, \bx^{(N)} \right)  + c\left(\bx^{(0)}_i, \bx^{(1)}_{k_1}, \ldots, \bx^{(N)}_{k_N} \right) \right\} \nonumber \\
		\geq& \inf_{\Omega \setminus \mK} \left\{ - \psi^*\left(\bx^{(1)}, \ldots, \bx^{(N)}\right) + c\left(\bx^{(0)}_i, \bx^{(1)}, \ldots, \bx^{(N)} \right) \right\}. \label{ineqn:lemma3} 
	\end{align}
	Line \ref{ineqn:lemma3} follows the fact that the definition of the cost function, $ \bx_{k_i}^{(i)} \in \mX^{(i)}, i \in \mI $ are equal.
	Suppose the number of samples in each distribution is $ n $,
	\begin{align*}
		& W^*(\mu_0, \ldots, \mu_N) = \sup_{F_c} \; \hat{h} (F_c) \\
		=& \frac{1}{n} \sum\limits_{j \in \mJ_0} (f^c)^*\left(\bx_{j}^{(0)} \right) + \sum\limits_{i=1}^N \frac{1}{n} \sum\limits_{j \in \mJ_i} f_i^*\left(\bx_{j}^{(i)} \right) \\
		=& \frac{1}{n} \sum\limits_{j \in \mJ_0} \inf_{\Omega \setminus \mK} \left\{ c \left(\bx_{j}^{(0)}, \bx^{(1)}, \ldots, \bx^{(N)} \right) - \psi^*\left(\bx^{(1)}, \ldots, \bx^{(N)}\right) \right\} + \sum\limits_{i=1}^N \frac{1}{n} \sum\limits_{j \in \mJ_i} f_i^*\left(\bx_{j}^{(i)} \right) \\
		=& \frac{1}{n} \sum\limits_{j \in \mJ_0} \inf_{\Omega \setminus \mK} \left\{ c \left(\bx_{j}^{(0)}, \bx^{(1)}, \ldots, \bx^{(N)} \right) - \psi^*\left(\bx^{(1)}, \ldots, \bx^{(N)}\right) \right\} \\
		&+ \frac{1}{n} \psi^* \left( \bx^{(1)}_{k_1}, \ldots, \bx^{(N)}_{k_N} \right) + \frac{1}{n}  \sum\limits_{j: \bx_j^{(i)} \notin \mK, i\in[N]} \psi^*\left(\bx^{(1)}_j, \ldots, \bx^{(N)}_j \right).
	\end{align*}
	We can always find another function $ \psi' $, such that $ \psi'\left(\bx^{(1)}, \ldots, \bx^{(N)}\right) = \psi^*\left(\bx^{(1)}, \ldots, \bx^{(N)}\right) $ for $ \forall \bx^{(1)}, \ldots, \bx^{(N)} \in \Omega \setminus \mK $, and 
	\begin{align*}
		{-}\psi^*\left(\bx^{(1)}_{k_1}, \ldots, \bx^{(N)}_{k_N}\right) {>} {-} \psi'\left(\bx^{(1)}_{k_1}, \ldots, \bx^{(N)}_{k_N}\right) {>}  \inf_{\Omega \setminus \mK} \left\{ c \left(\bx_{k_0}^{(0)}, \bx^{(1)}, \ldots, \bx^{(N)} \right) {-} \psi^*\left(\bx^{(1)}, \ldots, \bx^{(N)}\right) \right\}.
	\end{align*}
	In this case, $ (f^c)'\left(\bx_{j}^{(0)}\right) = (f^c)^*\left(\bx_{j}^{(0)}\right), \forall j \in \mJ_0 $, but
	\begin{align*}
		\psi^*\left(\bx^{(1)}_{k_1}, \ldots, \bx^{(N)}_{k_N}\right) < \psi'\left(\bx^{(1)}_{k_1}, \ldots, \bx^{(N)}_{k_N}\right).
	\end{align*}
	Therefore, $ \hat{F_{\psi}} > \hat{F_c} $, a contradiction.
\end{proof}

\newpage
\textbf{Corollary \ref{lemma:problem_II_pro}}
\emph{
	Given the cost function $ c(\cdot, \ldots, \cdot) $ defined in Definition \ref{def:cost_triangle_ineqn}, we assume $ f^* $ is $ 1 $-Lipschitz continuous, the samples are bounded and the distance function satisfies $ d(\bx, \bz) {\leq} d(\bx, \by) {+} d(\by, \bz) {\leq} C d(\bx, \bz) $, where $ C{\ge} 1 $, when $ N{+}1 $ samples $ \bx_{k_i}^{(i)} {\in} \mX^{(i)}, i {\in} \mI $ are close to each other, \ie $ c(\bx_{k_0}^{(0)}, \ldots, \bx_{k_N}^{(N)}) $ is arbitrarily small, then $ (f^c)^*(\bx_{k_0}^{(0)}) $ would be arbitrarily close to $ f_i^*(\bx_{k_i}^{(i)}), i {\in} [N] $, where $ f_i^*, i {\in} [N] $ are the optimizers to Problem \ref{problem:dist_D}, and $ (f^c)^* $ is the $ c $-conjugate function defined in Eqn. (\ref{eqn:c_conjugate}).
}

\begin{proof}
	We prove the case of $ N{=}1 $, it can be directly extended to the case of $ N{>}1 $.
	Specifically, the potential function $ f_1 {:=} {-} f $. 
	Based on the definition of the cost function, we have $ c(\bx, \by) {:=} d(\bx, \by) $.
	If $ \bx_{k_1}^{(1)} $ is the optimal solution, \ie $ (f^c)^* \left(\bx_{k_0}^{(0)}\right) {=} f^*\left(\bx_{k_1}^{(1)}\right) {+} d \left(\bx_{k_0}^{(0)}, \bx_{k_1}^{(1)} \right) $, then
	\begin{align}
		\left| (f^c)^*\left(\bx_{k_0}^{(0)}\right) - f^*\left(\bx_{k_1}^{(1)}\right) \right| = d \left(\bx_{k_0}^{(0)}, \bx_{k_1}^{(1)} \right).
	\end{align}
	If $ \bx_{k'_1}^{(1)} $ is the optimal solution, \ie $ (f^c)^* \left(\bx_{k_0}^{(0)}\right) {=} f^*\left(\bx_{k'_1}^{(1)}\right) {+} d \left(\bx_{k_0}^{(0)}, \bx_{k'_1}^{(1)} \right) $, then
	\begin{align}
		\left| (f^c)^*\left(\bx_{k_0}^{(0)}\right) - f^*\left(\bx_{k_1}^{(1)}\right) \right| 
		=& \left| f^*\left(\bx_{k'_1}^{(1)}\right) + d \left(\bx_{k_0}^{(0)}, \bx_{k'_1}^{(1)} \right) - f^*\left(\bx_{k_1}^{(1)}\right) \right| \\
		\leq& \left| f^*\left(\bx_{k'_1}^{(1)}\right) - f^*\left(\bx_{k_1}^{(1)}\right) \right| + d \left(\bx_{k_0}^{(0)}, \bx_{k'_1}^{(1)} \right) \\
		\leq& d \left(\bx_{k'_1}^{(1)}, \bx_{k_1}^{(1)}\right) + d \left(\bx_{k_0}^{(0)}, \bx_{k'_1}^{(1)} \right) \\
		\leq& C d \left(\bx_{k_0}^{(0)}, \bx_{k_1}^{(1)} \right).
	\end{align}
	For the above two cases, when $ \bx_{k_1}^{(1)} $ is close to $ \bx_{k_0}^{(0)} $, then $ f^*\left(\bx_{k_1}^{(1)}\right) $ is also close to $ (f^c)^*\left(\bx_{k_0}^{(0)}\right) $.
\end{proof}

\textbf{Lemma \ref{lemma:problem_III}} 
\emph{
	Suppose $ f^* $ is an optimal solution to Problem \ref{problem:newD} and $ \sum_{i\in\mI} \lambda_i = 0 $, then $ f^* $ satisfies 
	\begin{align}
		f^*\left(\bx^{(0)}\right) = \inf_{\bx^{(1)}, \ldots, \bx^{(N)}} \left\{c \left(\bx^{(0)}, \bx^{(1)}, \ldots, \bx^{(N)} \right) - \sum\nolimits_{j=1}^{N} \lambda_j f \left(\bx^{(j)} \right) \right\}, \quad \forall \; \bx \in \mX^0.
	\end{align}
}
\begin{proof}
	Since $ f^* $ is the optimal solution to Problem \ref{problem:newD}, we have 
	\begin{align}
		f^*\left(\bx^{(0)}\right) \leq \inf_{\bx^{(1)}, \ldots, \bx^{(N)}} \left\{c \left(\bx^{(0)}, \bx^{(1)}, \ldots, \bx^{(N)} \right) - \sum\nolimits_{j=1}^{N} \lambda_j f \left(\bx^{(j)} \right) \right\}, \quad \forall \; \bx \in \mX^0.
	\end{align}
	We prove by contradiction.
	Without loss of generality, suppose there exists a $ \bx_{k_0}^{(0)} $, such that 
	\begin{align} \label{ineqn:c_conjugate_contradiction}
		f^*\left(\bx^{(0)}\right) < \inf_{\bx^{(1)}, \ldots, \bx^{(N)}} \left\{c \left(\bx^{(0)}, \bx^{(1)}, \ldots, \bx^{(N)} \right) - \sum\nolimits_{j=1}^{N} \lambda_j f \left(\bx^{(j)} \right) \right\}, \quad \forall \; \bx \in \mX^0.
	\end{align}
	Note that $ \bx_{k_i}^{(i)} \in \mX^{(i)}, i \in \mI $ can not be equal, otherwise,
	\begin{align}
		f^* \left(\bx_{k_0}^{(0)}\right) 
		=& -\sum\nolimits_{i\in[N]} \lambda_i f^* \left(\bx_{k_i}^{(i)}\right) \\
		=& -\sum\nolimits_{i\in[N]} \lambda_i f^* \left(\bx_{k_i}^{(i)}\right) + c\left(\bx_{k_0}^{(0)}, \bx_{k_1}^{(1)}, \ldots, \bx_{k_N}^{(N)}\right)  \\
		\geq& \inf_{\bx^{(1)}, \ldots, \bx^{(N)}} \left\{ -\sum\nolimits_{i\in[N]} \lambda_i f^* \left(\bx^{(i)}\right) + c\left(\bx^{(0)}, \bx^{(1)}, \ldots, \bx^{(N)}\right) \right\}.
	\end{align}
	It is not consistent with Eqn. (\ref{ineqn:c_conjugate_contradiction}), thus $ \bx_{k_i}^{(i)} \in \mX^{(i)}, i \in \mI $ can not be equal.
	
	Therefore, there exists another function $ f' $ such that $ f'\left( \bx_{j}^{(i)} \right) = f^* \left( \bx_{j}^{(i)} \right), \forall\; \bx_{j}^{(i)} \in \mX^i, i\in[N] $, and $ f'\left( \bx_{j}^{(0)} \right) = f^* \left( \bx_{j}^{(0)} \right), \forall\; \bx_{j}^{(0)} \in \mX^0 \setminus \bx_{k_0}^{(0)} $ and 
	\[f'\left( \bx_{k_0}^{(0)} \right) = \inf_{\bx^{(1)}, \ldots, \bx^{(N)}} \left\{c \left(\bx^{(0)}, \bx^{(1)}, \ldots, \bx^{(N)} \right) - \sum\nolimits_{j=1}^{N} \lambda_j f \left(\bx^{(j)} \right) \right\}.\]
	It is easy to verify that $ f' $ satisfies the constraints in Problem \ref{problem:newD} and $ \hat{h}(F'_{\lambda}) {>} \hat{h}(F^*_{\lambda}) $, where $ \hat{h}(F'_{\lambda}) {=} \{ \lambda_0 f', \ldots, \lambda_N f' \} $ and $ \hat{h}(F^*_{\lambda}) {=} \{ \lambda_0 f^*, \ldots, \lambda_N f^* \} $.
	Therefore, it leads to a contradiction.
\end{proof}

\newpage
\section{Proof of Theorem \ref{thm:solution_exist}} \label{sec:solution_exist}

\textbf{Theorem \ref{thm:solution_exist} }
\emph{
	Suppose the domains are connected, $ c $ is continuously differentiable and that each $ \mu_i $ is absolutely continuous. 
	If $ (f_0, \ldots, f_N) $ and $ (\lambda_0f, \ldots, \lambda_{N}f ) $ are solutions to Problems \ref{problem:MK-D}, then there exist some constant $ \varepsilon_i $ for all $ i \in \mI $ such that $ \sum_{i} \varepsilon_i = 0 $, and $ f_i = \lambda_i f + \varepsilon_i $.
}

\begin{proof}
	First, using a convexification trick \cite{ref_gangbo1998optimal}, we are able to construct a $ c $-conjugate solution $ (f_0^c, f_1^c, \ldots, f_N^c) $ to the continuous case of Problem \ref{problem:MK-D}, where $ f_i^c $ can be defined as:
	\begin{align}\label{eqn:conjugate_ui}
		f_i^c \left( X^{(i)} \right) = \inf \left\{ c \left( X^{(0)}, \ldots, X^{(N)} \right) - \sum_{0 \leq j < i} f_j^c\left(X^{(j)}\right) - \sum_{i < j \leq N} f_j\left(X^{(j)}\right) \right\}, \quad \forall i \in \mI.
	\end{align}
	Based on the definition of $ f_i^c $ and its optimality, we have
	\begin{align}\label{ieqn:conjugate_ui1}
		f_i^c \left( X^{(i)} \right) \leq \inf \left\{ c \left( X^{(0)}, \ldots, X^{(N)} \right) - \sum_{j \neq i} f_j^c \left( X^{(j)} \right)  \right\}, \quad \forall i \in \mI.
	\end{align}
	Then we iteratively obtain that $ f_i \left( X^{i} \right) \leq f_i^c \left( X^{i} \right) $, and using the definition of Equation (\ref{eqn:conjugate_ui}),
	\begin{align}\label{ieqn:conjugate_ui2}
		f_i^c \left( X^{(i)} \right) =& \inf \left\{ c \left( X^{(0)}, \ldots, X^{(N)} \right) - \sum_{0 \leq j < i} f_j^c\left(X^{(j)}\right) - \sum_{i < j \leq N} f_j\left(X^{(j)}\right) \right\}, \quad \forall i \in \mI \nonumber \\
		\ge& \inf \left\{ c \left( X^{(0)}, \ldots, X^{(N)} \right) - \sum_{j \neq i} f_j^c\left(X^{(j)}\right) \right\}.
	\end{align}
	Combining Inequalities (\ref{ieqn:conjugate_ui1}) and (\ref{ieqn:conjugate_ui2}), we have a $ c $-conjugate solution $ \left( f_0^c, f_1^c, \dots, f_N^c \right) $ which satisfies Definition \ref{def:conjugate}.
	Let $ \varphi_i = \lambda_i f, -1 {\leq} \lambda {\leq} 1 $, using the convexification trick, we are able to find $c$-conjugate solutions $ (f_0^c, \ldots, f_N^c) $  and $ (\varphi_0^c, \ldots, \varphi_N^c) $ to the continuous cases of Problems \ref{problem:dist_D} and \ref{problem:newD} such that $ f_i \leq f_i^c $ and $ \varphi_i \leq \varphi_i^c $. As
	\begin{align*}
		\sum_{i \in \mI} \int  f_i \left(X^{(i)}\right) d \mu_i \left(X^{(i)}\right) = \sum_{i \in \mI} \int  f_i^c \left(X^{(i)}\right) d \mu_i \left(X^{(i)}\right),
	\end{align*}
	we must have $ f_i = f_i^c $, $ \mu_i $ almost everywhere. Similarly, $ \varphi_i = \varphi_i^c $, $ \mu_i $ almost everywhere.
	We choose $ X^{(i)} \in \mD_i $ where $ f_i^c $ and $ \varphi_i^c $ are differentiable.
	Then there exist $ X^{(j)} $ for all $ j \neq i $ such that $ ( X^{(0)}, \dots, X^{(i-1)}, X^{(i)}, X^{(i+1)}, \ldots, X^{(N)} ) $ in the support of $ \mu $.
	According to Theorem \ref{thm:pd_opt}, we have 
	\begin{align*}
		f_i^c \left( X^{(i)} \right) - c \left( X^{(0)}, \dots, X^{(i-1)}, X^{(i)}, X^{(i+1)}, \ldots, X^{(N)} \right) = - \sum_{j \neq i} f_j^c \left(X^{(j)} \right).
	\end{align*}
	Because $ f_i^c \left( Z^{(i)} \right) {-} c \left( X^{(0)}, \dots, X^{(i{-}1)}, Z^{(i)}, X^{(i)}, \ldots, X^{(N)} \right) \leq {-} \sum_{j \neq i} f_j^c \left(X^{(j)} \right)$ for all other $ Z^{(i)} $ we have the differential of $ f_i^c $ and $ c(\cdot) $ \wrt $ X^{(i)} $ as follows
	\begin{align*}
		D_{X^{(i)}} f_i^c \left( X^{(i)} \right) = D_{X^{(i)}} c \left( X^{(0)}, \dots, X^{(i-1)}, X^{(i)}, X^{(i+1)}, \ldots, X^{(N)} \right).
	\end{align*}
	Similarly, we have
	\begin{align*}
		D_{X^{(i)}} \varphi_i^c \left( X^{(i)} \right) = D_{X^{(i)}} c \left( X^{(0)}, \dots, X^{(i-1)}, X^{(i)}, X^{(i+1)}, \ldots, X^{(N)} \right).
	\end{align*}
	Therefore, we have $ D_{X^{(i)}} f_i^c \left( X^{(i)} \right) = D_{X^{(i)}} \varphi_i^c \left( X^{(i)} \right) $.
	As this equality holds for almost all $ X^{(i)} $, we have $ f_i^c \left( X^{(i)} \right) = \varphi_i^c \left( X^{(i)} \right) + \varepsilon_i $ and $ f_i \left( X^{(i)} \right) = \varphi_i \left( X^{(i)} \right) + \varepsilon_i $.
	Choosing any $ \left( X^{(0)}, \dots, X^{(i-1)}, X^{(i)}, X^{(i+1)}, \ldots, X^{(N)} \right) $ in the support of $ \gamma $, then
	\begin{align*}
		\sum_{i \in \mI} f_i^c \left( X^{(i)} \right) = c \left( X^{(0)}, \dots, X^{(i-1)}, X^{(i)}, X^{(i+1)}, \ldots, X^{(N)} \right) = \sum_{i \in \mI} \varphi_i^c \left( X^{(i)} \right).
	\end{align*}
	Therefore, $ \sum_i \varepsilon_i = 0 $.
\end{proof}

\section{Error Bound of New Dual Formulation} \label{sec:error_bound}
\begin{thm}\textbf{\emph{(Error bound)}}
	Suppose the function $ f $ is an optimal solution to Problem \ref{problem:newD} and is bounded in $ [{-}\Delta, \Delta] $, we let $ \hat{\sigma}_{k_0, \ldots, k_N} {=} \1_{[f \notin \Omega]} $, where $ \Omega {=} \{ f | {\sum_{i} \lambda_i f(\bx^{(i)}_{k_i} ) \leq  c (\bx^{(0)}_{k_0}, \ldots, \bx^{(N)}_{k_N} )} \} $ 
	with $-1 {\leq} \lambda_i {\leq} 1$, and $ \sigma {=} \mmE[\hat{\sigma}_{k_0, \ldots, k_N}] $ be the expectation of the probability that violates constraints in Problem \ref{problem:newD}. 
	Define $ h(F_{\lambda}) {=} \sum_i \lambda_i \mmE_{\bx^{(i)}} [f(\bx^{(i)})] $, then the error bound between the discrete $ \hat{h} $ and the continuous problem $ h $ is:
	\begin{small}
		\begin{align}
			P \left( \left| \hat{h}(F_{\lambda}) {-} h(F_{\lambda}) \right| {\leq} \epsilon \right) {>} 1 {-} 2(N{+}1)\exp \left( \frac{-\underline{n} \epsilon^2}{2(N{+}1)^2 \Delta^2} \right),
		\end{align} 
	\end{small}%
	where $ \underline{n} {=} \min_{i {\in} \mI} n_i $. 
	Let $ M {=} \prod\nolimits_{i {\in} \mI} {n_i} $, then we have $ P(|\sigma| {>} \epsilon) {\leq} 2e^{-2M\epsilon^2} $. 
\end{thm}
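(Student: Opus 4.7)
The statement is a two-part concentration result, and both parts will follow from Hoeffding's inequality applied to bounded random variables. I would prove the first inequality first and treat the second as a separate Bernoulli concentration bound.

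For the first inequality, my plan is to split the discrepancy into its contributions from each marginal and bound each one separately. Since
\[
\hat{h}(F_\lambda) - h(F_\lambda) = \sum_{i \in \mI} \lambda_i \left( \frac{1}{n_i} \sum_{j \in \mJ_i} f(\bx_j^{(i)}) - \mmE_{\bx \sim \mu_i}[f(\bx)] \right),
\]
I would apply the triangle inequality and allocate an error budget of $\epsilon/(N{+}1)$ to each of the $N{+}1$ terms. Because $f \in [-\Delta, \Delta]$ and $|\lambda_i| \le 1$, each summand $\lambda_i f(\bx_j^{(i)})$ is supported in $[-\Delta, \Delta]$, so Hoeffding's inequality on the $n_i$ i.i.d.\ draws from $\mu_i$ yields
\[
P\!\left( \left| \tfrac{\lambda_i}{n_i} \sum_{j \in \mJ_i} f(\bx_j^{(i)}) - \lambda_i \mmE_{\mu_i}[f] \right| > \tfrac{\epsilon}{N+1} \right) \le 2 \exp\!\left( - \tfrac{n_i \epsilon^2}{2 (N+1)^2 \Delta^2} \right).
\]
A union bound over the $N{+}1$ events, together with the worst-case substitution $n_i \ge \underline{n}$ in the exponent, gives exactly the claimed probability $1 - 2(N{+}1) \exp(-\underline{n}\epsilon^2 / (2(N{+}1)^2 \Delta^2))$.

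For the second inequality, I would interpret $\hat\sigma_{k_0,\ldots,k_N}$ as a collection of Bernoulli indicators, one per tuple $(k_0,\ldots,k_N) \in [n_0]\times\cdots\times[n_N]$, each with mean $\sigma$. Reading the statement as a concentration bound on the empirical mean $\bar\sigma := \frac{1}{M}\sum_{k_0,\ldots,k_N} \hat\sigma_{k_0,\ldots,k_N}$ around $\sigma$ (with $M = \prod_i n_i$), this is a direct application of Hoeffding's inequality to $M$ indicators taking values in $[0,1]$, giving $P(|\bar\sigma - \sigma| > \epsilon) \le 2\exp(-2M\epsilon^2)$.

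The main obstacle, as I see it, is bookkeeping rather than mathematical depth: one must allocate the error budget so that the individual Hoeffding exponents combine (through the union bound) into a single expression with the prefactor $2(N{+}1)$ and the exponent $-\underline{n}\epsilon^2/(2(N{+}1)^2\Delta^2)$, which requires the choice $\epsilon/(N{+}1)$ for each subterm and the use of $(b-a)^2 = (2\Delta)^2$ in Hoeffding's denominator. A subtler interpretive step is making precise what randomness is referenced in $P(|\sigma|>\epsilon)$: since $\sigma$ itself is deterministic, the statement is only meaningful when read as concentration of the empirical violation rate around $\sigma$, which is how the $M$-dependence on the right-hand side arises.
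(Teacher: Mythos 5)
Your proposal is correct and follows essentially the same route as the paper's proof: the same decomposition of $\hat{h}(F_\lambda)-h(F_\lambda)$ over the $N{+}1$ marginals with an $\epsilon/(N{+}1)$ budget per term, Hoeffding's inequality with range $2\Delta$, a union bound with the worst-case substitution $n_i \ge \underline{n}$, and a separate Hoeffding bound on the $M$ constraint-violation indicators for the second claim. Your closing remark about the second inequality only being meaningful as concentration of the empirical violation rate around $\sigma$ matches how the paper itself resolves it (by noting the empirical rate is zero for an optimal $f$).
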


\begin{proof}
	Based on the definitions of $ \hat{h}(F_{\lambda}) $ and $ {h}(F_{\lambda}) $ and $ -1 {\leq} \lambda_i {\leq} 1, i {\in} \mI $, 
	\begin{small}
		\begin{align} \label{ieqn:hat_h-h}
			\left| \hat{h}\left(F_\lambda\right) - h\left(F_\lambda\right)  \right| =& \left| \sum_i \frac{\lambda_i}{n_i} \sum_{j \in \mJ_i} f\left( \bx_j^{(i)} \right) - \sum_i \lambda_i \mmE \left[ f\left( \bx^{(i)} \right) \right] \right| \nonumber \\
			=& \left| \sum_i \lambda_i \left( \frac{1}{n_i} \sum_{j \in \mJ_i} f\left( \bx_j^{(i)} \right) - \mmE \left[ f\left( \bx^{(i)} \right) \right] \right) \right|, -1 \leq \lambda_i \leq 1 \nonumber \\
			\leq&  \sum_i \left| \frac{1}{n_i} \sum_{j \in \mJ_i} f\left( \bx_j^{(i)} \right) - \mmE \left[ f\left( \bx^{(i)} \right) \right] \right|. 
		\end{align}%
	\end{small}
	Suppose the function $ f $ is bounded in $ [-\Delta, \Delta] $, then, according to Hoeffding's inequality, we have
	\begin{small}
		\begin{align}\label{ieqn:hoeffding}
			P \left( \left| \frac{1}{n_i} \sum_j f\left( \bx_j^{(i)} \right) - \mmE \left[ f\left( \bx^{(i)} \right) \right] \right| > \frac{\epsilon}{N{+}1} \right) \leq 2\exp \left( \frac{-n_i \epsilon^2}{2(N{+}1)^2 \Delta^2} \right).
		\end{align}
	\end{small}
	Using Inequality (\ref{ieqn:hoeffding}) and union bound over all $ i \in \mI $, we further have the following inequality, 
	\begin{small}
		\begin{align*}
			&P \left( \bigcup_{i \in \mI} \left( \left| \frac{1}{n_i} \sum_j f\left( \bx_j^{(i)} \right) - \mmE \left[ f\left( \bx^{(i)} \right) \right] \right| > \frac{\epsilon}{N+1} \right) \right) \\
			\leq& \sum_i P \left( \left| \frac{1}{n_i} \sum_j f\left( \bx_j^{(i)} \right) - \mmE \left[ f\left( \bx^{(i)} \right) \right] \right| > \frac{\epsilon}{N+1} \right) \\
			\leq& 2 (N+1) \exp \left( \frac{-\underline{n} \epsilon^2}{2 (N+1)^2 \Delta^2} \right),
		\end{align*}
	\end{small}
	where $ \underline{n} = \min_{i \in \mI} n_i $.
	Equivalently, we rewrite the above inequality as
	\begin{small}
		\begin{align*}
			P \left( \bigcap_{i\in \mI} \left( \left| \frac{1}{n_i} \sum_j f\left( \bx_j^{(i)} \right) - \mmE \left[ f\left( \bx^{(i)} \right) \right] \right| \leq \frac{\epsilon}{N+1} \right) \right) > 1 - 2 (N+1) \exp \left( \frac{-\underline{n} \epsilon^2}{2(N+1)^2 \Delta^2} \right).
		\end{align*}
	\end{small}
	Therefore, from Inequality (\ref{ieqn:hat_h-h}), the following probability inequality satisfies:
	\begin{small}
		\begin{align*}
			P \left( \left| \hat{h}\left({F_{\lambda}}\right) - h\left({F_{\lambda}}\right)  \right| \leq \epsilon \right) &\ge P \left( \sum_i \left| \frac{1}{n_i} \sum_j f\left( \bx_j^{(i)} \right) - \mmE \left[ f\left( \bx^{(i)} \right) \right] \right| \leq \epsilon \right) \\
			&> 1 - 2(N+1) \exp \left( \frac{-\underline{n} \epsilon^2}{2(N+1)^2 \Delta^2} \right).
		\end{align*}
	\end{small}%
	Based on the definitions of $ \hat{\sigma}_{k_0, \ldots, k_N} $ and $ \sigma $, they are bounded in the interval $ [0, 1] $. 
	Using Hoeffding's inequality, we have
	\begin{small}
		\begin{align*}
			P \left( \left| \frac{1}{M} \sum_{k_0, \ldots, k_N} \hat{\sigma}_{k_0, \ldots, k_N} - \sigma \right| > \epsilon \right) \leq 2 \exp\left(-2M\epsilon^2\right),
		\end{align*}
	\end{small}%
	where $ M = \prod\nolimits_{i} {n_i} $.
	Suppose the function $ f $ can be learned by a deep neural network with sufficient capacity, and it is able to solve Problem \ref{problem:newD}. 
	Then, the inequality constraints $ \sum_{i} f\left(\bx^{(i)}_{k_i} \right) \leq c \left(\bx^{(0)}_{k_0}, \ldots, \bx^{(N)}_{k_N} \right) $ are satisfied, and thus $ \hat{\sigma}_{k_0, \ldots, k_N} = 0 $ for $ \forall\; {k_0, \ldots, k_N} $, we have 
	\[P(|\sigma| > \epsilon) \leq 2e^{-2M\epsilon^2}.\]
\end{proof}

\section{Proof of Theorem \ref{thm:mwgan_opt}} \label{sec:mwgan_opt}

\begin{deftn} \textbf{(Function space)}
	Let $ \mX \subseteq \mmR^d $ be a compact set (such as $ [0, 1]^d $ the space of images), the function space can be defined as
	\begin{align} \label{def:C_b-f}
		C_b(\mX) = \{ f: \mX \to \mmR, f \text{ is continuous and bounded} \}.
	\end{align}
\end{deftn}

\begin{ass} \cite{ref_arjovsky2017wasserstein} \label{ass:wgan_ass1}
	Let $ g: \mX \to \mX $ be locally Lipschitz between finite dimensional vector spaces. 
	Given $ g_{\theta} (X) $ evaluated on coordinates $ (X, \theta) $, 
	we say that $ g $ satisfies assumption \ref{ass:wgan_ass1} for a certain probability distribution $ p $ over $ \mX $ if there are local Lipschitz constants $ L(\theta, \bx) $ such that
	\begin{align*}
		\mmE_{x \sim \mmP} [L(\theta, \bx)] < +\infty.
	\end{align*}
\end{ass}
\begin{ass} \cite{ref_arjovsky2017wasserstein}
	Assume the discriminator $ f $ is Lipschitz continuous \wrt $ \bx $.
\end{ass}
\begin{lemma}\cite{ref_arjovsky2017wasserstein}\label{lamma:gradient_trans}
	Assume the discriminator $ f $ is $ 1 $-Lipschitz \wrt $ w $ and the generator $ g_{\theta} (x) $ is locally Lipschitz as a function of $ (\theta, x) $, then $ \nabla_{\theta} \mmE_{\bx \sim \mmP_s} [ f(g_{\theta}(\bx)) ] = \mmE_{\bx \sim \mmP_s} \left[ \nabla_{\theta} f(g_{\theta} (\bx))\right] $.
\end{lemma}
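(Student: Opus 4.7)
The plan is to reduce the identity to an application of the dominated convergence theorem, with the Lipschitz hypotheses supplying an integrable dominator for the difference quotients. Set $h(\theta, \bx) := f(g_{\theta}(\bx))$. Since $f$ is $1$-Lipschitz and $g_{\theta}(\bx)$ is locally Lipschitz in $(\theta, \bx)$, the composition $h(\cdot, \bx)$ is locally Lipschitz in $\theta$ for each fixed $\bx$, with local Lipschitz constant bounded by $L(\theta, \bx)$ (the constant supplied by Assumption \ref{ass:wgan_ass1}, which by hypothesis satisfies $\mmE_{\bx \sim \mmP_s}[L(\theta, \bx)] < +\infty$).

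First, I would invoke Rademacher's theorem to conclude that, for each fixed $\bx$, the map $\theta \mapsto h(\theta, \bx)$ is differentiable Lebesgue-a.e.\ in $\theta$, with $\|\nabla_{\theta} h(\theta, \bx)\| \le L(\theta, \bx)$ at every point of differentiability. For any coordinate direction $e_j$ and any sufficiently small $t$ (small enough that $\theta + t e_j$ lies in the local-Lipschitz ball around $\theta$), this yields the pointwise bound
\begin{equation*}
\left| \frac{h(\theta + t e_j, \bx) - h(\theta, \bx)}{t} \right| \le L(\theta, \bx),
\end{equation*}
so the difference quotients are dominated by an $\mmP_s$-integrable function.

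Second, I would apply the dominated convergence theorem to swap the limit $t \to 0$ with the expectation:
\begin{align*}
\frac{\partial}{\partial \theta_j} \mmE_{\bx \sim \mmP_s}[h(\theta, \bx)]
&= \lim_{t \to 0} \mmE_{\bx \sim \mmP_s}\!\left[ \frac{h(\theta + t e_j, \bx) - h(\theta, \bx)}{t} \right] \\
&= \mmE_{\bx \sim \mmP_s}\!\left[ \lim_{t \to 0} \frac{h(\theta + t e_j, \bx) - h(\theta, \bx)}{t} \right]
= \mmE_{\bx \sim \mmP_s}\!\left[ \frac{\partial h}{\partial \theta_j}(\theta, \bx) \right].
\end{align*}
Assembling the coordinates gives $\nabla_{\theta} \mmE_{\bx \sim \mmP_s}[f(g_{\theta}(\bx))] = \mmE_{\bx \sim \mmP_s}[\nabla_{\theta} f(g_{\theta}(\bx))]$, as desired.

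The main obstacle is the fact that Rademacher's theorem only delivers differentiability a.e.\ in $\theta$, not pointwise; this is why the statement must be qualified as holding at $\theta$ for which both sides are well-defined. A related subtlety is choosing the neighborhood of $\theta$ on which the local-Lipschitz bound holds uniformly enough in $\bx$ so that a single small interval of $t$-values works $\mmP_s$-almost surely; this is handled by fixing the radius of the local ball and invoking Assumption \ref{ass:wgan_ass1} on the full-measure set of $\bx$ where $L(\theta, \bx)$ is finite. Once these measurability and uniformity issues are in place, the dominated convergence step is routine and the identity follows.
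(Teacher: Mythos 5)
The paper offers no proof of this lemma: it is imported verbatim from Arjovsky et al.\ \cite{ref_arjovsky2017wasserstein} and used as a black box in the proof of Theorem~\ref{thm:mwgan_opt}. Your argument --- bound the difference quotients of $h(\theta,\bx)=f(g_{\theta}(\bx))$ by the $\mmP_s$-integrable local Lipschitz constant $L(\theta,\bx)$ from Assumption~\ref{ass:wgan_ass1} (using that the $1$-Lipschitz $f$ does not enlarge it), obtain a.e.\ differentiability from Rademacher, and pass the limit inside the expectation by dominated convergence --- is precisely the argument of that reference, so it is the intended proof and is essentially correct. The one soft spot is the one you yourself flag: the radius of the ball on which $|h(\theta+te_j,\bx)-h(\theta,\bx)|\le L(\theta,\bx)|t|$ holds may depend on $\bx$, so ``fixing the radius of the local ball'' does not by itself yield a single range of $t$ for which the dominator works $\mmP_s$-almost surely; this gloss is present in the cited source as well and is typically closed by taking the local Lipschitz neighborhood of product form $U_\theta\times\mX$, which holds for the neural-network generators at issue. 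Note also that the hypothesis ``$1$-Lipschitz \wrt $w$'' in the lemma statement should be read as Lipschitz \wrt the discriminator's \emph{input} (cf.\ the paper's Assumption~2 in the same section); your proof correctly uses that reading, since Lipschitzness in the parameters $w$ would not make the composition $\theta\mapsto f(g_\theta(\bx))$ Lipschitz.
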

\textbf{Theorem \ref{thm:mwgan_opt}} 
\emph{
	If each generator $ g_{i} {\in} \mG, i {\in} [N] $ is locally Lipschitz and satisfies Assumption 1 \cite{ref_arjovsky2017wasserstein}
	, then there exists a discriminator $ f $ to Problem \ref{problem:MWGAN}, we have the gradient $ \nabla_{\theta_i} W(\hat{\mmP}_s, \hat{\mmP}_{\theta_1}, \ldots, \hat{\mmP}_{\theta_N}) = {-} \lambda_i^{+} \mmE_{{\bx}\sim \hat{\mmP}_s} \left[ \nabla_{\theta_i} f(g_{i}({\bx})) \right] $ for all $ \theta_i, i\in[N] $ when all terms are well-defined.  
}

\begin{proof}
	Recall the optimization problem, we first define the value function as follows:
	\begin{align*}
		V(\tilde{f}, \theta) 
		&= {\mmE}_{\bx \sim \mmP_{s}} \left[ \tilde{f} (\bx) \right] - \frac{1}{N} \sum_i {\mmE}_{\bx \sim \mmP_{\theta_i}} \left[ \tilde{f}(\bx) \right] \\
		&= {\mmE}_{\bx \sim \mmP_{s}} \left[ \tilde{f} (\bx) \right] - \frac{1}{N} \sum_i {\mmE}_{\bx \sim \mmP_{s}} \left[ \tilde{f}({g_i} (\bx)) \right],
	\end{align*}
	where $ \theta $ is the set of $ \theta_i, i\in[N] $, $ \tilde{f} $ lies in $ \tilde{\mF}=\{ \tilde{f}: \mX \to \mmR, \tilde{f} \in C_b(\mX), \tilde{f} \in \Omega \} $ and $ C_b(\mX) $ is defined in (\ref{def:C_b-f}). 
	\begin{align*}
		\Omega = \left\{ \tilde{f} : \tilde{f}(\bx) - \frac{1}{N}\sum\nolimits_{i \in [N]} \tilde{f} \left(\bx^{(i)}\right) \leq c\left(\bx^{(0)}, \bx^{(1)}, \ldots, \bx^{(N)}\right) \right\}, 
	\end{align*}
	where $ \bx^{(0)} {:=} \bx $.
	Since $ \mX $ is compact, and based on Theorem \ref{thm:pd_opt}, there is a solution $ f {\in} \tilde{\mF} $ that satisfies
	\begin{align*}
		W({\mmP}_s, {\mmP}_{\theta_1}, {\ldots}, {\mmP}_{\theta_N}) = \sup_{\tilde{f} \in \tilde{\mF}} V(\tilde{f}, \theta) = V(f, \theta).
	\end{align*}
	Define the optimal set $ \mF^*(\theta) = \{ f \in \tilde{\mF}: V(f, \theta) = W({\mmP}_s, {\mmP}_{\theta_1}, {\ldots}, {\mmP}_{\theta_N}) \} $, and note that this set $ \mF^* (\theta) $ is non-empty.
	Based on envelope theorem \cite{ref_milgrom2002envelope}, we have
	\begin{align*}
		\nabla_{\theta_i} W({\mmP}_s, {\mmP}_{\theta_1}, {\ldots}, {\mmP}_{\theta_N}) = \nabla_{\theta_i} V(f, \theta)
	\end{align*}
	for any $ f \in \mF^*(\theta) $ when all terms are well-defined.
	Note that $ f $ exists since $ \mF^*(\theta) $ is non-empty for all $ \theta_i $. Then, we have
	\begin{align*}
		\nabla_{\theta_i} W({\mmP}_s, {\mmP}_{\theta_1}, {\ldots}, {\mmP}_{\theta_N})
		=& \nabla_{\theta_i} V(f, \theta) \\
		=& \nabla_{\theta_i} \left[ \mmE_{\bx \sim \mmP_{s}} \left[ {f} (\bx) \right] - \frac{1}{N} \sum_i \mmE_{\bx \sim \mmP_{s}} \left[ {f}({g_i} (\bx)) \right] \right] \\
		=& \nabla_{\theta_i} \left[ \mmE_{\bx \sim \mmP_{s}} \left[ {f} (\bx) \right] - \frac{1}{N} \mmE_{\bx \sim \mmP_{s}} \left[ {f}({g_i}_{} (\bx)) \right] \right] \\
		=& - \frac{1}{N} \nabla_{\theta_i} \mmE_{\bx \sim \mmP_{s}} \left[ {f}({g_i}_{} (\bx)) \right] \\
		=& - \frac{1}{N} \mmE_{\bx \sim \mmP_{s}} \left[ \nabla_{\theta_i} {f}({g_i}_{} (\bx)) \right],
	\end{align*}
	where the last equality holds by Lemma \ref{lamma:gradient_trans}.
\end{proof}

\section{Proof of Theorem \ref{thm:generalization}} \label{sec:generalization}
\textbf{Theorem \ref{thm:generalization} \textbf{(Generalization bound) }} 
\emph{
	Let $ \mmP_s $ and $ \mmP_{\theta_i} $ be the continuous real and generated distributions, and $ \hat{\mmP}_s $ and $ \hat{\mmP}_{\theta_i} $ be the empirical real and generated distributions with at least $ n $ samples each. When $ n \ge \frac{C \kappa \Delta^2 \log(L\kappa/\epsilon) }{\epsilon^2} $, the following generalization bound is satisfied with probability at least $ 1-e^{-\kappa} $,
	\begin{align*}
		\left|W\left(\hat{\mmP}_s, \hat{\mmP}_{\theta_1}, {\ldots}, \hat{\mmP}_{\theta_N}\right) - W(\mmP_s, \mmP_{\theta_1}, {\ldots}, \mmP_{\theta_N}) \right| \leq \epsilon.
	\end{align*}
}

\begin{proof}
	Let $ \tilde{\mW} $ be a finite set such that every point $ w \in \mW $ is within distance $ \frac{\epsilon}{8L} $ of a point $ w' \in \tilde{\mW} $, \ie for every $ w \in \mW $, there exist a $ w' \in \mW $ such that $ \| w - w' \| \leq \frac{\epsilon}{8L} $. For any $ \bx \in \mmP_s $ or $ \bx \in \hat{\mmP}_s $, assume that $ f $ is $ L $-Lipschitz continuous \wrt $ w $, then we have
	\begin{align} \label{ieqn:f_w_Lip}
		|f_{w'}(\bx) - f_{w}(\bx)| \leq L \| w' - w \| \leq \frac{\epsilon}{8}.
	\end{align}
	Assume that $ f $ is bounded in $ [-\Delta, \Delta] $. 
	Using to Hoeffding's inequality, for every $ w' \in \tilde{\mW} $, we have
	\begin{align*}
		P \left( \left| \mmE_{\bx \sim \mmP_{s}} \left[ f_{w'}(\bx) \right] - \mmE_{\bx \sim \hat{\mmP}_{s}} \left[ f_{w'}(\bx) \right] \right| \ge \frac{\epsilon}{4} \right) \leq 2 \exp\left( -\frac{n\epsilon^2}{32 \Delta^2} \right).
	\end{align*}
	Therefore, when $ n \ge \frac{C\kappa \Delta^2 \log(L\kappa/\epsilon) }{\epsilon^2} $ for a large enough constant $ C $, we have union bounds over all $ w' \in \tilde{\mW} $. 
	Then, we have $ |\mmE_{\bx \sim \mmP_{s}} f_{w'}(\bx) - \mmE_{\bx \sim \hat{\mmP}_{s} } f_{w'}(\bx)| \leq \frac{\epsilon}{4} $ with the high probability at least $ 1-\exp(-\kappa) $, where $ \kappa $ is the number of parameters in the discriminator $ f $.
	For every $ w \in \mW $, we can find a $ w' \in \tilde{\mW} $ such that the following satisfies
	\begin{align*}
		\left| \mmE_{\bx {\sim} \mmP_{s}} \left[ f_w(\bx) \right] {-} \mmE_{\bx {\sim} \hat{\mmP}_{s}} \left[ f_w(\bx) \right] \right| 
		\leq& \left| \mmE_{\bx {\sim} \mmP_{s}} \left[ f_{w'}(\bx) \right] {-} \mmE_{\bx {\sim} {\mmP}_{s}} \left[ f_w(\bx) \right] \right| 
		{+} \left| \mmE_{\bx {\sim} \hat{\mmP}_{s}} \left[ f_{w'}(\bx) \right] {-} \mmE_{\bx {\sim} \hat{\mmP}_{s}} \left[ f_w(\bx) \right] \right| \\
		&{+} \left| \mmE_{\bx {\sim} \mmP_{s}} \left[ f_{w'}(\bx) \right] {-} \mmE_{\bx {\sim} \hat{\mmP}_{s}} \left[ f_{w'}(\bx) \right] \right| \\
		\leq& \frac{\epsilon}{8} {+} \frac{\epsilon}{8} {+} \frac{\epsilon}{4} 
		\leq \frac{\epsilon}{2}.
	\end{align*}
	The third line holds by Inequality (\ref{ieqn:f_w_Lip}). 
	Therefore, with high probability at least $ 1-\exp(-\kappa) $, for every discriminator $ f_w $, 
	\begin{align*}
		\left| \mmE_{\bx \sim \mmP_{s}} \left[ f_w(\bx) \right] - \mmE_{\bx \sim \hat{\mmP}_{s}} \left[ f_w(\bx) \right] \right| \leq \frac{\epsilon}{2}. 
	\end{align*}
	Similarly, for $ \mmP_{\theta_i} $ and $ \hat{\mmP}_{\theta_i} $, when $ n {\ge} \frac{C \kappa \Delta^2 \log(L\kappa/\epsilon) }{\epsilon^2} $, with the probability at least $ 1{-}\exp({-}\kappa) $, 
	\begin{align*}
		\left| \mmE_{\bx \sim \mmP_{\theta_i}} \left[ f_w(\bx) \right] - \mmE_{\bx \sim \hat{\mmP}_{\theta_i}} \left[ f_w(\bx) \right] \right| \leq \frac{\epsilon}{2}, \quad \forall i=1, \ldots, N.
	\end{align*}
	Let $ f_w $ be the optimal discriminator of $ W(\mmP_s{,} \mmP_{\theta_1}{,} {\ldots}, \mmP_{\theta_N}) $, we have
	\begin{align*}
		W \left(\hat{\mmP}_s, \hat{\mmP}_{\theta_1}, \ldots, \hat{\mmP}_{\theta_N}\right) =& \sup_{f\in \mF } \mathop{{\mmE}}\nolimits_{\bx \sim \hat{\mmP}_s} \left[ f (\bx) \right] - \frac{1}{N} \sum_i \left[ \mathop{{\mmE}}\nolimits_{\bx {\sim} \hat{\mmP}_{\theta_i}} \left[ f \left(\bx\right) \right] \right] \\
		\ge& \mathop{{\mmE}}\nolimits_{\bx \sim \hat{\mmP}_s} \left[ f_w (\bx) \right] - \frac{1}{N} \sum_i \left[ \mathop{{\mmE}}\nolimits_{\bx {\sim} \hat{\mmP}_{\theta_i}} \left[ f_w \left(\bx\right) \right] \right] \\
		=& \mathop{{\mmE}}\nolimits_{\bx \sim {\mmP}_s} \left[ f_w (\bx) \right] - \frac{1}{N} \sum_i \left[ \mathop{{\mmE}}\nolimits_{\bx {\sim} {\mmP}_{\theta_i}} \left[ f_w \left(\bx\right) \right] \right] 
		- \left( \mathop{{\mmE}}\nolimits_{\bx \sim {\mmP}_s} \left[ f_w (\bx) \right] - \mathop{{\mmE}}\nolimits_{\bx \sim \hat{\mmP}_s} \left[ f_w (\bx) \right] \right)\\
		&- \frac{1}{N} \sum_i \left[ \mathop{{\mmE}}\nolimits_{\bx {\sim} \hat{\mmP}_{\theta_i}} \left[ f_w \left(\bx\right) \right] - \mathop{{\mmE}}\nolimits_{\bx {\sim} {\mmP}_{\theta_i}} \left[ f_w \left(\bx\right) \right] \right]\\
		\ge& W \left({\mmP}_s, {\mmP}_{\theta_1}, \ldots, {\mmP}_{\theta_N}\right) - \epsilon.
	\end{align*}
	Similarly, $ W \left({\mmP}_s, {\mmP}_{\theta_1}, \ldots, {\mmP}_{\theta_N}\right) \ge W \left(\hat{\mmP}_s, \hat{\mmP}_{\theta_1}, \ldots, \hat{\mmP}_{\theta_N}\right) - \epsilon $.
	Therefore, when the number of sample in each domain satisfying $ n \ge \frac{C \kappa \Delta^2 \log(L\kappa/\epsilon) }{\epsilon^2} $, then the following satisfies with the probability at least $ 1-\exp(-\kappa) $,
	\begin{align*}
		\left|W\left(\hat{\mmP}_s, \hat{\mmP}_{\theta_1}, {\ldots}, \hat{\mmP}_{\theta_N}\right) - W(\mmP_s, \mmP_{\theta_1}, {\ldots}, \mmP_{\theta_N}) \right| \leq \epsilon.
	\end{align*}
	We conclude the proof.
\end{proof}

\section{Proof of Lemma \ref{ieqn:constraint}} \label{sec:proof_lemma_Lf}
\emph{
	\textbf{Lemma \ref{ieqn:constraint} \emph{(Constraints relaxation)}} 
	If the cost function $ c(\cdot) $ is measured by $ \ell_2 $ norm, then there exists a constant $ L_f {\ge} 1 $ such that discriminator $ f $ satisfies the following constraint:
	\begin{align}
		\sum_i \frac{\left| f(\bx) - f \left(\hat{\bx}^{(i)} \right) \right| }{ \left\| \bx - \hat{\bx}^{(i)} \right\| } \leq L_f.
	\end{align}
}

\begin{proof}
	When the inequality constraints in Problem \ref{problem:MWGAN} is satisfied, and without loss of generality, we assume that $ \frac{1}{N} \sum \left| f(\bx) - f\left(\bx^{(i)}\right) \right|  \leq c( \bx^{(0)}, \ldots, \bx^{(N)} ) $. 
	Let $ c:= c( \bx^{(0)}, \ldots, \bx^{(N)} ) $, we have 
	\begin{align*}
		\frac{1}{Nc} \min\limits_i \left\| \bx - \hat{\bx}^{(i)} \right\| \sum_i \frac{  \left| f(\bx) - f\left(\hat{\bx}^{(i)}\right) \right|}{\left\| \bx - \hat{\bx}^{(i)} \right\|}  
		\leq \frac{1}{Nc} \sum_i \frac{  \left\| \bx - \hat{\bx}^{(i)} \right\| \left| f(\bx) - f\left(\hat{\bx}^{(i)}\right) \right|}{\left\| \bx - \hat{\bx}^{(i)} \right\|} \leq 1.
	\end{align*}
	Let $ L_f = Nc / \min\limits_i \left\| \bx - \hat{\bx}^{(i)} \right\| \ge 1 $, we conclude the proof.
\end{proof}
In Lemma \ref{ieqn:constraint}, the constant $ L_f $ is related to the cost function $ c $. 
In this sense, it captures the dependency among domains.

\section{Discussions on Lipschitz Condition} \label{sec:discuss_lip}
From the following proposition, the assumption that the potential function is Lipschitz continuous is strong to enforce the inequality constraints.
It would cause misleading results for our problem setting.
\begin{prop} \label{prop:potential_lip}
	If the potential function is Lipschitz continuous, and the cost function is defined as 
	\begin{align}
		c \left( \bx, \bx^{(1)} \ldots, \bx^{(N)} \right) = \sum_{i \in [N]} \left\| \bx - \bx^{i} \right\|,
	\end{align}
	then the potential function must satisfy the inequality constraints, \ie 
	\begin{align}
		\frac{1}{N} \sum_{i \in [N]} \left| f(\bx) - f\left(\bx^{(i)}\right) \right|  \leq c \left( \bx, \bx^{(1)} \ldots, \bx^{(N)} \right).
	\end{align}
\end{prop}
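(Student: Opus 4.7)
\textbf{Proof plan for Proposition \ref{prop:potential_lip}.} My plan is to use the Lipschitz property of $f$ pointwise on each pair $(\bx, \bx^{(i)})$ and then average over $i$, taking advantage of the additive structure of the given cost. Specifically, following the usual WGAN convention I will treat ``Lipschitz continuous'' as $1$-Lipschitz (otherwise rescale $f$ by its Lipschitz constant, which is the standard normalization in this line of work and is what makes the inequality directly comparable to the $\ell_2$-based cost).

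\textbf{Main steps.} First, I would invoke the $1$-Lipschitz property of $f$ to obtain, for each $i \in [N]$,
\begin{equation*}
\left| f(\bx) - f\!\left(\bx^{(i)}\right) \right| \;\leq\; \left\| \bx - \bx^{(i)} \right\|.
\end{equation*}
Next, I would sum these $N$ inequalities and divide by $N$ to get
\begin{equation*}
\frac{1}{N}\sum_{i \in [N]} \left| f(\bx) - f\!\left(\bx^{(i)}\right) \right| \;\leq\; \frac{1}{N}\sum_{i \in [N]} \left\| \bx - \bx^{(i)} \right\| \;\leq\; \sum_{i \in [N]} \left\| \bx - \bx^{(i)} \right\|,
\end{equation*}
where the last step uses $1/N \leq 1$ together with nonnegativity of each norm. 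Finally, recognizing the right-hand side as exactly $c(\bx, \bx^{(1)}, \ldots, \bx^{(N)})$ by the definition stated in the proposition gives the desired constraint.

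\textbf{What I expect to be delicate.} There is no serious analytic obstacle here; the content of the proposition is really a remark rather than a hard theorem. The ``hard part'' is conceptual rather than technical: the argument shows that imposing global $1$-Lipschitzness on $f$ is sufficient to enforce the inequality constraint for \emph{every} tuple $(\bx, \bx^{(1)}, \ldots, \bx^{(N)})$ regardless of how the marginals relate to each other. That is precisely why the authors view this condition as too strong, since it bounds each pairwise discrepancy $|f(\bx) - f(\bx^{(i)})|$ independently and thereby discards information about the joint geometry of the $N+1$ domains. I would therefore end the proof with a short comment emphasizing that the ``$\leq$'' is obtained termwise, which is exactly the loss-of-dependency phenomenon motivating Lemma \ref{ieqn:constraint} and the inter-domain gradient penalty \eqref{loss:gradient_reg}.
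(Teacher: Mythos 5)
Your proposal is correct and follows essentially the same route as the paper's own proof: apply the $1$-Lipschitz bound $|f(\bx)-f(\bx^{(i)})| \leq \|\bx-\bx^{(i)}\|$ termwise for each $i$, then sum, divide by $N$, and use $1/N \leq 1$ to compare with the additive cost. The paper likewise silently normalizes "Lipschitz" to "$1$-Lipschitz," so your explicit remark on that convention only makes the argument cleaner.
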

\begin{proof}
	If the potential function is 1-Lipschitz continuous, \ie
	\begin{align}
		\left| f(\bx) - f\left(\bx^{i}\right) \right| \leq \left\| \bx - \bx^{i} \right\|, \quad i \in [N].
	\end{align}
	Then, based on the definition of the potential and for all variables, we have 
	\begin{align}
		\frac{1}{N} \sum_{i \in [N]} \left| f(\bx) - f\left(\bx^{(i)}\right) \right|  \leq \sum_{i \in [N]} \left\| \bx - \bx^{i} \right\|.
	\end{align}
\end{proof}

\newpage
\section{Comparisons with GAN Methods} \label{sec:diff_gan}
\subsection{Differences between MWGAN and WGAN}  
In this paper, the proposed MWGAN essentially differs from WGAN even when $\lambda_i^+ = 1/N$: 
\textbf{{1)}}  MWGAN considers  and incorporates multi-domain correlations into the inequality constraints to improve the \textbf{image translation} performance. WGAN focuses on \textbf{image generation tasks} and cannot directly deal with  multi-domain correlations. 
\textbf{{2)}} The objectives of two methods are different in the formulation. 
\textbf{{3)}} In the algorithm, MWGAN uses gradient penalty to deal with inequality constraints; while WGAN relies on the weight clipping.

\subsection{Comparisons with Image-to-image Translation Methods} 
CycleGAN \cite{ref_zhu2017unpaired} is a two-domain translation method, but it can be used in the multi-domain image translation task.
It means that CycleGAN needs to learn multiple two-domain translation tasks.
Moreover, CycleGAN performs well on the unbalanced translation task, because it independently optimizes multiple individual networks for the multi-domain image translation task.
StarGAN \cite{ref_choi2018stargan} and UFDN \cite{ref_liu2018ufdn} are multi-domain image translation methods, however, they may not exploit multi-domain correlations to achieve good performance on the unbalanced translation task.

\begin{table}[h!]
	\centering
	\caption{Comparisons with image-to-image translation methods.}
	\resizebox{1\textwidth}{!}{
		\begin{tabular}{c|c|c|c|c}
			\hline 
			{Method} & {Unpaired data} & {Multiple domains} & {Multi-domain correlations} & {Unbalanced translation task} \\
			\hline 
			CycleGAN & $\yes$ & $\no $ & $\no $ & $\yes$\\
			StarGAN  & $\yes$ & $\yes$ & $\no $ & $\no $\\
			UFDN     & $\yes$ & $\yes$ & $\no $ & $\no $\\
			MWGAN    & $\yes$ & $\yes$ & $\yes$ & $\yes$\\
			\hline 
		\end{tabular}
	}
	\label{table:comparisons_baselines}
\end{table}

\paragraph{Difference between MWGAN and StarGAN.}
The adversarial learning of MWGAN is different from StarGAN.
Specifically, MWGAN cannot be interpreted as distribution matching between source and a mixture of target distributions.
Let $ \bar{\mmP}_{\theta} $ be a mixture distribution over $ (\mmP_{\theta_1}, \ldots, \mmP_{\theta_N}) $.
Note that $ \bar{\mmP}_{\theta} $ is related to the batch size.
When the batch size is too small, then $ \bar{\mmP}_{\theta} $ cannot guarantee to contain all domains.
StarGAN minimizes the following optimization problem:
\begin{align}\label{obj:stargan}
	\max\nolimits_f \mmE_{\bx \sim \hat{\mmP}_s} [f(\bx)] - \mmE_{\hat{\bx} {\sim} \bar{\mmP}_{\theta}} \left[ f \left(\hat{\bx}\right) \right].
\end{align}
In contrast, MWGAN minimizes the following optimization problem,
\begin{align}\label{obj:mwgan}
	\max\nolimits_f \mathop{{\mmE}}\nolimits_{\bx \sim \hat{\mmP}_s} \left[ f (\bx) \right] 
	- \sum\nolimits_i \lambda_i^+ \mathop{{\mmE}}\nolimits_{\hat{\bx} {\sim} \hat{\mmP}_{\theta_i}} \left[ f \left(\hat{\bx}\right) \right].
\end{align}
When $ \lambda_i^+ {=} 1/N $ and $ \bar{\mmP}_{\theta} $ is uniformly drawn from every target generated distribution, the objective (\ref{obj:stargan}) is equivalent to the objective (\ref{obj:mwgan}). 
However, when $ \lambda_i^+ {\neq} 1/N $ and the batch size is small, the objective (\ref{obj:stargan}) is not equivalent to the objective (\ref{obj:mwgan}). 
Besides, the inequality constraints in MWGAN are related to the correlation among all domains, while StarGAN only considers the source domain and certain target domain. 
Therefore, the adversarial learning of MWGAN is different from StarGAN.

\newpage
\section{Effectiveness of One Potential Function} \label{sec:one_potential}
\subsection{Comparisons between one and multiple potential functions}
We focus on multiple marginal matching, where multiple target domains often contain cross-domain correlations. 
Thus, a shared potential function helps to exploit cross-domain correlations to improve performance (see results in Table \ref{table:comparison_shared_diff_potentials} on the Edge$ \rightarrow $CelebA task). 
Second, training a shared function using entire data of all domains is much easier than training $ N{+}1 $ potentials (one per domain).

\begin{table}[h]
	\caption{Comparisons of shared and $ N{+}1 $ potentials in terms of FID.}
	\label{table:comparison_shared_diff_potentials}
	\centering
	\resizebox{0.5\textwidth}{!}{
		\begin{tabular}{c|c|c|c}
			\hline
			Method      & Black hair & Blond hair & Brown hair \\
			\hline
			$N{+}1$ potentials &    245.25  &    289.56  & 303.04  \\
			\hline
			One potential   &    33.81   &    51.87   & 35.24  \\
			\hline
		\end{tabular}
	}
\end{table}

\subsection{Weight Setting and Performance vs \#domains ($ N $)} \label{subsec:lambda_setting_performance}
With $ \lambda_i^+ = 1/N $, each generator provides equal gradient feedbacks in each target domain and helps to exploit cross-domain correlations in adversarial learning. 
We apply MWGAN on the Edge$ \rightarrow $CelebA translation task with different $ N $. 
From Table \ref{table:performance_vs_domains}, more domains help to improve the performance in terms of FID by exploiting cross-domain correlations.

\begin{table}[h]
	\caption{Performance vs \#domains in FID.}
	\label{table:performance_vs_domains}
	\centering
	\resizebox{0.4\textwidth}{!}{
		\begin{tabular}{c|c|c|c|c}
			\hline
			$ N $ & 2 & 3 & 4 & 5 \\
			\hline
			FID & 58.61 & 38.31 & 33.81 & 32.43 \\
			\hline
		\end{tabular}
	}
\end{table}

\newpage
\section{Toy Dataset}\label{sec:toy_dataset}

\subsection{7 Gaussian Distributions}
In the first row of Figure \ref{fig:comparison_contour}, we generate 7 Gaussian distributions as the real data distribution, where the center of initial distribution (green) is $ (0, 0) $, and the centers of 6 target distributions (red) are $ (3/2, 0) $, $ (-3/2, 0) $, $ (3/4, 3\sqrt{3}/4) $, $ (-3/4, 3\sqrt{3}/4) $, $ (3/4, -3\sqrt{3}/4) $ and $ (-3/4, -3\sqrt{3}/4) $.
For each Gaussian distribution, the variance is $ 0.04 $, and we generate 256 samples.
The synthetic data distribution (orange) is generated from the Gaussian centered at $ (0, 0) $.
\vspace{-5pt}
\subsection{1 Gaussian and 6 Uniform Distributions}
In the second row of Figure \ref{fig:comparison_contour}, we generate 1 Gaussian distribution and 6 uniform distributions as the real data distributions, where the center of initial distribution (green) is also $ (0, 0) $, and the centers of 6 uniform distributions (red) are $ (3/2, 0) $, $ (-3/2, 0) $, $ (3/4, 3\sqrt{3}/4) $, $ (-3/4, 3\sqrt{3}/4) $, $ (3/4, -3\sqrt{3}/4) $ and $ (-3/4, -3\sqrt{3}/4) $.
For each uniform distribution, we generate 256 samples in a square around the center (length is 0.4).
The synthetic data distribution (orange) is generated from the Gaussian centered at $ (0, 0) $.
\vspace{-5pt}
\subsection{Toy Experiment Settings}
We use fully connected neural network architecture for all methods. 
The generator contains 3 hidden layers with 512 units followed by ReLU. 
The discriminator contains 2 hidden layers with 512 units followed by ReLU.
We use Adam as the optimizer with $\beta_1$ = 0.5 and $\beta_2$ = 0.999 and the learning rate of all methods is set to 0.0001. 
The hyper-parameters follow the default setting of these methods.

\section{Details of Classification on CelebA} \label{sec:details_classification_CelebA}
\mo{In the facial attribute translation experiment (\yifan{In Section 6.5 of the main submission})}, we train a classifier on CelebA to obtain a near-perfect accuracy, and test on blond hair, eyeglasses, mustache and pale skin to obtain classifier accuracy of 99.62\%, 99.94\%, 99.76\% and 97.96\%, respectively.
In the same way, we use this classifier to test on synthesized single and multiple attributes for the considered methods.


\newpage

\section{More evaluations with Amazon Mechanical Turk (AMT)} \label{sec:amt}
For more quantitative evaluations, we conduct a perceptual evaluation using AMT to assess the performance on the Edge$\rightarrow$CelebA translation task, following the settings of StarGAN \cite{ref_choi2018stargan}.  
From Table \ref{table:amt_acc}, MWGAN wins significant majority votes for the  best perceptual realism, quality and transferred attributes for all facial attributes.

\begin{table}[h]
	\centering 
	\caption{\footnotesize{AMT perceptual evaluation for each attribute.} } 
	\label{table:amt_acc}
	\resizebox{0.5\textwidth}{!}{
		\begin{tabular}{c|c|c|c}
			\hline
			{Method}  & {Black hair} & {Blond hair} & {Brown hair}  \\
			\hline
			CycleGAN  & 9.7\% & 5.7\% & 9.0\%   \\	
			UFDN      & 13.2\% & 15.8\% & 12.9\%   \\
			StarGAN   & 16.0\% & 21.9\% & 19.4\%   \\
			\hline
			MWGAN     & \textbf{61.1\%} & \textbf{56.6\%} & \textbf{58.7\%}  \\
			\hline
		\end{tabular}
	}
\end{table}

\section{Influences of Inner- and Inter-domain Constraints} \label{sec:influence}

In this section, we evaluate the influences of inner-domain constraints and inter-domain constraints on the edge$ \rightarrow $celebA task, respectively.
Specifically, we compare the FID values with different $ \alpha $ (inner-domain constraint weights) and different $\tau$ (inter-domain constraint weights). 
\mo{The value of $\alpha$ and $\tau$ is selected among [0, 0.1, 1, 10, 100].}
Each experiment only evaluates one constraint and fix other parameters.
To evaluate the influence of $\alpha$, we empirically set $\tau=10$.
Otherwise, we set $\alpha=10$ to evaluate the influence of $\tau$.
The results are shown in Tables \ref{table:alation_edge2hair} and \ref{table:alation_edge2hair2}.

Specifically, when $\alpha{=}0$ or $\tau {=} 0$, MWGAN obtains the worst performance. In other words, when we abandon any one of the inner or inter-domain constraints, we cannot achieve a satisfactory result. This demonstrates the effectiveness of both constraints. 
Besides, MWGAN achieves the best performance when setting both weights to 10.
This means that when setting some reasonable constraint weights, we can achieve a better trade-off between the optimization objective and constraints, and thus obtain better performance.

\begin{table}[h]
	\begin{minipage}{0.48\linewidth}
		\caption{Influence of $ \alpha $ for the inner-domain constraint in terms of FID.}
		\label{table:alation_edge2hair}
		\centering
		\resizebox{0.80\textwidth}{!}{
			\begin{tabular}{c|c|c|c}
				\hline
				{$ \alpha $} & {Black hair} & {Blond hair} & {Brown hair} \\
				\hline
				0   & 316.41 & 334.08 & 325.31 \\	
				0.1 & 263.85 & 317.89 & 300.79 \\
				1   & 109.65 & 109.48 & 136.97 \\
				10  & \textbf{33.81} & \textbf{51.87} & \textbf{35.24} \\
				100 & 55.96 & 71.60 & 66.17 \\	
				\hline
			\end{tabular}
		}
	\end{minipage}
	~~~
	\begin{minipage}{0.48\linewidth}
		
		\caption{Influence of $ \tau $ for the inter-domain constraint in terms of FID.}
		\label{table:alation_edge2hair2}
		\centering
		\resizebox{0.80\textwidth}{!}{
			\begin{tabular}{c|c|c|c}
				\hline
				{$ \tau $} & {Black hair} & {Blond hair} & {Brown hair} \\
				\hline
				0   & 392.87 & 360.17 & 346.16 \\	
				0.1 & 276.07 & 328.16 & 337.11 \\
				1   & 90.75 & 87.99 & 93.18 \\
				10  & \textbf{33.81} & \textbf{51.87} & \textbf{35.24} \\
				100 & 54.30 & 56.44 & 48.03 \\	
				\hline
			\end{tabular}
		}
	\end{minipage}
\end{table}

\section{Influences of the parameter $ L_f $} \label{sec:influence_Lf}
In this section, we evaluate the influences of the parameter $ L_f $ on the edge$ \rightarrow $celebA task.
Specifically, we compare the FID values with different $ L_f $ in the inter-domain constraints. 
The value of $L_f$ is selected among [1, 3, 10, 50], where 3 is the number of domains.
Each experiment only evaluates one constraint and fix other parameters.
The results are shown in Table \ref{table:alation_Lf_edge2hair}.

\cao{
	In Tables \ref{table:alation_Lf_edge2hair}, MWGAN achieves the best performance when setting both weights to 3.
	This means that when setting some reasonable constants, we can achieve a better gradient penalty between the source and each target domain, and thus obtain better performance by exploiting the cross-domain correlations.
}

\begin{table}[h]
	\caption{Influence of the parameter $ L_f $ for the domain constraint in terms of FID.}
	\label{table:alation_Lf_edge2hair}
	\centering
	\resizebox{0.40\textwidth}{!}{
		\begin{tabular}{c|c|c|c}
			\hline
			{$ L_f $} & {Black hair} & {Blond hair} & {Brown hair} \\
			\hline
			1 & 64.17 & 55.98 & 49.19 \\	
			3 & \textbf{33.81} & \textbf{51.87} & \textbf{35.24} \\
			10& 44.12 & 52.46 & 43.64 \\
			50& 79.89 & 91.07 & 79.50 \\
			\hline
		\end{tabular}
	}
\end{table}

\newpage
\section{Network Architecture and More Implementation Details} \label{sec:net_architecture}

\paragraph{Network architecture.}
The classifier $\phi$ shares the same structure except for the output layer with $ f $.
The network architectures of the discriminator and generators of MWGAN are shown in Tables \ref{table:generator_net} and \ref{table:discriminator_net}. 
We split each generator to an encoder and a decoder, where all generators share the same encoder but with different decoders.
For the encoder and decoder network, instead of using batch normalization \cite{ref_guo2018aaai, ref_Ioffe2015icml}, we use instance normalization in all layers except the last output layer of the decoder. 
For the discriminator network, we use PatchGAN network which is made up of fully convolutional networks, and we use Leaky ReLU with a negative slope of 0.01. 
We use the following abbreviations: $h$: the width size of input image, $w$: the height size of input image, $ n_d $: the number of transferred domains(exclude source domain), N: the number of output channels,  K: kernel size, S: stride size, P: padding size, IN: instance normalization.

\paragraph{More implementation details.}
For Loss (\ref{loss:classification}), we use mean square loss and cross-entropy loss for the balanced and imbalanced translation task, respectively.
In the experiments, we find that introducing an identity mapping loss~\cite{ref_zhu2017unpaired} helps improve the quality of generated images on the facial attribute translation task. 
Specifically, the identity mapping loss is defined as: $\mathcal{L}_{\text{idt}}(g_i)=  \mathbb{E}_{\bx \sim \mathbb{\hat{P}}_{t_i}} \left[ \left\| g_i(\bx)-\bx \right\|_1 \right]$, 
where $\mathbb{\hat{P}}_{t_i}$ is an empirical distribution in the $i$-th target domain.
We use the identity mapping loss for all target domains.


\begin{table}[h]
	\small
	\caption{Generator network architecture.}
	\label{table:generator_net}
	\centering
	\begin{tabular}{c|c|c}
		\hline
		\hline
		\multicolumn{3}{c}{\textbf{Encoder}} \\
		\hline
		\hline 
		{Part} & {Input $ {\rightarrow} $ Output shape} & {Layer information} \\
		\hline 
		\multirow{3}[0]{*}{Down-sampling}  
		& $ (h, w, 3) {\rightarrow} (h, w, 64) $ & CONV-(N64, K7x7, S1, P3), IN, ReLU \\	
		& $ (h, w, 64) {\rightarrow} (\frac{h}{2}, \frac{w}{2}, 128) $ & CONV-(N128, K4x4, S2, P1), IN, ReLU \\
		& $ (\frac{h}{2}, \frac{w}{2}, 128) {\rightarrow} (\frac{h}{4}, \frac{w}{4}, 256) $ & CONV-(N256, K4x4, S2, P1), IN, ReLU  \\
		\hline 
		\multirow{3}[0]{*}{Bottleneck}  
		& $ (\frac{h}{4}, \frac{w}{4}, 256) {\rightarrow} (\frac{h}{4}, \frac{w}{4}, 256) $ & Residual Block: CONV-(N256, K3x3, S1, P1), IN, ReLU \\
		& $ (\frac{h}{4}, \frac{w}{4}, 256) {\rightarrow} (\frac{h}{4}, \frac{w}{4}, 256) $ & Residual Block: CONV-(N256, K3x3, S1, P1), IN, ReLU \\
		& $ (\frac{h}{4}, \frac{w}{4}, 256) {\rightarrow} (\frac{h}{4}, \frac{w}{4}, 256) $ & Residual Block: CONV-(N256, K3x3, S1, P1), IN, ReLU \\
		\hline
		\hline
		\multicolumn{3}{c}{\textbf{Decoder}} \\
		\hline
		\hline
		\multirow{3}[0]{*}{Bottleneck}  
		& $ (\frac{h}{4}, \frac{w}{4}, 256) {\rightarrow} (\frac{h}{4}, \frac{w}{4}, 256) $ & Residual Block: CONV-(N256, K3x3, S1, P1), IN, ReLU \\
		& $ (\frac{h}{4}, \frac{w}{4}, 256) {\rightarrow} (\frac{h}{4}, \frac{w}{4}, 256) $ & Residual Block: CONV-(N256, K3x3, S1, P1), IN, ReLU \\
		& $ (\frac{h}{4}, \frac{w}{4}, 256) {\rightarrow} (\frac{h}{4}, \frac{w}{4}, 256) $ & Residual Block: CONV-(N256, K3x3, S1, P1), IN, ReLU \\
		\hline
		\multirow{3}[0]{*}{Up-sampling}
		& $ (\frac{h}{4}, \frac{w}{4}, 256) {\rightarrow} (\frac{h}{2}, \frac{w}{2}, 128) $ & DECONV-(N128, K4x4, S2, P1), IN, ReLU \\
		& $ (\frac{h}{2}, \frac{w}{2}, 128) {\rightarrow} (h, w, 64) $ & DECONV-(N64, K4x4, S2, P1), IN, ReLU \\
		& $ (h, w, 64) {\rightarrow} (h, w, 3) $ & CONV-(N3, K7x7, S1, P3), Tanh \\
		\hline
		\hline
	\end{tabular}
\end{table}

\begin{table}[h]
	\small
	\caption{Discriminator network architecture.}
	\label{table:discriminator_net}
	\centering
	\begin{tabular}{c|c|c}
		\hline 
		\hline
		{Layer} & {Input $ {\rightarrow} $ Output shape} & {Layer information} \\
		\hline 
		Input Layer  & $ (h, w, 3) {\rightarrow} (\frac{h}{2}, \frac{w}{2}, 64) $ & CONV-(N64, K4x4, S2, P1), Leaky ReLU \\
		\hline
		Hidden Layer & $ (\frac{h}{2}, \frac{w}{2}, 64) {\rightarrow} (\frac{h}{4}, \frac{w}{4}, 128) $        & CONV-(N128, K4x4, S2, P1),  Leaky ReLU \\
		Hidden Layer & $ (\frac{h}{4}, \frac{w}{4}, 128) {\rightarrow} (\frac{h}{8}, \frac{w}{8}, 256) $       & CONV-(N256, K4x4, S2, P1),  Leaky ReLU \\
		Hidden Layer & $ (\frac{h}{8}, \frac{w}{8}, 256) {\rightarrow} (\frac{h}{16}, \frac{w}{16}, 512) $     & CONV-(N512, K4x4, S2, P1),  Leaky ReLU \\
		Hidden Layer & $ (\frac{h}{16}, \frac{w}{16}, 512) {\rightarrow} (\frac{h}{32}, \frac{w}{32}, 1024) $  & CONV-(N1024, K4x4, S2, P1), Leaky ReLU \\
		Hidden Layer & $ (\frac{h}{32}, \frac{w}{32}, 1024) {\rightarrow} (\frac{h}{64}, \frac{w}{64}, 2048) $ & CONV-(N2048, K4x4, S2, P1), Leaky ReLU  \\
		\hline
		Output layer ($ f $) & $ (\frac{h}{64}, \frac{w}{64}, 2048) {\rightarrow} (\frac{h}{64}, \frac{w}{64}, 1) $ & CONV-(N1, K3x3, S1, P1) \\
		Output layer ($ \phi $) & $ (\frac{h}{64}, \frac{w}{64}, 2048) {\rightarrow} (1, 1, n_d) $ & CONV-(N$ (n_d) $, K$ \frac{h}{64} $x$ \frac{w}{64} $, S1, P0) \\
		\hline
		\hline
	\end{tabular}
\end{table}	

\newpage
\section{Additional Qualitative Results} \label{sec:more_qualitative_results}

\subsection{Results on CelebA}

\begin{figure*}[h!]
	\centering
	{
		\includegraphics[width=1\linewidth]{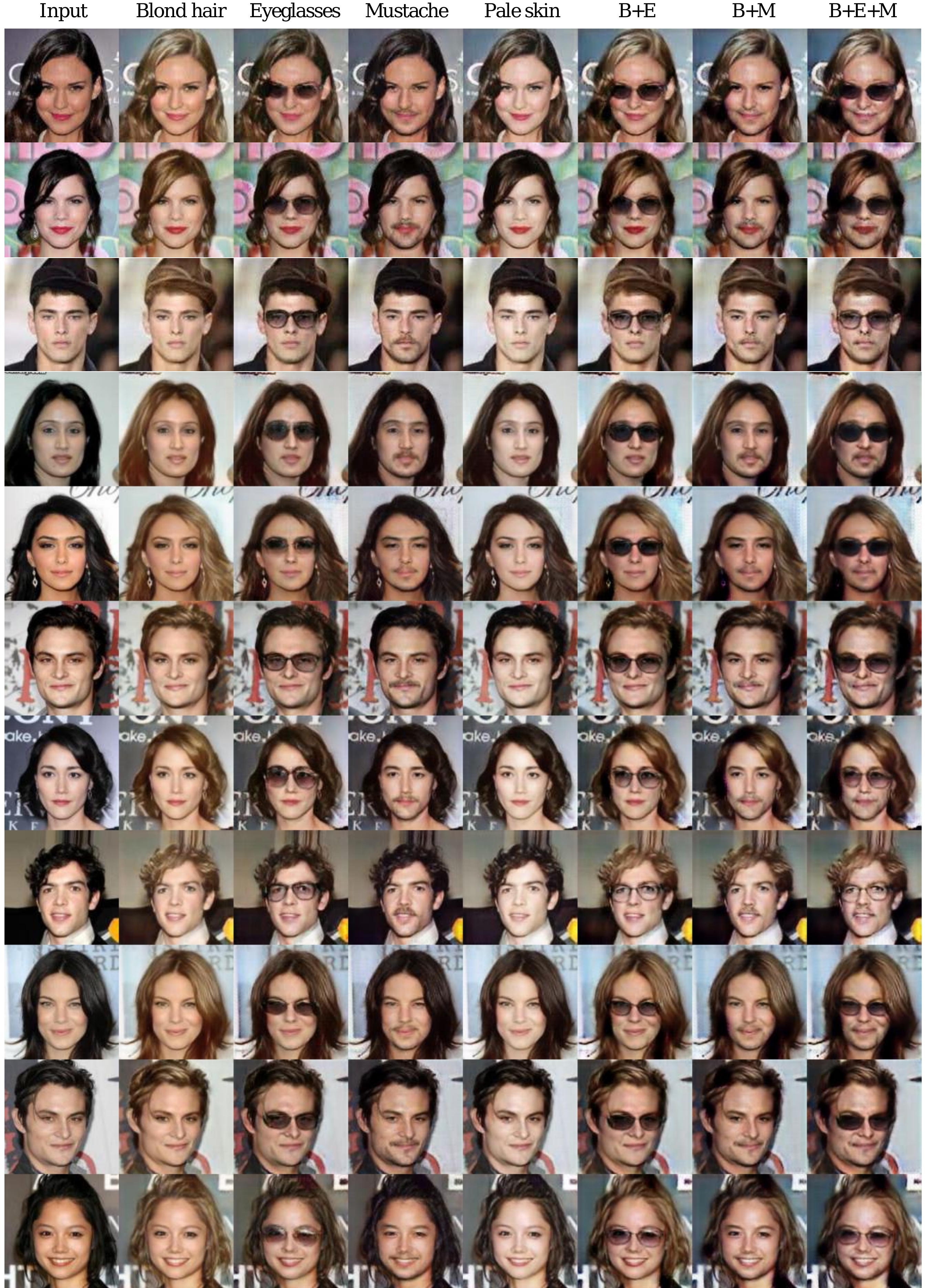}
		\caption{Single and multiple attribute translation results on CelebA.} 
		\label{fig:celeba_add}
	}
\end{figure*}

\newpage
\subsection{Results on Edge$ \rightarrow $CelebA}
\begin{figure*}[h!]
	\centering
	{
		\includegraphics[width=1\linewidth]{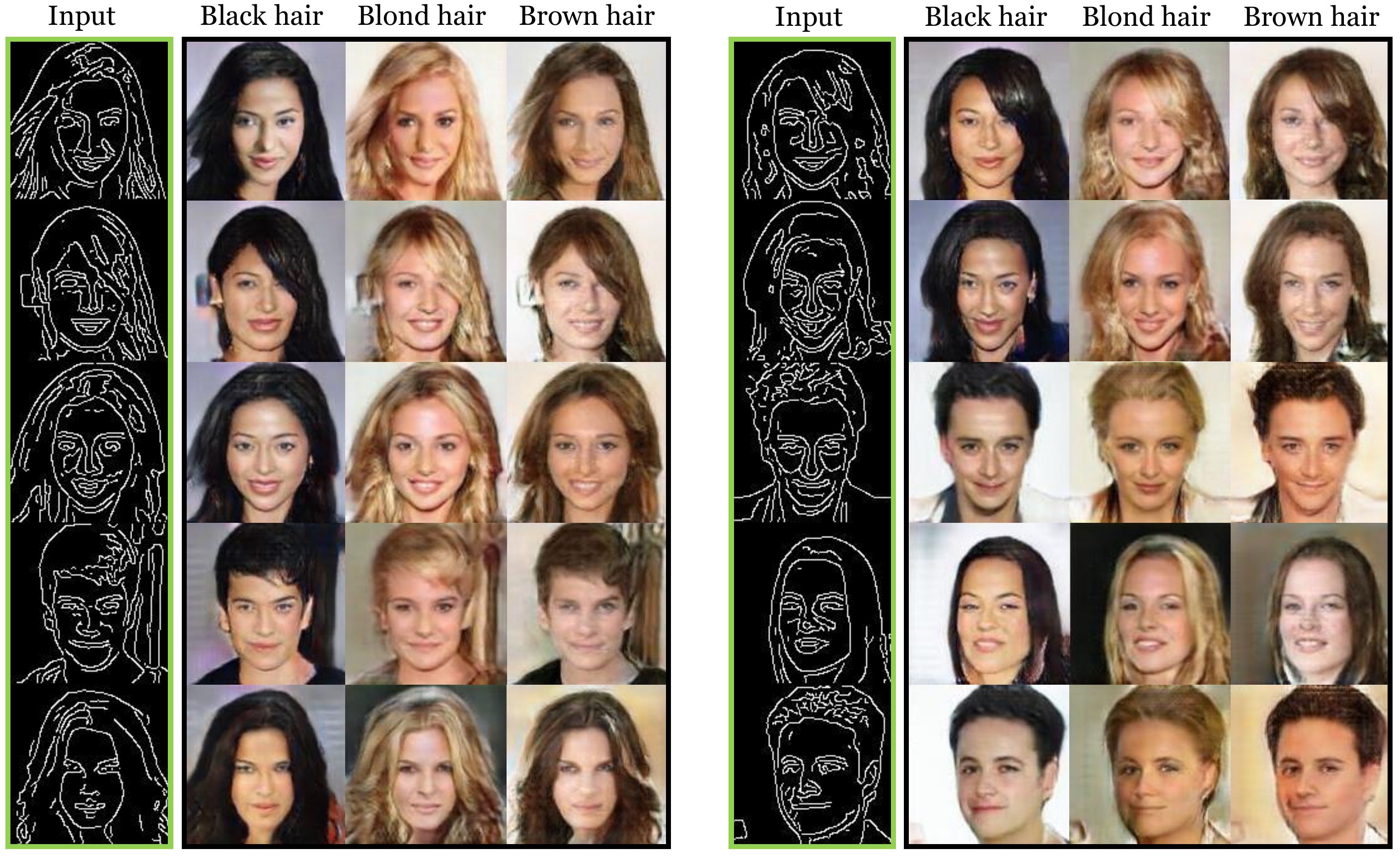}
		\caption{Translation results from edge images to CelebA.}
		\label{fig:edge2celeba_add}
	}
\end{figure*}

\subsection{Results on Painting Translation}
\begin{figure*}[h!]
	\centering
	{
		\includegraphics[width=1\linewidth]{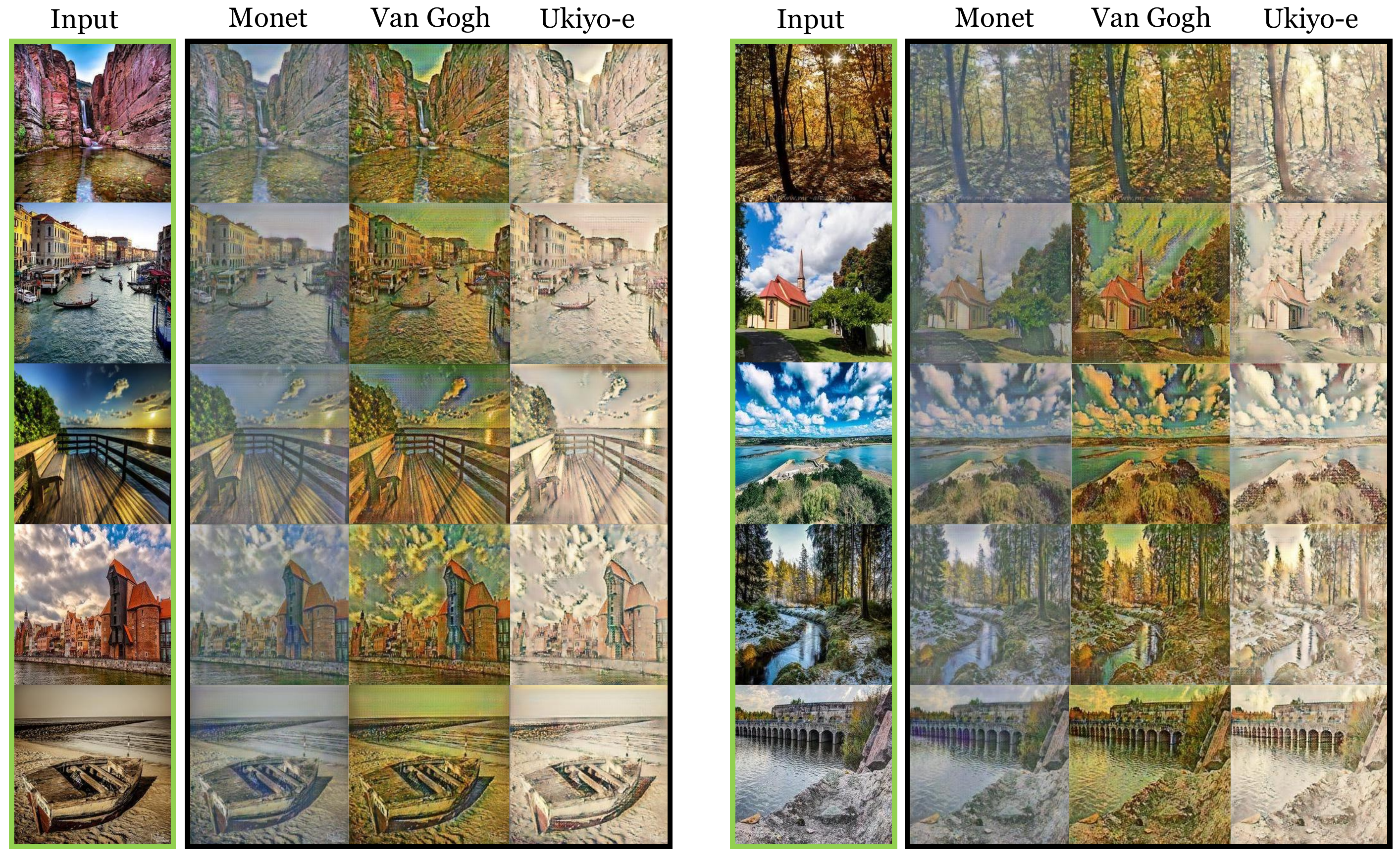}
		\caption{Translation results from real-world images to painting images.}
		\label{fig:photo2art_add}
	}
\end{figure*}

\newpage

\end{document}